\newtheorem{theorem}{Theorem}
\newtheorem{lemma}{Lemma}
\newtheorem{assumption}{Assumption}
\theoremstyle{definition}
\newtheorem{definition}{Definition}[section]
\newenvironment{customthm}[1]
  {\innercustomthm}
  {\endinnercustomthm}
\newcommand{\real}{\mathbb{R}}
\newcommand{\encoder}{W_1}
\newcommand{\decoder}{W_2}
\newcommand{\weightcolumn}{w}
\newcommand{\x}{X}
\newcommand{\y}{Y}
\newcommand{\loss}{\mathcal{L}}
\newcommand{\reconloss}{\frac{1}{n}||\x - \decoder \encoder \x||_F^2}
\newcommand{\indset}{\mathcal{I}}
\newcommand{\transpose}{^{\top}}
\newcommand{\identity}{I}
\newcommand{\decoderbar}{\bar{W}_2}
\newcommand{\encodertilde}{\tilde{W}_1}
\newcommand{\decodertilde}{\tilde{W}_2}
\newcommand{\vecop}[1]{\mathrm{vec}(#1)}
\newcommand{\M}{M}
\newcommand{\trace}{\mathrm{Tr}}
\newcommand{\Rij}{R_{ij}}
\newcommand{\fnorm}[1]{||{#1}||_F}
\newcommand{\sq}{^2}
\newcommand{\A}{A}
\newcommand{\cov}{\mathrm{Cov}}
\newcommand{\Q}{Q}
\newcommand{\Dmatrix}{D}
\newcommand{\avec}{a}
\newcommand{\zerovec}{0}
\newcommand{\sigmadiag}{S}
\newcommand{\hessnonuni}{H_{\Lambda}}
\newcommand{\Zvec}{\begin{bmatrix}Z_1\transpose & Z_2\end{bmatrix}}
\newcommand{\ut}{%
\begin{tikzpicture}[scale=1.2]%
\draw (0,1ex) -- (1ex,1ex);%
\draw (1ex,1ex) -- (1ex,0ex);
\draw (0,1ex) -- (1ex,0ex);%
\end{tikzpicture}%
}
\newcommand{\lt}{%
\begin{tikzpicture}[scale=1.2]%
\draw (0,0) -- (1ex,0);%
\draw (0,0) -- (0,1ex);
\draw (0,1ex) -- (1ex,0ex);%
\end{tikzpicture}%
}
\title{Regularized linear autoencoders recover the principal components, eventually}
\author{%
   Xuchan Bao, \:\:James Lucas, \:\:Sushant Sachdeva, \:\:Roger Grosse \\
   University of Toronto; \:\: Vector Institute\\
   \texttt{\{jennybao,jlucas,sachdeva,rgrosse\}@cs.toronto.edu}\\
}
\begin{document}

\maketitle

\begin{abstract}
Our understanding of learning input-output relationships with neural nets has improved rapidly in recent years, but little is known about the convergence of the underlying representations, even in the simple case of linear autoencoders (LAEs). We show that when trained with proper regularization, LAEs can directly learn the optimal representation -- ordered, axis-aligned principal components. We analyze two such regularization schemes: non-uniform $\ell_2$ regularization and a deterministic variant of nested dropout~\citep{nested-dropout}.
Though both regularization schemes converge to the optimal representation, we show that this convergence is slow due to ill-conditioning that worsens with increasing latent dimension. We show that the inefficiency of learning the optimal representation is not inevitable -- we present a simple modification to the gradient descent update that greatly speeds up convergence empirically.\footnote{The code is available at \url{https://github.com/XuchanBao/linear-ae}}
\end{abstract}

\section{Introduction}
While there has been rapid progress in understanding the learning dynamics of neural networks, most such work focuses on the networks' ability to fit input-output relationships. However, many machine learning problems require learning representations with general utility. For example, the representations of a pre-trained neural network that successfully classifies the ImageNet dataset~\citep{imagenet} may be reused for other tasks. 
It is difficult in general to analyze the dynamics of learning representations, as metrics such as training and validation accuracy reveal little about them. Furthermore, analysis through the Neural Tangent Kernel shows that in some settings, neural networks can learn input-output mappings without finding meaningful representations~\citep{ntk}.

In some special cases, the optimal representations are known, allowing us to analyze representation learning exactly. In this paper, we focus on linear autoencoders (LAE). With specially chosen regularizers or update rules, their optimal weight representations consist of ordered, axis-aligned principal directions of the input data.

It is well known that the unregularized LAE finds solutions in the principal component spanning subspace \citep{baldi1989neural}, but in general, the individual components and corresponding eigenvalues cannot be recovered. This is because any invertible linear transformation and its inverse can be inserted between the encoder and the decoder without changing the loss. \citet{ae-loss-landscape} showed that applying $\ell_2$ regularization on the encoder and decoder reduces the symmetry of the stationary point solutions to the group of orthogonal transformations. The individual principal directions can then be recovered by applying the singular value decomposition (SVD) to the learned decoder weights.

\begin{wrapfigure}{10r}{0.5\textwidth}
    \centering
    \includegraphics[width=\linewidth, trim={3.5cm 1.5cm 2.5cm 2.5cm}, clip]{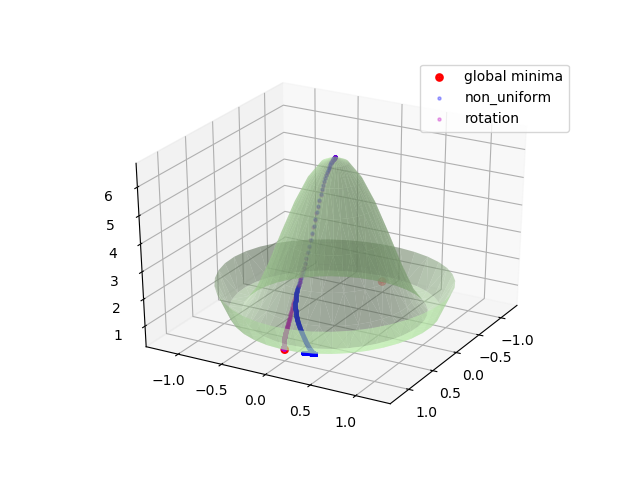}
    \caption{Visualization of the loss surface of an LAE with non-uniform $\ell_2$ regularization, plotted for a 2D subspace that includes a global optimal solution. 
    The narrow valley along the rotation direction causes slow convergence.
    Detailed discussion can be found in Section~\ref{sec:nonuni_slow_convergence}.}
    \label{fig:loss_landscape}
    \vspace{-2em}
\end{wrapfigure}

We
investigate how, with 
appropriate regularization,
gradient-based optimization can further break the symmetry, 
and \emph{directly} learn the individual principal directions.
We analyze two such regularization schemes: \emph{non-uniform} $\ell_2$ regularization and a deterministic variant of nested dropout~\citep{nested-dropout}.

The first regularization scheme we analyze applies \emph{non-uniform} $\ell_2$ regularization on the weights connected to different latent dimensions. We show that at any global minimum, an LAE with non-uniform $\ell_2$ regularization directly recovers the ordered, axis-aligned principal components. We analyze the loss landscape and show that all local minima are global minima.  The second scheme is nested dropout~\citep{nested-dropout}, which is already known to recover the individual principal components in the linear case. 

After establishing two viable models, we ask: how fast can a gradient-based optimizer, such as gradient descent, find the correct representation?  In principle, this ought to be a simple task once the PCA subspace is found, as an SVD on this low dimensional latent space can recover the correct alignment of the principal directions. However, we find that gradient descent applied to either aforementioned regularization scheme converges very slowly to the correct representation, even though the reconstruction error quickly decreases. 
To understand this phenomenon, we analyze the curvature of both objectives at their respective global minima, and show that these objectives cause ill-conditioning that worsens as the latent dimension is increased. Furthermore, we note that this ill-conditioning is nearly invisible in the training or validation loss, analogous to the general difficulty of measuring representation learning for practical nonlinear neural networks. The ill-conditioned loss landscape for non-uniform $\ell_2$ regularization is illustrated in Figure~\ref{fig:loss_landscape}.

While the above results might suggest that gradient-based optimization is ill-suited for efficiently recovering the principal components, we show that this is not the case. 
We propose a simple iterative learning rule that recovers the principal components much faster than the previous methods. The gradient is augmented with a term that explicitly accounts for ``rotation'' of the latent space, and thus achieves a much stronger notion of symmetry breaking than the regularized objectives. 

Our main contributions are as follows. 1) We characterize all stationary points of the non-uniform $\ell_2$ regularized objective, and prove it recovers the optimal representation at global minima (Section~\ref{sec:nonuni_loss_landscape},~\ref{sec:nonuni_identifiability}). 2) Through analysis of Hessian conditioning, we explain the slow convergence of the non-uniform $\ell_2$ regularized LAE to the optimal representation (Section~\ref{sec:nonuni_slow_convergence}). 3) We derive a deterministic variant of nested dropout and explain its slow convergence with similar Hessian conditioning analysis (Section~\ref{sec:nested_dropout}). 4) We propose an update rule that directly accounts for latent space rotation (Section~\ref{sec:rot_gradient}). We prove that the gradient augmentation term globally drives the representation to be axis-aligned, and the update rule has local linear convergence to the optimal representation. We empirically show that this update rule accelerates learning the optimal representation.

\vspace{-0.3cm}
\section{Preliminaries}
\vspace{-0.2cm}
\label{sec:prelim}
We consider linear models consisting of two weight matrices: an encoder $\encoder \in \real^{k \times m}$ and decoder $\decoder \in \real^{m\times k}$ (with $k < m$). The model learns a low-dimensional embedding of the data $\x \in \real^{m\times n}$ (which we assume is zero-centered without loss of generality) by minimizing the objective,
\begin{small}
\begin{equation}\label{eq:loss_ae}
    \mathcal{L}(\encoder, \decoder; \x) = \reconloss
\end{equation}
\end{small}
\vspace{-1em}

We will assume $\sigma_1\sq > \cdots > \sigma_k\sq > 0$ are the $k$ largest eigenvalues of $\frac{1}{n}\x\x\transpose$. The assumption that the $\sigma_1,\dots,\sigma_k$ are positive and distinct ensures identifiability of the principal components, and is common in this setting \citep{ae-loss-landscape}. Let $\sigmadiag = \rm diag (\sigma_1, \dots, \sigma_k)$. The corresponding eigenvectors are the columns of $U\in \real^{m \times k}$. 
Principal Component Analysis (PCA) \citep{pearson1901liii} provides a unique optimal solution to this problem that can be interpreted as the projection of the data along columns of $U$, up to sign changes to the projection directions. 
However, the minima of \eqref{eq:loss_ae} are not unique  in general \citep{ae-loss-landscape}. In fact, the objective is invariant under the transformation $(\encoder, \decoder) \mapsto (A\encoder, \decoder A^{-1})$, for any invertible matrix $A \in \real^{k \times k}$.

\textbf{Regularized linear autoencoders.}~~
\citet{ae-loss-landscape} provide a theoretical analysis of  $\ell_2$-regularized linear autoencoders, where the objective is  as follows,
\begin{equation}
\textstyle
\label{eq:uni_loss_ae}
    \mathcal{L}_\lambda(\encoder, \decoder; \x) = \mathcal{L}(\encoder, \decoder; \x) + \lambda\Vert\encoder\Vert_F^2 + \lambda\Vert\decoder\Vert_F^2.
\end{equation}

\citet{ae-loss-landscape} proved that the set of globally optimal solutions to objective~\ref{eq:uni_loss_ae} exhibit only an orthogonal symmetry through the mapping: $(\encoder, \decoder) \mapsto (O\encoder, \decoder O^\top)$, for orthogonal matrix $O$.

\vspace{-0.5cm}
\section{Related work}
\vspace{-0.2cm}

Previous work has studied the exact recovery of the principal components in settings similar to LAEs. \citet{nested-dropout} show that exact PCA can be recovered with an LAE by applying nested dropout on the hidden units. Nested dropout forces ordered information content in the hidden units. We derive and analyze a deterministic variant of nested dropout in Section~\ref{sec:nested_dropout}. Connections between VAEs and probabilistic PCA (pPCA) have been explored before \citep{dont-blame-elbo, rolinek2019variational, dai2018connections}.
In particular, \citet{dont-blame-elbo} showed that a linear variational autoencoder (VAE)~\citep{kingma2013auto} with diagonal latent covariance trained with the evidence lower bound (ELBO) can learn the axis-aligned pPCA solution~\citep{tipping1999probabilistic}. While this paper focuses on linear autoencoders and the full batch PCA problem, there exists an interesting connection between the proposed non-uniform $\ell_2$ regularization and the approach of~\citet{dont-blame-elbo}, as discussed in Appendix~\ref{appendix:connection_to_vae}. This connection was recently independently pointed out in the work of~\citet{kumar2020implicit}, who analyzed the implicit regularization effect of $\beta$-VAEs~\citep{higgins2017beta}.

\citet{ae-loss-landscape} show that an LAE with uniform $\ell_2$ regularization reduces the symmetry group from $\mathrm{GL}_k(\real)$ to $\mathrm{O}_k(\real)$. They prove that the critical points of the $\ell_2$-regularized LAE are symmetric, and characterize the loss landscape. We adapt their insights to derive the loss landscape of LAEs with non-uniform $\ell_2$ regularization, and to prove identifiability at global optima. Concurrent work~\citep{oftadeheliminating} addresses the identifiability issue in linear autoencoders by proposing a new loss function, that is a special case of deterministic nested dropout (discussed in Section~\ref{sec:nested_dropout}), with a uniform prior distribution. \citet{oftadeheliminating} show that the local minima correspond to ordered, axis-aligned representations. We show this in the general case, and additionally analyze the speed of convergence of this objective.

The rotation augmented gradient (RAG) proposed in Section~\ref{sec:rot_gradient} has connections to several existing algorithms. First, it is closely related to the Generalized Hebbian Algorithm (GHA)~\citep{generalized-hebbian-alg}, which combines  Oja's rule~\citep{oja1982simplified} with the Gram-Schmidt process. The detailed connection is discussed in Section~\ref{sec:rotation_gha}. The GHA can also be used to derive a decentralized algorithm, as proposed in concurrent work \citep{gemp2020eigengame}, which casts PCA as a competitive game.
Also, the RAG update appears to be in a similar form as the gradient of the Brockett cost function~\citep{absil2009optimization} on the Stiefel manifold, as discussed in Appendix~\ref{app:brockett}. However, the RAG update cannot be derived as the gradient of any loss function.
Our proposed RAG update bears resemblance to the gradient masking approach in Spectral Inference Networks (SpIN) \citep{pfau2018spectral}, which aims to learn ordered eigenfunctions of linear operators. The primary motivation of SpIN is to scale to learning eigenfunctions in extremely high-dimensional vector spaces. This is achieved by optimizing the Rayleigh quotient and achieving symmetry breaking through a novel application of the Cholesky decomposition to mask the gradient. This leads to a biased gradient that is corrected through the introduction of a bi-level optimization formulation for learning SpIN. RAG is not designed to learn arbitrary eigenfunctions but is able to achieve symmetry breaking without additional decomposition or bilevel optimization.

In this work, we discuss the weak symmetry breaking of regularized LAEs. \citet{bamler2018improving} address a similar problem for learning representations of time series data, which has weak symmetry in time. Through analysis of the Hessian matrix, they propose a new optimization algorithm -- Goldstone gradient descent (Goldstone-GD) -- that significantly speeds up convergence towards the correct alignment. The Goldstone-GD has interesting connection to the proposed RAG update in Section~\ref{sec:rot_gradient}. RAG is analogous to applying the first order approximation of latent space rotation as an artificial gauge field, simultaneously with the full parameter update. 
We believe this is an exciting direction for future research.

\citet{saxe2018} study the continuous-time learning dynamics of linear autoencoders, and characterize the solutions under strict initialization conditions. \citet{gidel2019implicit} extend this work along several important axes; they characterize the discrete-time dynamics for two-layer linear networks under relaxed (though still restricted) initialization conditions. Both \citet{gidel2019implicit} and \citet{arora2019implicit} also recognized a regularization effect of gradient descent, which encourages minimum norm solutions --- the latter of which provides analysis for depth greater than two. These works provide exciting insight into the capability of gradient-based optimization to learn meaningful representations, even when the loss function does not explicitly require such a representation. However, these works assume the covariance matrices of the input data and the latent code are co-diagonalizable, and do not analyze the dynamics of recovering rotation in the latent space. In contrast, in this work we study how effectively gradient descent is able to recover representations (including rotation in the latent space) in linear auto-encoders that are optimal for a designated objective.

\vspace{-0.5em}
\section{Non-uniform $\ell_2$ weight regularization}
\vspace{-0.2em}
In this section, we analyze linear autoencoders with non-uniform $\ell_2$ regularization where the rows and columns of $\encoder$ and $\decoder$ (respectively) are penalized with different weights. 
Let $0 < \lambda_1 < \cdots < \lambda_k$ be the $\ell_2$ penalty weights, and $\Lambda = \mathrm{diag}(\lambda_1, \dots, \lambda_k)$.
The objective has the following form,
\begin{small}
\begin{align}
\label{eq:nonuni-loss}
\begin{split}
    \loss_{\sigma'}(\encoder, \decoder; \x) = &\reconloss+ ||\Lambda^{1/2} \encoder||_F^2 + ||\decoder \Lambda^{1/2}||_F^2
\end{split}
\end{align}    
\end{small}
\vspace{-1em}

We prove that the objective~\eqref{eq:nonuni-loss} has an ordered, axis-aligned global optimum, which can be learned using gradient based optimization. Intuitively, by penalizing different latent dimensions unequally, we force the LAE to explain higher variance directions with less heavily penalized latent dimensions. 

The rest of this section proceeds as follows. First, we analyze the loss landscape of the objective~\eqref{eq:nonuni-loss} in section~\ref{sec:nonuni_loss_landscape}. Using this analysis, we show in section~\ref{sec:nonuni_identifiability} that the global minimum recovers the ordered, axis-aligned individual principal directions. Moreover, all local minima are global minima. Section~\ref{sec:nonuni_slow_convergence} explains mathematically the slow convergence to the optimal representation, by showing that at global optima, the Hessian of objective~\eqref{eq:nonuni-loss} is ill-conditioned.

\vspace{-0.3em}
\subsection{Loss landscape} 
\label{sec:nonuni_loss_landscape}

The analysis of the loss landscape is reminiscent of~\citet{ae-loss-landscape}. We first prove the Transpose Theorem (Theorem~\ref{th:nonuni_transpose}) for objective~\eqref{eq:nonuni-loss}. 
Then, we prove the Landscape Theorem (Theorem~\ref{th:nonuni_landscape}), which provides the analytical form of all stationary points of~\eqref{eq:nonuni-loss}.
%
\begin{theorem}
\label{th:nonuni_transpose}
(Transpose Theorem)
All stationary points of the objective~\eqref{eq:nonuni-loss} satisfy $W_1 = W_2\transpose$.
\end{theorem}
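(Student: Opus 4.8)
The plan is to write down the first-order stationarity conditions and then combine them algebraically until the relation $\encoder=\decoder\transpose$ falls out. Abbreviate $M := \frac{1}{n}\x\x\transpose$, which is symmetric, and recall that $\Lambda$ is diagonal with $0<\lambda_1<\cdots<\lambda_k$. Differentiating the objective~\eqref{eq:nonuni-loss} and setting each gradient to zero gives the two matrix equations
\begin{align}
\Lambda\,\encoder + \decoder\transpose\decoder\,\encoder M &= \decoder\transpose M, \label{eq:stat1}\\
\decoder\,\Lambda + \decoder\,\encoder M\,\encoder\transpose &= M\,\encoder\transpose, \label{eq:stat2}
\end{align}
which I will call the encoder and decoder conditions. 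The only structure I use here is that $M$ and $\Lambda$ are symmetric.

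The first step is to extract a clean symmetric identity. Right-multiplying the encoder condition~\eqref{eq:stat1} by $\encoder\transpose$ and left-multiplying the decoder condition~\eqref{eq:stat2} by $\decoder\transpose$ both produce the common quartic term $\decoder\transpose\decoder\,\encoder M\,\encoder\transpose$; subtracting the two resulting equations cancels it and leaves $\Lambda\,\encoder\encoder\transpose = \decoder\transpose\decoder\,\Lambda$. Transposing this identity and adding it back shows that the symmetric matrix $E := \encoder\encoder\transpose - \decoder\transpose\decoder$ obeys $\Lambda E + E\Lambda = 0$, whose $(i,j)$ entry reads $(\lambda_i+\lambda_j)E_{ij}=0$; since every $\lambda_i>0$ this forces $E=0$. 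Hence $\encoder\encoder\transpose = \decoder\transpose\decoder$, and because this common Gram matrix commutes with $\Lambda$ — whose diagonal entries are distinct — it must itself be diagonal. Call it $\Dmatrix$. (Note that positivity, not distinctness, gives the equality of Gram matrices, while distinctness is what then forces diagonality.)

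What remains is the crux: upgrading the equality of Gram matrices $\encoder\encoder\transpose = \decoder\transpose\decoder = \Dmatrix$ to the equality $\encoder = \decoder\transpose$ itself. Equal Gram matrices only constrain $\encoder$ and $\decoder\transpose$ to have rows of matching norms, and leave them free to span different $k$-dimensional row spaces; the symmetrized combination of~\eqref{eq:stat1}--\eqref{eq:stat2} used above cannot by itself remove this freedom. The equality must therefore be squeezed out of the \emph{asymmetric} coupling of the two conditions. Substituting $\decoder\transpose\decoder=\Dmatrix$ into~\eqref{eq:stat1} reduces it to $\Lambda\,\encoder + \Dmatrix\,\encoder M = \decoder\transpose M$, while transposing~\eqref{eq:stat2} gives $\Lambda\,\decoder\transpose + \encoder M\,\encoder\transpose\decoder\transpose = \encoder M$; the idea is to back-substitute one into the other to eliminate the $\decoder\transpose M$ and $\encoder M$ cross terms and so pin $\decoder\transpose$ to $\encoder$. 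This is exactly what happens in the scalar ($k=1$) case, where the chain collapses to a single quadratic identity whose only admissible root yields $\decoder\transpose=\encoder$.

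I expect this last elimination to be the main obstacle, for two reasons. First, ruling out the ``different row space'' freedom genuinely requires the full coupled system rather than any of its symmetrized consequences, so the argument is more delicate than the Gram-matrix step and must adapt the uniform-$\Lambda$ proof of~\citet{ae-loss-landscape} to a setting where $\Lambda$ no longer commutes through the products. Second, one must handle rank deficiency carefully: $M$ need not be invertible (its rank lies between $k$ and $m$) and $\Dmatrix$ may be singular when some latent coordinate vanishes, so the back-substitution must avoid inverting $M$ and should treat any zero diagonal entries of $\Dmatrix$ as a separate, degenerate case.
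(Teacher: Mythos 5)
Your first step is correct and cleanly executed: right-multiplying the encoder condition by $\encoder\transpose$, left-multiplying the decoder condition by $\decoder\transpose$, and subtracting does give $\Lambda\,\encoder\encoder\transpose = \decoder\transpose\decoder\,\Lambda$, from which positivity of the $\lambda_i$ yields $\encoder\encoder\transpose = \decoder\transpose\decoder$ and distinctness forces this common Gram matrix to be diagonal. But, as you yourself say, this leaves the crux unproven: equal Gram matrices allow $\encoder = \decoder\transpose Q$ for an orthogonal $Q \in \real^{m\times m}$, and your plan for removing that freedom is only a sketch ("the idea is to back-substitute\dots I expect this last elimination to be the main obstacle"). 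So the proposal as written does not prove the theorem, and the proposed purely algebraic elimination is not shown to close; even in your $k=1$ illustration the spurious branch is killed by a sign argument ($\weightcolumn_1\weightcolumn_2 > 1$ versus $\weightcolumn_1 = -\weightcolumn_2$), which is exactly the ingredient your general plan lacks.

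The paper supplies that missing sign information up front and then needs neither the Gram-matrix identity nor any back-substitution. Lemma~\ref{lemma:c_positive_semi_def} shows that $C = \frac{1}{n}(I-\decoder\encoder)\x\x\transpose$ is (symmetric) positive semi-definite at every stationary point: one multiplies the decoder condition on the right by $\decoder\transpose$ to get $\x\x\transpose(\decoder\encoder)\transpose = \decoder\encoder\x\x\transpose(\decoder\encoder)\transpose + \decoder\Lambda\decoder\transpose \succeq \decoder\encoder\x\x\transpose(\decoder\encoder)\transpose$ and then cancels $(\decoder\encoder)\transpose$ using a standard property of PSD matrices. With $A = \encoder - \decoder\transpose$, the combination $\nabla_{\encoder}\loss_{\sigma'} - (\nabla_{\decoder}\loss_{\sigma'})\transpose = 0$ then reads $ACA\transpose + \Lambda AA\transpose = 0$ (as a quadratic form); since $ACA\transpose \succeq 0$, testing against each coordinate vector $e_i$ gives $\lambda_i\Vert A_i\Vert_2^2 \le 0$, hence $A = 0$. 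If you want to complete your route, the concrete gap to fill is precisely this positivity statement (or an equivalent one): without it, the coupled system~\eqref{eq:stat1}--\eqref{eq:stat2} admits formal algebraic branches analogous to $\weightcolumn_1\weightcolumn_2\sigma^2 = \lambda + \sigma^2$ that cannot be excluded by elimination alone.
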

The proof is similar to that of \citet[Theorem~2.1.]{ae-loss-landscape}, and is deferred to 
Appendix~\ref{app:proofs:transpose}.

Theorem~\ref{th:nonuni_transpose} enables us to proceed with a thorough analysis of the loss landscape of the non-uniform $\ell_2$ regularized LAE model. We fully characterize the stationary points of~\eqref{eq:nonuni-loss} in the following theorem. 

\begin{theorem}[Landscape Theorem]
\label{th:nonuni_landscape}
Assume $\lambda_k < \sigma_k\sq$. All stationary points of~\eqref{eq:nonuni-loss} have the form:
\begin{small}
    \begin{align}
    \label{eq:nonuni_stationary_w1_strong}
        \encoder^* &= P (I - \Lambda \sigmadiag^{-2})^{\frac{1}{2}} U\transpose\\
    \label{eq:nonuni_stationary_w2_strong}
        \decoder^* &= U (I - \Lambda \sigmadiag^{-2})^{\frac{1}{2}} P\transpose
    \end{align}
\end{small}
\vspace{-1.5em}

where $\indset \subset \{1,\cdots, m\}$ is an index set containing the indices of the learned components, and $P \in \real^{k \times k}$ has exactly one $\pm 1$ in each row and each column whose index is in $\indset$ and zeros elsewhere.
\end{theorem}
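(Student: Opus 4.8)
The plan is to adapt the loss-landscape technique of~\citet{ae-loss-landscape} to the non-uniform penalty: first use the Transpose Theorem to collapse the two weight matrices into one, then read off the first-order conditions as matrix equations and manipulate them until the relevant $k\times k$ Gram-type matrices are forced to be diagonal. Concretely, I would invoke Theorem~\ref{th:nonuni_transpose} to write $\encoder = \decoder\transpose =: W\transpose$ and set $\Sigma := \frac{1}{n}\x\x\transpose$; under this substitution the reconstruction term of~\eqref{eq:nonuni-loss} becomes $\trace((\identity - WW\transpose)\Sigma(\identity - WW\transpose))$ and each penalty term equals $\trace(\Lambda W\transpose W)$. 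The two stationarity conditions $\nabla_{\encoder}=0$ and $\nabla_{\decoder}=0$ of the objective~\eqref{eq:nonuni-loss}, evaluated at a point where $\encoder=\decoder\transpose$, then reduce to the pair
$$\Sigma W = WW\transpose \Sigma W + W\Lambda \quad\text{and}\quad \Sigma W = \Sigma WW\transpose W + W\Lambda.$$

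Next I would extract the algebraic heart of the argument. Left-multiplying each equation by $W\transpose$ and writing $M := W\transpose W$ and $N := W\transpose \Sigma W$ (both symmetric), the two equations become $N = MN + M\Lambda$ and $N = NM + M\Lambda$; subtracting gives $MN = NM$, so $M$ and $N$ commute. From the first, $M\Lambda = (\identity - M)N$, and combining this with its transpose $\Lambda M = N(\identity - M)$ together with $MN = NM$ yields $M\Lambda = \Lambda M$. This is exactly where the non-uniformity does the work: since $\Lambda$ has \emph{distinct} diagonal entries, any symmetric matrix commuting with it is diagonal, so $M = W\transpose W$ is diagonal. The diagonal scalar relation $(1 - M_{ii})N_{ii} = M_{ii}\lambda_i$ with $\lambda_i > 0$ rules out $M_{ii}=1$, so the off-diagonal part of $(\identity - M)N = M\Lambda$ forces $N_{ij}=0$ for $i\neq j$, making $N = W\transpose\Sigma W$ diagonal as well.

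With $M$ and $N$ diagonal the columns $w_j$ of $W$ are mutually orthogonal and satisfy $w_i\transpose\Sigma w_j = 0$ for $i\neq j$; substituting $WW\transpose = \sum_i w_iw_i\transpose$ into the first stationarity equation collapses it, column by column, to $\Sigma w_j = (N_{jj} + \lambda_j)\,w_j$. Hence every nonzero column of $W$ is an eigenvector of $\Sigma$, and distinct nonzero columns correspond to distinct eigenvectors by orthogonality. Writing such a column along an eigenvector of eigenvalue $\sigma^2$, the same equation pins its squared norm to $1 - \lambda_j\sigma^{-2}$, which is the diagonal of $(\identity - \Lambda\sigmadiag^{-2})$ and is nonnegative exactly when $\lambda_j \le \sigma^2$; the hypothesis $\lambda_k < \sigma_k\sq$ guarantees this for the full-rank (global) assignment. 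Packaging the signs, the selected eigenvector indices $\indset$, and the assignment of latent coordinates to principal directions into the signed partial-permutation $P$ then recovers the forms~\eqref{eq:nonuni_stationary_w1_strong} and~\eqref{eq:nonuni_stationary_w2_strong}.

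I expect the diagonalization step --- proving $M$ commutes with $\Lambda$ and is therefore diagonal --- to be the crux, since it is what turns the coupled matrix equations into $k$ decoupled scalar eigenvalue problems and is the only place that genuinely exploits $\lambda_1 < \cdots < \lambda_k$. The remaining work is bookkeeping: tracking which eigenvectors are chosen, allowing zero (inactive) columns, and checking that the resulting monomial structure of $W$ is faithfully encoded by $\indset$ and $P$. I would also carefully verify the sign and scale conventions so that the factorization $\decoder^* = U(\identity - \Lambda\sigmadiag^{-2})^{\frac{1}{2}}P\transpose$ lines up with the per-column description, since the correspondence between each latent coordinate's penalty and its assigned principal direction must be threaded consistently through $P$.
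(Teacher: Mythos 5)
Your proposal is correct, and it reaches the conclusion by a genuinely different route than the paper. The paper first proves a \emph{weak} landscape theorem with residual orthogonal symmetry: it changes variables to $Q_1=\encoder U$, $Q_2=U\transpose\decoder$, massages the first-order conditions into $2Q\sigmadiag^2Q=Q^2\sigmadiag^2+\sigmadiag^2Q^2$ for $Q=Q_1\transpose Q_1$, and proves $Q$ diagonal by an induction on dimension; it then breaks the remaining $\mathrm{O}_k$ symmetry by perturbing with a Givens rotation $R_{ij}(\theta)$, showing the regularizer is a sinusoid in $\theta$ whose stationarity at $\theta=0$ forces $\weightcolumn_i\transpose\weightcolumn_j=0$, and finally uses an SVD/commutation argument to force the orthogonal factor to be a signed partial permutation. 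You instead work directly with the two gradient equations at a transpose-symmetric point, premultiply by $W\transpose$ to obtain $N=MN+M\Lambda=NM+M\Lambda$, and extract $M\Lambda=\Lambda M$ --- so the distinctness of $\lambda_1<\cdots<\lambda_k$ is used exactly once, in one line, to force $M=W\transpose W$ diagonal, after which $N=W\transpose\Sigma W$ is diagonal and the \emph{unreduced} stationarity equation decouples into per-column eigenvalue problems $\Sigma w_j=(N_{jj}+\lambda_j)w_j$ that pin each nonzero column's squared norm to $1-\lambda_j\sigma^{-2}$. Your route is shorter, avoids both the induction lemma and the rotation-perturbation computation, and your per-column formula is actually sharper than the paper's factorized statement: for a non-diagonal $P$ the squared norm of column $j$ pairs the latent penalty $\lambda_j$ with the \emph{assigned} eigenvalue, a pairing the expression $U(I-\Lambda\sigmadiag^{-2})^{1/2}P\transpose$ only renders correctly when $P$ is diagonal --- you rightly flag this as a convention to verify, and your derivation resolves it. What the paper's longer route buys is reuse: the weak landscape theorem parallels the uniform-regularization analysis it builds on, and the Givens-rotation calculation is recycled in the proof that global minima are ordered (Theorem~\ref{th:nonuni_global_optima}) and in the Hessian conditioning analysis, so if you adopt your proof you would need to reintroduce that computation there.
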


The full proof is deferred to Appendix~\ref{app:proofs:landscape}. Here we give intuition on this theorem and a proof sketch. 

The uniform regularized objective in \citet{ae-loss-landscape} has orthogonal symmetry that is broken by the non-uniform $\ell_2$ regularization. In Theorem~\ref{th:nonuni_landscape} we prove that the only remaining symmetries are (potentially reduced rank) permutations and reflections of the optimal representation. In fact, we will show in section~\ref{sec:nonuni_identifiability} that at global minima, only reflection remains in the symmetry group.
\vspace{-0.5em}
\begin{proof}[Proof of Theorem~\ref{th:nonuni_landscape} (Sketch)] 

We consider applying a rotation matrix $\Rij$ and its inverse to $\encoder$ and $\decoder$ in the Landscape Theorem in~\citet{ae-loss-landscape}, respectively. $\Rij$ applies a rotation with angle $\theta$ on the plane spanned by the $i^{th}$ and $j^{th}$ latent dimensions. Under this rotation, the objective~\eqref{eq:nonuni-loss} is a cosine function with respect to $\theta$. In order for $\theta=0$ to be a stationary point, the cosine function must have either amplitude 0 or phase $\beta \pi$ ($\beta \in \mathbb{Z})$. 
Finally, we prove that in the potentially reduced rank latent space, the symmetries are reduced to only permutations and reflections. 
\end{proof}

\subsection{Recovery of ordered principal directions at global minima}
\label{sec:nonuni_identifiability}

Following the loss landscape analysis, we prove that the global minima of~\eqref{eq:nonuni-loss} correspond to ordered individual principal directions in the weights. Also, all local minima of~\eqref{eq:nonuni-loss} are global minima.

\begin{theorem}
\label{th:nonuni_global_optima}
Assume $\lambda_k < \sigma_k\sq$. The minimum value of~\eqref{eq:nonuni-loss} is achieved if and only if
$\encoder$ and $\decoder$ are equal to~\eqref{eq:nonuni_stationary_w1_strong} and~\eqref{eq:nonuni_stationary_w2_strong}, with full rank and diagonal $P$.
Moreover, all local minima are global minima.
\end{theorem}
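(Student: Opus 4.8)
The plan is to exploit the complete characterization of stationary points in Theorem~\ref{th:nonuni_landscape} and reduce the problem to a finite optimization over the discrete data $(\indset, P)$. Since every local minimum is a stationary point, I would substitute the stationary forms~\eqref{eq:nonuni_stationary_w1_strong}--\eqref{eq:nonuni_stationary_w2_strong} into the objective~\eqref{eq:nonuni-loss} and derive a closed-form expression for the loss value at each stationary point, as a function of which principal directions are selected (the set $\indset$) and how the active latent dimensions are paired, with signs, to them (the partial signed permutation $P$). The minimum over all weights is then the minimum of this finite function.

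First I would compute that value. Using $\encoder=\decoder\transpose$ (Theorem~\ref{th:nonuni_transpose}) and the orthonormality of the eigenvectors in $U$, the reconstruction map $\decoder\encoder$ reduces to a projection weighted by $I-\Lambda\sigmadiag^{-2}$ on the selected directions, so that the reconstruction term and the two regularization terms each decompose as sums over the active latent dimensions. Writing $p(i)$ for the principal direction paired with latent dimension $i$, and $\sigma_{p(i)}$ for the corresponding singular value, I expect the cross terms to combine into a perfect square, giving
\[
\loss \;=\; \tfrac{1}{n}\fnorm{\x}\sq \;-\; \sum_{i\ \mathrm{active}}\Bigl(\sigma_{p(i)}-\tfrac{\lambda_i}{\sigma_{p(i)}}\Bigr)\sq .
\]
Minimizing $\loss$ is thus equivalent to maximizing $\sum_i g(\lambda_i,\sigma_{p(i)})$ with $g(\lambda,\sigma)=(\sigma-\lambda/\sigma)\sq$. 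Since $\lambda_k<\sigma_k\sq$ forces $g>0$ for every feasible pairing, activating all $k$ latent dimensions (each contributing a strictly positive term) and selecting the $k$ largest variances is optimal, so $P$ must be full rank; monotonicity of $g$ in $\sigma\sq$ gives the top-$k$ selection. Finally, a two-element rearrangement shows the sum is maximized exactly when the smallest penalties pair with the largest variances: swapping a misaligned pair $(\lambda_a,\sigma_x),(\lambda_b,\sigma_y)$ changes the sum by $(\lambda_a\sq-\lambda_b\sq)(\sigma_x^{-2}-\sigma_y^{-2})$, which is positive precisely in the ordered alignment. Hence the minimum is attained if and only if $P$ is full rank and diagonal, with the only residual freedom being the diagonal signs.

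For the claim that every local minimum is global, I would show that any stationary point with non-full-rank or non-diagonal $P$ is a saddle. The tool is the planar-rotation family $\Rij$ from the proof of Theorem~\ref{th:nonuni_landscape}: along $\Rij$ the objective is a cosine in the angle $\theta$, stationary at $\theta=0$. If the pairing is misaligned, some swap (reached at $\theta=\pi/2$) strictly lowers $\loss$ by the rearrangement computation; since the two ends of a cosine straddle its stationary point, $\theta=0$ is then a local \emph{maximum} in that direction, exhibiting strictly negative curvature. A companion perturbation that rotates an unexplained top-$k$ direction into a dead latent coordinate dispatches the non-full-rank case. Thus no non-global stationary point is a local minimum, which with the first part proves the assertion.

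The step I expect to be hardest is the second-order analysis behind ``all local minima are global,'' especially the non-full-rank case: activating a dead latent dimension is not a rotation among the \emph{active} coordinates, so the clean cosine reduction must be set up in the enlarged latent space that includes the zeroed coordinate, and one must check the perturbation stays feasible (where $I-\Lambda\sigmadiag^{-2}$ remains real). A secondary nuisance is carefully tracking the signed-permutation bookkeeping of $P$ together with the role of $\lambda_k<\sigma_k\sq$, which is exactly what guarantees that the full-rank diagonal stationary point exists and is real-valued.
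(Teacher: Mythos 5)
Your first claim --- that the global minimizers can be found by evaluating the loss in closed form at every stationary point and solving the resulting finite assignment problem --- is correct, and it is a genuinely different route from the paper's. Your value $\loss=\tfrac{1}{n}\fnorm{\x}\sq-\sum_{i}\bigl(\sigma_{p(i)}-\lambda_i/\sigma_{p(i)}\bigr)\sq$ checks out (each active latent dimension contributes $2\lambda_i-\lambda_i\sq/\sigma_{p(i)}\sq$ and each unselected direction contributes its full variance), positivity of every feasible term under $\lambda_k<\sigma_k\sq$ forces full rank, monotonicity in $\sigma\sq$ forces the top-$k$ selection, and the exchange quantity $(\lambda_a\sq-\lambda_b\sq)(\sigma_x^{-2}-\sigma_y^{-2})$ forces the ordered pairing. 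The paper never compares loss values across pairings: it proves necessity of full-rank diagonal $P$ by second-order arguments (Lemmas~\ref{lemma:global_optimum_necessary} and~\ref{lemma:global_optimum_necessary2}) and then checks that all surviving candidates share one loss value (Lemma~\ref{lemma:global_minima_sufficient}). Your route gives a more transparent identification of the minimum; the paper's folds part of the ``local implies global'' claim into the same machinery. One caution: your formula uses the scalings $1-\lambda_i\sigma_{p(i)}^{-2}$ (penalty of the latent coordinate paired with the variance of the direction it holds), which is what the first-order conditions actually give; a literal substitution of~\eqref{eq:nonuni_stationary_w1_strong} would produce the permuted matched values instead, so you must derive the scalings yourself rather than read them off the theorem statement.

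The genuine gap is in your saddle argument for misaligned full-rank stationary points. Along the Givens path the reconstruction term is invariant and the regularizer is a pure cosine, so the inference ``value at $\theta=\pi/2$ is lower $\Rightarrow$ negative curvature at $\theta=0$'' is valid; but the premise is not supplied by the rearrangement computation. The rotation transports the columns \emph{with their current scalings}, so the loss at $\theta=\pi/2$ differs from that at $\theta=0$ by $2(\lambda_i-\lambda_j)(\weightcolumn_j\transpose\weightcolumn_j-\weightcolumn_i\transpose\weightcolumn_i)$; this is not the loss of the swapped stationary point, whose scalings have re-equilibrated. The sign is therefore governed by the ordering of the column norms $1-\lambda_i/\sigma_{p(i)}\sq$, not by your exchange quantity, and the two can disagree: with $k=2$, $\sigma_1\sq=2$, $\sigma_2\sq=1$, $\lambda_1=0.1$, $\lambda_2=0.5$ and the swapped pairing, one checks the point is stationary, its column norms are $0.9>0.75$ (correctly descending), and the unique Givens direction has strictly \emph{positive} curvature --- yet the point is not a global minimum, since the exchange quantity equals $0.12>0$. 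At such points your certificate produces no descent direction, so the ``all local minima are global'' claim is not established; escaping them requires a perturbation that couples rotation with rescaling, or the paper's combination of orthogonality, descending norms, and the rigidity argument of Lemma~\ref{lemma:nonuni_landscape_diagonal_indset} applied to the exact stationary form. Your treatment of the rank-deficient case follows the same idea as Lemma~\ref{lemma:global_optimum_necessary} and is fine as a sketch, including your concern about keeping the activated direction feasible.
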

$P$ being full rank and diagonal corresponds to the columns of $\decoder$ (and rows of $\encoder$) being ordered, axis-aligned principal directions. The full proof is shown in Appendix~\ref{app:proofs:global_minima}. Below is a sketch.
\vspace{-1em}
\begin{proof}[Proof (Sketch)]
Extending the proof for Theorem~\ref{th:nonuni_landscape}, in order for $\theta = 0$ to be a local minimum, we first show that $P$ must be full rank.
Then, we show that the rows of $\encoder^*$ (and columns of $\decoder^*$) must be sorted in strictly descending order of magnitude, hence $P$ must be diagonal. It is then straightforward to show that the global optima are achieved if and only if $P$ is diagonal and full rank, and they correspond to ordered $k$ principal directions in the rows of the encoder (and columns of the decoder). Finally, we show that there does not exist a local minimum that is not global minimum.
\end{proof}

\vspace{-0.5em}
\subsection{Slow convergence to global minima}
\label{sec:nonuni_slow_convergence}
Theorem~\ref{th:nonuni_global_optima} ensures that a (perturbed) gradient based optimizer that efficiently escapes saddle points will eventually converge to a global optimum~\citep{pmlr-v40-Ge15, pmlr-v70-jin17a}. However, we show in this section that this convergence is slow, due to ill-conditioning at global optima.

To gain better intuition about the loss landscape, consider Figure~\ref{fig:loss_landscape}. The loss is plotted for a 2D subspace that includes a globally optimal solution of $\encoder$ and $\decoder$. More precisely, we use the parameterization $\encoder = \alpha O (I - \Lambda \sigmadiag^{-2})^{\frac{1}{2}} U\transpose$, and $\decoder = \encoder\transpose$, where $\alpha$ is a scalar, and $O$ is a $2 \times 2$ rotation matrix parameterized by angle $\theta$. The $xy$\nobreakdash-coordinate is obtained by $(\alpha \cos \theta, \alpha \sin \theta)$.

In general, narrow valleys in the loss landscape cause slow convergence. In the figure, we optimize $\encoder$ and $\decoder$ on this 2D subspace. We observe two distinct stages of the learning dynamics. The first stage is fast convergence to the correct subspace -- the approximately circular ``ring" of radius 1 with low loss. The fast convergence results from the steep slope along the radial direction. After converging to the subspace, there comes the very slow second stage of finding the optimal rotation angle --- by moving through the narrow nearly-circular valley. This means that the symmetry breaking caused by the non-uniform $\ell_2$ regularization is a weak one.

We now formalize this intuition for general dimensions. 
The slow convergence to axis-aligned solutions is confirmed experimentally in full linear autoencoders in Section~\ref{sec:experiments}.
\subsubsection{Explaining slow convergence of the rotation}

Denote the Hessian of objective~(\ref{eq:nonuni-loss}) by $H$, and the largest and smallest eigenvalues of $H$ by $s_{\max}$ and $s_{\min}$ respectively. At a local minimum, the condition number $s_{\max}(H) / s_{\min}(H)$ determines the local convergence rate of gradient descent. Intuitively, the condition number characterizes the existence of narrow valleys in the loss landscape. Thus, we analyze the conditioning of the Hessian to better understand the slow convergence under non-uniform regularization.

In order to demonstrate ill-conditioning, we will lower bound the condition number through a lower bound on the largest eigenvalue, and an upper bound on the smallest. 
This is achieved by finding two vectors and computing the Rayleigh quotient, $f_H(v) = v^\top H v / v^\top v$ for each of them. Any Rayleigh quotient value is an upper (lower) bound on the smallest (largest) eigenvalue of $H$.

Looking back to Figure~\ref{fig:loss_landscape}, we notice that the high-curvature direction is radial and corresponds to rescaling of the learned components while the low-curvature direction corresponds to rotation of the component axes. We compute the above Rayleigh quotient along these directions and combine to lower bound the overall condition number. The detailed derivation can be found in Appendix~\ref{app:hessian_curvature}. Ultimately, we show that the condition number can be lower bounded by,
\begin{small}
\[\frac{2 (k-1) (\sigma_1\sq - \sigma_k\sq) \sum_{i=2}^{k-1} (\sigma_i\sq - \sigma_k\sq)}{\sigma_1\sq \sigma_k\sq}.\]
\end{small}
\vspace{-1em}

Depending on the distribution of the $\sigma$ values, as $k$ grows, the condition number quickly worsens.
\footnote{Note that the lower bound is derived assuming the $\lambda$ values are optimally chosen, when $\sigma$ values are known. In practice, this is generally infeasible because 1) we do not have access to the $\sigma$ values, and 2) the $\lambda$ values that minimize the Hessian condition number at global minima may slow down the earlier phase of training, when the weights are far from the global optima, as shown experimentally in Appendix~\ref{app:optimal_lambda}. The difficulty of choosing an optimal set of $\lambda$ values contributes to the weakness of symmetry breaking by the non-uniform $\ell_2$ regularization.} This effect is observed empirically in Figure~\ref{fig:mnist_vary_hidden_size}.
\section{Deterministic nested dropout}
\label{sec:nested_dropout}
The second regularization scheme we study is a deterministic variant of nested dropout~\citep{nested-dropout}. 
Nested dropout is a stochastic algorithm for learning ordered representations in neural networks. In an LAE with $k$ hidden units, a prior distribution $p_B(\cdot)$ is assigned over the indices $1, \dots, k$. When applying nested dropout, first an index $b \sim p_B(\cdot)$ is sampled, then all hidden units with indices $b+1, \dots, k$ are dropped. By imposing this dependency in the hidden unit dropout mask, nested dropout enforces an ordering of importance in the hidden units. \citet{nested-dropout} proved that the global optimum of the nested dropout algorithm corresponds to the ordered, axis-aligned representation.

We propose a deterministic variant to the original nested dropout algorithm on LAEs, by replacing the stochastic loss function with its expectation. Taking the expectation eliminates the variance caused by stochasticity (which prevents the original nested dropout algorithm from converging to the exact PCA subspace), thereby making it directly comparable with other symmetry breaking techniques. Define $\pi_b$ as the operation setting hidden units with indices $b+1, \dots, k$ to zero. We define the loss here, and derive the analytical form in Appendix~\ref{app:deterministic_nd}.
\begin{small}
\begin{align}
\label{eq:nd_expected_form}
\mathcal{L}_{\rm ND}(\encoder,\decoder; \x) = \mathbb{E}_{b\sim p_B(\cdot)} \big[\frac{1}{2n} \fnorm{\x - \decoder \pi_b (\encoder \x)}\sq \big] 
\end{align}
\end{small}
\vspace{-1em}

To find out how fast objective~\eqref{eq:nd_expected_form} is optimized with gradient-based optimizer, we adopt similar techniques as in Section~\ref{sec:nonuni_slow_convergence} to analyze the condition number of the Hessian at the global optima. Derivation details are shown in Appendix~\ref{app:nd_hessian_curvature}. The condition number is lower bounded by $\frac{8 \sigma_1\sq (k-1)\sq}{\sigma_1\sq - \sigma_k\sq}$.

Note that the lower bound assumes that the prior distribution $p_B(\cdot)$ is picked optimally with knowledge of $\sigma_1,\dots,\sigma_k$. However, in practice we do not have access to $\sigma_1,\dots,\sigma_k$ a priori, so the lower bound is loose. Nevertheless, the condition number grows at least quadratically in the latent dimension $k$. While the deterministic nested dropout might find the optimal representation efficiently in low dimensions, it fails to do so when $k$ is large. We confirm this observation empirically in Section~\ref{sec:experiments}.

\section{Rotation augmented gradient for stronger symmetry breaking}
\label{sec:rot_gradient}

The above analysis of regularized objectives may suggest that learning the correct representation in LAEs is inherently difficult for gradient-descent-like update rules. We now show that this is not the case by exhibiting a simple modification to the update rule which recovers the rotation efficiently. 
In particular, since learning the rotation of the latent space tends to be slow for non-uniform $\ell_2$ regularized LAE, we propose the rotation augmented gradient (RAG), which explicitly accounts for rotation in the latent space, as an alternative and more efficient method of symmetry breaking.

The RAG update is shown in Algorithm~\ref{algo:rotation_gradient}. Intuitively, RAG applies a simultaneous rotation on $\encoder$ and $\decoder$, aside from the usual gradient descent update of objective~\eqref{eq:loss_ae}. To see this, notice that $\A_t$ is skew-symmetric, so its matrix exponential is a rotation matrix. 
By Taylor expansion, we can see that RAG applies a first-order Taylor approximation of a rotation on $\encoder$ and $\decoder$.
\begin{small}
\begin{align*}
    \exp (\frac{\alpha}{n}\A_t) &= \identity + \sum_{i=1}^{\infty} \frac{\alpha^i}{i! n^i} A_t^i
\end{align*}
\end{small}
\vspace{-1em}

The rest of this section aims to provide additional insight into RAG. Section~\ref{sec:rotation_gha} makes the connection to the Generalized Hebbian Algorithm (GHA)~\citep{generalized-hebbian-alg}, a multi-dimensional variant of Oja's rule with global convergence. Section~\ref{sec:subspace_learning_invariance} points out an important property that greatly contributes to the stability of the algorithm: the rotation term in RAG conserves the reconstruction loss. Using this insight, Section~\ref{sec:rotation-global-convergence} shows that the rotation term globally drives the solution to be axis-aligned. Finally, section~\ref{sec:local_linear_convergence} proves that RAG has local linear convergence to global minima.

\begin{algorithm}[t]
    \caption{Rotation augmented gradient (RAG)}\label{algo:rotation_gradient}
    \begin{algorithmic}
        \STATE Given learning rate $\alpha_1$\\
        \STATE Initialize $(\encoder)_0$, $(\decoder)_0$\\
        \FOR {$t = 0 \dots T-1$}
        \STATE $\nabla{(\encoder)_t} = \nabla_{\encoder} \loss((\encoder)_t, (\decoder)_t)$
        \STATE $\nabla{(\decoder)_t} = \nabla_{\decoder} \loss((\encoder)_t, (\decoder)_t)$\\
        \vspace{0.5em}
        \STATE $\y_t = (\encoder)_t \x$\\
        \STATE $\A_t = \frac{1}{2}(\ut(\y_t\y_t\transpose) - \lt(\y_t\y_t\transpose))$
        \STATE \small\emph{($\ut$ (or $\lt$) masks the lower (or upper) triangular part of a matrix (excluding the diagonal) with 0.)}
        \vspace{0.5em}
        \STATE $(\encoder)_{t+1} \gets (\identity + \frac{\alpha}{n} \A_t) (\encoder)_t - \alpha \nabla{(\encoder)_t}$\\
        \STATE $(\decoder)_{t+1} \gets (\decoder)_t (\identity - \frac{\alpha}{n} \A_t) - \alpha \nabla{(\decoder)_t}$\\
        \ENDFOR
    \end{algorithmic}
\end{algorithm}

\subsection{Connection to the Generalized Hebbian Algorithm}
\label{sec:rotation_gha}
RAG is closely related to the Generalized Hebbian Algorithm (GHA)~\citep{generalized-hebbian-alg}. To see the connection, we assume $\encoder=\decoder\transpose = W$.\footnote{$\encoder = \decoder\transpose$ is required by the GHA. For RAG, this can be achieved by using balanced initialization $(\encoder)_0 = (\decoder)_0\transpose$, as RAG stays balanced if initialized so.}
For convenience, we drop the index $t$ in Algorithm~\ref{algo:rotation_gradient}. 
As in Algorithm~\ref{algo:rotation_gradient}, $\lt$ denotes the operation that masks the upper triangular part of a matrix (excluding the diagonal) with 0. 
With simple algebraic manipulation, the GHA and the RAG updates are compared below.
\begin{small}
\begin{align*}
    \mathrm{\mathbf{GHA:}}~~&W \leftarrow W + \frac{\alpha}{n} (\y\x\transpose - \lt(\y\y\transpose)W)\\
    \begin{split}
        \mathrm{\mathbf{RAG:}}~~&W \leftarrow W + \frac{\alpha}{n} [(\y\x\transpose - \lt(\y\y\transpose)W)
        - \frac{1}{2} (\y\y\transpose - \mathrm{diag}(\y\y\transpose))W]
    \end{split}
\end{align*}
\end{small}
\vspace{-1em}

Compared to the GHA update, RAG has an additional term which, intuitively, decays certain notion of ``correlation" between the columns in $W$ to zero. This additional term is important. As we will see in Section~\ref{sec:subspace_learning_invariance}, the ``non-reconstruction gradient term'' of RAG conserves the reconstruction loss. This is a property that contributes to the training stability and that the GHA does not possess.

\subsection{Rotation augmentation term conserves the reconstruction loss}
\label{sec:subspace_learning_invariance}
An important property of RAG is that the addition of the rotation augmentation term conserves the reconstruction loss. To see this, we compare the instantaneous update for RAG and plain gradient descent on the unregularized objective~\eqref{eq:loss_ae}.

We drop the index $t$ when analyzing the instantaneous update. We use superscripts $RAG$ and $GD$ to denote the instantaneous update following RAG and gradient descent on~\eqref{eq:loss_ae} respectively. We have,
\begin{align*}
&\dot{W}_1^{RAG} = \dot{W}_1^{GD} + \frac{1}{n} A W_1,~~~\dot{W}_2^{RAG} = \dot{W}_2^{GD} - \frac{1}{n} W_2 A.
\end{align*}
Therefore, the rotation term conserves the reconstruction loss:
\begin{align*}
\frac{d}{dt} (\decoder \encoder)^{RAG} = \dot{W}_2^{RAG} \encoder + \decoder \dot{W}_1^{RAG}
&= \dot{W}_2^{GD} \encoder + \decoder \dot{W}_1^{GD} = \frac{d}{dt} (\decoder \encoder)^{GD}\\
\frac{d}{dt} \mathcal{L}(\encoder, \decoder)^{RAG} &= \frac{d}{dt} \mathcal{L}(\encoder, \decoder)^{GD}.
\end{align*}
\vspace{-1em}

This means that in RAG, learning the rotation is separated from learning the PCA subspace. The former is achieved with the rotation term, and the latter with the reconstruction gradient term. This is a desired property that contributes to the training stability. 

\subsection{Convergence of latent space rotation to axis-aligned solutions}
\label{sec:rotation-global-convergence}
The insight in Section~\ref{sec:subspace_learning_invariance} enables us to consider the subspace convergence and the rotation separately. We now prove that on the orthogonal subspace, the rotation term drives the weights to be axis-aligned. For better readability, we state this result below as an intuitive, informal theorem. The formal theorem and its proof are presented in Appendix~\ref{app:proofs:rotation_global_convergence}. 

\begin{theorem}[(Informal) Global convergence to axis-aligned solutions]
Initialized on the orthogonal subspace $\encoder=\decoder\transpose = OU\transpose$, the instantaneous limit of RAG globally converges to the set of axis-aligned solutions, and the set of ordered, axis-aligned solutions is asymptotically stable.
\end{theorem}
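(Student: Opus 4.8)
The plan is to restrict RAG to the invariant orthogonal subspace, reduce the instantaneous (continuous-time) flow there to an isospectral, Toda-type matrix ODE, and then analyze that ODE: a Lyapunov/LaSalle argument gives global convergence to the axis-aligned set, and a linearization gives asymptotic stability of the sorted solution.

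\textbf{Reduction to the rotation dynamics.} By the balancedness property, I take $\encoder=\decoder\transpose=W$ throughout and write $W=O\,U\transpose$ with $O\in\real^{k\times k}$, $OO\transpose=\identity$. The reconstruction-gradient part of the instantaneous update satisfies $\dot W^{GD}\propto W(\identity-W\transpose W)\x\x\transpose$; since $W\transpose W=UU\transpose$ and $U\transpose(\identity-UU\transpose)=0$, this term vanishes identically on the subspace, so only the rotation term survives: $\dot W=\tfrac1n \A W$. Using $\tfrac1n U\transpose \x\x\transpose U=\sigmadiag^2$ gives $\tfrac1n\y\y\transpose=O\sigmadiag^2 O\transpose=:M$, and since $\ut,\lt$ are linear, $\tfrac1n\A=B(M)$ where $B(M):=\tfrac12\big(\ut(M)-\lt(M)\big)$ is skew-symmetric. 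Hence $\dot O=B(M)O$; because $B(M)$ is skew, $\tfrac{d}{dt}(OO\transpose)=0$ and the row space stays $\mathrm{col}(U)$, so the orthogonal subspace is invariant (consistent with Section~\ref{sec:subspace_learning_invariance}).

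\textbf{Isospectral structure.} Differentiating $M=O\sigmadiag^2O\transpose$ and substituting $\dot O=B(M)O$ yields the Lax equation
\[
\dot M = [B(M),\,M].
\]
Thus the spectrum of $M$ is conserved (equal to $\{\sigma_i^2\}$), $M$ remains in a compact isospectral orbit, and $M$ is an equilibrium iff $B(M)=0$ iff $M$ is diagonal. Diagonal $M$ (equivalently $O$ a signed permutation) are exactly the axis-aligned solutions, so the claim becomes a statement about convergence of this Toda-type flow to diagonal form.

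\textbf{Global convergence and stability.} I would use the Lyapunov function $V(M)=\trace(DM)$ with $D=\mathrm{diag}(c_1<\cdots<c_k)$. A direct computation gives
\[
\dot V = \tfrac12\sum_{i\neq j}\mathrm{sign}(j-i)\,(c_i-c_j)\,M_{ij}^2 \;\le\; 0,
\]
with equality iff $M$ is diagonal. Since $V$ is bounded on the compact orbit, LaSalle's invariance principle forces trajectories into the largest invariant subset of $\{\dot V=0\}$, which is precisely the set of diagonal matrices (all equilibria); this is the claimed global convergence to axis-aligned solutions. For asymptotic stability of the \emph{ordered} solution, I linearize $\dot M=[B(M),M]$ at $M=\sigmadiag^2$ (sorted descending): the off-diagonal modes decouple to first order, $\dot M_{ij}\approx\tfrac12\mathrm{sign}(j-i)(\sigma_j^2-\sigma_i^2)M_{ij}$, whose rates are all strictly negative by the ordering $\sigma_1^2>\cdots>\sigma_k^2$, giving asymptotic stability on the isospectral manifold; any other permutation produces at least one positive rate and is unstable, so the sorted solution is the unique attractor. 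The main obstacle is the conceptual step of recognizing the reduced flow as an isospectral Toda-type bracket flow and choosing the monotone-weight trace functional as a Lyapunov function (together with invoking LaSalle for convergence to a \emph{set} rather than a point); the subspace reduction and the mode-by-mode linearization are comparatively routine.
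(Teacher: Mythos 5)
Your argument is correct, and the core of it — restricting to the invariant orthogonal subspace and running LaSalle's invariance principle on a monotone-weighted trace functional — is exactly the paper's strategy: your $V(M)=\trace(DM)$ with increasing weights is, up to sign and an additive constant, the paper's $V(W)=\trace((\sigmadiag^2 - W\Sigma W\transpose)D)$ with decreasing weights (which the paper identifies as the Brockett cost function with an offset), and your $\dot V\le 0$ computation with equality iff $M$ is diagonal matches the paper's row-by-row computation $\dot V=\sum_{i>j}(d_i-d_j)(w_i\transpose\Sigma w_j)\sq$. The two places you genuinely diverge are worth noting. First, you package the reduced dynamics as the Lax/double-bracket equation $\dot M=[B(M),M]$ on $M=\tfrac1n\y\y\transpose$; the paper never writes this isospectral form (it works with the rows of $W$ directly), though it gestures at the same structure in its Brockett-function appendix. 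This framing is a clean way to see that equilibria are exactly diagonal $M$ and that the orbit is compact. Second, for asymptotic stability of the ordered solution the paper stays within LaSalle (showing $V$ is locally positive definite at $\tilde I U\transpose$ and that $\tilde I U\transpose$ is isolated in $\{\dot V=0\}$), whereas you linearize the bracket flow and check that every off-diagonal mode decays at rate $\tfrac12|\sigma_i\sq-\sigma_j\sq|$. Your route gives strictly more: instability of every mis-ordered permutation (so the sorted solution is the unique local attractor) and a local exponential rate governed by the eigenvalue gaps, consistent with Theorem~\ref{th:local_linear_convergence}; the only care required is the one you already take, namely that the linearization is restricted to the tangent space of the isospectral orbit (the off-diagonal directions), since the diagonal directions are neutral but transverse to the constraint manifold.
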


\vspace{-0.3em}
\subsection{Local linear convergence to the optimal representation}
\vspace{-0.3em}
\label{sec:local_linear_convergence}
We show that RAG has local linear convergence to the ordered, axis-aligned solution. We show this in the limit of instantaneous update, and make the following assumptions.

\begin{wrapfigure}{r}{0.57\textwidth}
\vspace{-1.2em}
\fbox{\begin{minipage}{0.55\textwidth}

\begin{assumption}\label{assumption:subspace_converged}
The PCA subspace is recovered, i.e. the gradient due to reconstruction loss is 0.
\end{assumption}

\begin{assumption}\label{assumption:yyt_near_diagonal}
$\y\y\transpose$ is diagonally dominant with factor $0 < \epsilon \ll 1$, i.e.
    $
       \sum_{j\neq i} |(\y\y\transpose)_{ij}| <  \epsilon \cdot (\y\y\transpose)_{ii}
    $.
\end{assumption}

\begin{assumption}\label{assumption:ordered_diagonal}
The diagonal elements of $\y\y\transpose$ are positive and sorted in strict descending order, i.e. $ \forall~i < j$, $(\y\y\transpose)_{ii} > (\y\y\transpose)_{jj} > 0$.
\end{assumption}
\end{minipage}}
\vspace{-1.2em}
\end{wrapfigure}
It is reasonable to make Assumption~\ref{assumption:subspace_converged}, since learning the PCA subspace is usually much more efficient than the rotation. Also, Section~\ref{sec:subspace_learning_invariance} has shown that the rotational update term conserves the reconstruction loss, thus can be analyzed independently. Assumptions~\ref{assumption:yyt_near_diagonal} and~\ref{assumption:ordered_diagonal} state that we focus on the convergence \emph{local} to the ordered, axis-aligned solution.

\begin{definition}
The ``non-diagonality'' of a matrix $M \in \real^{k \times k}$ is $Nd(M) = \sum_{i=1}^{k}\sum_{j=1, j\neq i}^k \vert M_{ij}|$.
\end{definition}

\begin{theorem}[Local Linear Convergence]
\label{th:local_linear_convergence}
Let $g = \min_{i, j, i\neq j} \frac{1}{n}|(\y\y\transpose)_{ii} - (\y\y\transpose)_{jj}|$.
With Assumption~\ref{assumption:subspace_converged}-\ref{assumption:ordered_diagonal} and in the instantaneous limit ($\alpha \rightarrow 0$), for an LAE updated with RAG, $Nd(\frac{1}{n}\y\y\transpose)$ converges to 0 with an instantaneous linear rate of $g$.
\end{theorem}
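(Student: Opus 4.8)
The plan is to reduce the RAG dynamics to a closed autonomous flow on the latent Gram matrix $M := \frac{1}{n}\y\y\transpose$ and to analyze that flow near its diagonal fixed point. By Assumption~\ref{assumption:subspace_converged} the reconstruction gradient vanishes, so in the instantaneous limit the update collapses to the pure rotation $\dot{\encoder} = \frac{1}{n}\A\encoder$ with $\A = \frac{1}{2}(\ut(\y\y\transpose) - \lt(\y\y\transpose))$. Writing $\A = \frac{n}{2}B$, where $B := \ut(M) - \lt(M)$ is the skew-symmetric matrix built from the strictly off-diagonal part of the symmetric $M$, I would differentiate $M = \frac{1}{n}\encoder\x\x\transpose\encoder\transpose$ and use skew-symmetry of $\A$ to obtain the commutator (isospectral) flow $\dot{M} = \frac{1}{n}[\A, M] = \frac{1}{2}[B, M]$. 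The key structural observation is that every diagonal $M$ is a fixed point (there $B = 0$), so driving $Nd(M)$ to $0$ is exactly convergence of this flow to its fixed-point set; moreover, since the flow is a similarity by an orthogonal $Q(t)$, it is isospectral, which I will exploit later to keep the diagonal gaps bounded below.

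Next I would extract the off-diagonal dynamics. Writing $d_i := M_{ii}$ and expanding $[B,M]_{ij} = \sum_l (B_{il}M_{lj} - M_{il}B_{lj})$ for $i \neq j$, the only terms \emph{linear} in the off-diagonal entries are those pairing a diagonal factor with $B_{ij}$, namely $B_{ij}d_j - d_i B_{ij} = (d_j - d_i)B_{ij}$; all remaining terms pair two off-diagonal entries and are quadratically small under Assumption~\ref{assumption:yyt_near_diagonal}. Since $B_{ij} = \mathrm{sgn}(j-i)M_{ij}$, this yields $\dot{M}_{ij} = \tfrac{1}{2}\,\mathrm{sgn}(j-i)(d_j-d_i)M_{ij} + O(\epsilon^2)$. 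Here Assumption~\ref{assumption:ordered_diagonal} does the essential work: for $i<j$ it forces $d_j - d_i < 0$, and for $i>j$ it forces the matching sign, so in both cases the leading coefficient equals $-\tfrac{1}{2}|d_i - d_j|$ and every off-diagonal mode is contracted. The descending order, not mere distinctness, is precisely what makes this fixed point stable rather than a saddle.

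With this per-entry contraction I would bound the non-diagonality directly. Using $\frac{d}{dt}|M_{ij}| = \mathrm{sgn}(M_{ij})\dot{M}_{ij}$ away from sign changes (and a one-sided/subdifferential estimate at them), summing over $i \neq j$ gives $\frac{d}{dt}Nd(M) \le -\tfrac{1}{2}\sum_{i\neq j}|d_i-d_j|\,|M_{ij}| + (\text{remainder}) \le -\tfrac{1}{2}g\,Nd(M) + (\text{remainder})$, with $g = \min_{i\neq j}|d_i - d_j|$ by definition. Integrating this differential inequality yields exponential (instantaneous linear) decay of $Nd(M)$ at the rate set by the spectral gap $g$, which is the claim; careful bookkeeping of the normalization in $\A = \frac{n}{2}B$ is what pins down the precise leading constant in the rate.

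The main obstacle is controlling the quadratic remainder and certifying that the linearization is self-consistent along the whole trajectory. Concretely, I must show the off-diagonal-times-off-diagonal terms dropped above stay dominated by the linear contraction, which amounts to proving Assumption~\ref{assumption:yyt_near_diagonal} is \emph{preserved} under the flow. Here isospectrality helps decisively: the eigenvalues are frozen, so the diagonal entries $d_i(t)$ drift only through second-order feedback from the shrinking off-diagonals and remain close to their limiting eigenvalues, keeping the gaps $\ge g > 0$ and $M$ diagonally dominant; once $\epsilon$ is small the remainder stays $O(\epsilon)$ relative to the leading term and cannot spoil the rate. The non-smoothness of $Nd$ at entries crossing zero is a secondary nuisance handled by the subdifferential estimate noted above.
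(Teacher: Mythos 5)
Your proposal is correct and follows essentially the same route as the paper's proof in Appendix~\ref{app:proofs:rotation_linear_convergence}: under Assumption~\ref{assumption:subspace_converged} reduce to the pure rotation flow, compute the componentwise evolution of the off-diagonal entries of $\frac{1}{n}\y\y\transpose$, keep only the term linear in $y_{ij}$ (whose coefficient is a negative multiple of $|y_{ii}-y_{jj}|$ by Assumption~\ref{assumption:ordered_diagonal}), discard the quadratic remainder via Assumption~\ref{assumption:yyt_near_diagonal}, and sum over $i\neq j$ to get a linear differential inequality for $Nd(\frac{1}{n}\y\y\transpose)$ --- your commutator/isospectrality framing and the attention to preserving the assumptions along the flow are refinements the paper omits, not a different argument. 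The one substantive divergence is the constant: your careful tracking of the factor $\tfrac{1}{2}$ in $A_t$ yields the per-entry contraction coefficient $\tfrac{1}{2}|d_i-d_j|$ and hence rate $g/2$, whereas the paper's componentwise formula~\eqref{eq:d_yyt_ij} drops that factor and arrives at the stated rate $g$; your accounting appears to be the more accurate one, so the discrepancy affects only the constant in the rate, not the linear-convergence conclusion.
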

\vspace{-0.3em}

The proof is deferred to Appendix~\ref{app:proofs:rotation_linear_convergence}.
Note that the optimal representation corresponds to diagonal $\frac{1}{n}\y\y\transpose$ with ordered diagonal elements, which RAG has local linear convergence to.
Note that near the global optimum, $g$ is approximately the smallest ``gap" between the eigenvalues of $\frac{1}{n}\x\x\transpose$.

\begin{figure}[t]
    \subfigure[Axis-alignment]{\label{fig:toy_axis_alignment}\includegraphics[width=0.5\textwidth]{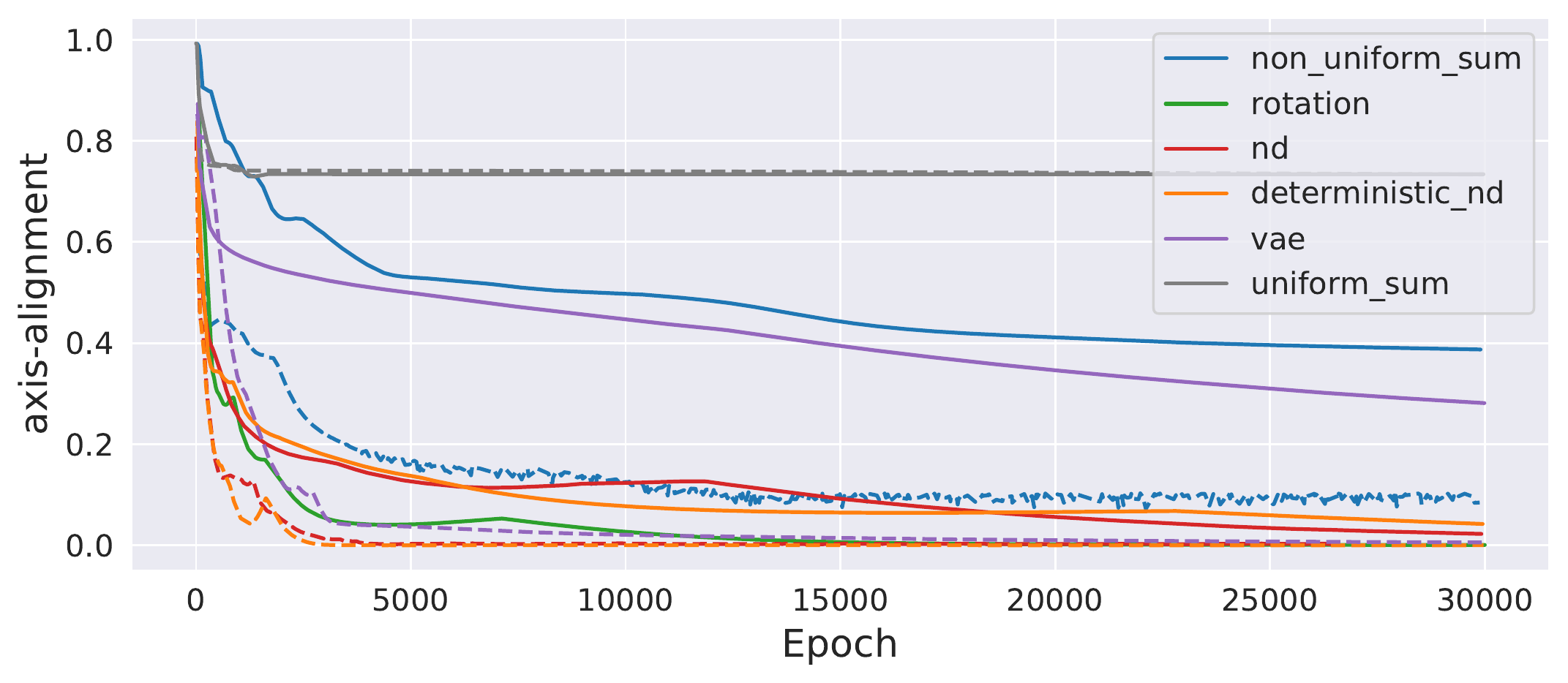}}
    \subfigure[Subspace convergence]{\label{fig:toy_subspace_convergence}\includegraphics[width=0.5\textwidth]{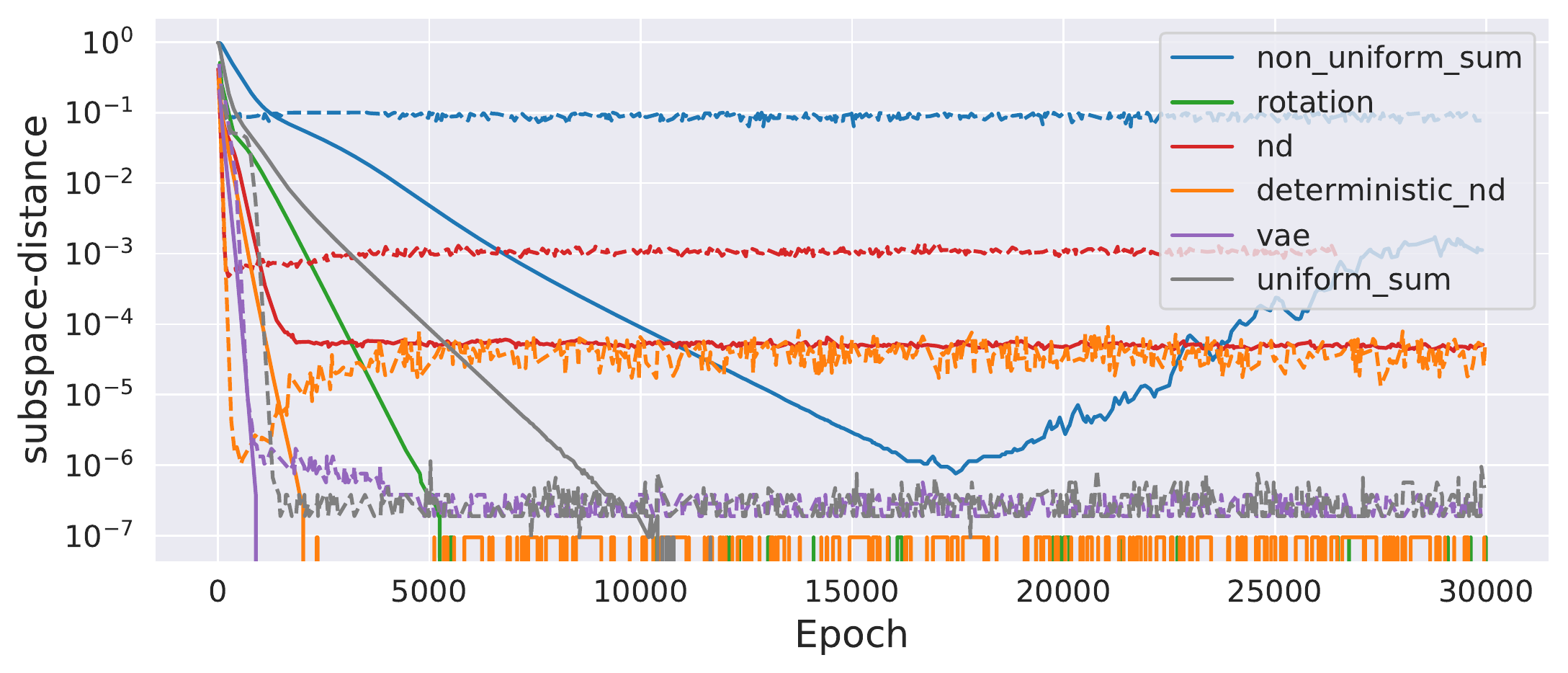}}
    \vspace{-0.8em}
    \caption{Learning dynamics of different LAE / linear VAE models trained on the MNIST ($ k=20 $). Solid lines represent models trained using gradient descent with Nesterov momentum 0.9. Dashed lines represent models trained with Adam optimizer. The learning rate for each model and optimizer has been tuned to have the fastest convergence to axis-alignment. 
    }
    \label{fig:convergence}
    \vspace{-1.5em}
\end{figure}

\vspace{-0.5em}
\section{Experiments}
\vspace{-0.3em}
\label{sec:experiments}

In this section, we seek answers to these questions:
1) What is the empirical speed of convergence of an LAE to the ordered, axis-aligned solution using gradient-based optimization, with the aforementioned objectives or update rules?
2) How is the learning dynamics affected by different gradient-based optimizers?
3) How does the convergence speed scale to different sizes of the latent representations?

First, we define the metrics for axis-alignment and subspace convergence using the learned 
$\decoder$ (Definitions~\ref{def:axis-alignment} and~\ref{def:subspace}). Definition~\ref{def:subspace} is equal to the Definition 1 in~\citet{tang2019exponentially} scaled by $\frac{1}{k}$.

\begin{definition}[Distance to axis-aligned solution]
\label{def:axis-alignment}
We define the distance to the axis-aligned solution as 
$d_{\mathrm{align}}(\decoder, U) = 1 - \frac{1}{k}\sum_{i=1}^k \max_j \frac{(U_i\transpose (\decoder)_j)\sq}{||U_i||_2\sq ||(\decoder)_j||_2\sq}$ (subscripts represent the column index).
\end{definition}

\begin{definition}[Distance to optimal subspace]
\label{def:subspace}
Let $U_{\decoder} \in \real^{m \times k}$ consist of the left singular vectors of $\decoder$. We define the distance to the optimal subspace as
$d_{\mathrm{sub}}(\decoder, U) = 1 - \frac{1}{k}\trace(U U\transpose U_{\decoder} U_{\decoder}\transpose)$.
\end{definition}
\vspace{-1em}

\paragraph{Convergence to optimal representation}
We compare the learning dynamics for six models: uniform and non-uniform $\ell_2$ regularized LAEs, LAE updated with the RAG, LAEs updated with nested dropout and its deterministic variant, and linear VAE with diagonal latent covariance~\citep{dont-blame-elbo}.

Figure~\ref{fig:convergence} and~\ref{fig:matrix_visualization} show the learning dynamics of these model on the MNIST dataset~\citep{mnist}, with $k=20$. Further details can be found in Appendix~\ref{app:exp_details}. We use full-batch training for this experiment, which is sufficient to demonstrate the symmetry breaking properties of these models. For completeness, we also show mini-batch experiments in Appendix~\ref{app:minibatch_exp}. Figure~\ref{fig:convergence} shows the evolution of the two metrics: distance to axis-alignment and to the optimal subspace, when the models are trained with Nesterov accelerated gradient descent and the Adam optimizer~\citep{kingma2014adam}, respectively. Figure~\ref{fig:matrix_visualization} visualizes the matrix $U\transpose \decoder$, and the first 20 learned principal components of MNIST (columns of $\decoder$).

\begin{wrapfigure}{r}{0.49\textwidth}
\vspace{-0.5em}
    \centering
        \includegraphics[width=0.49\textwidth]{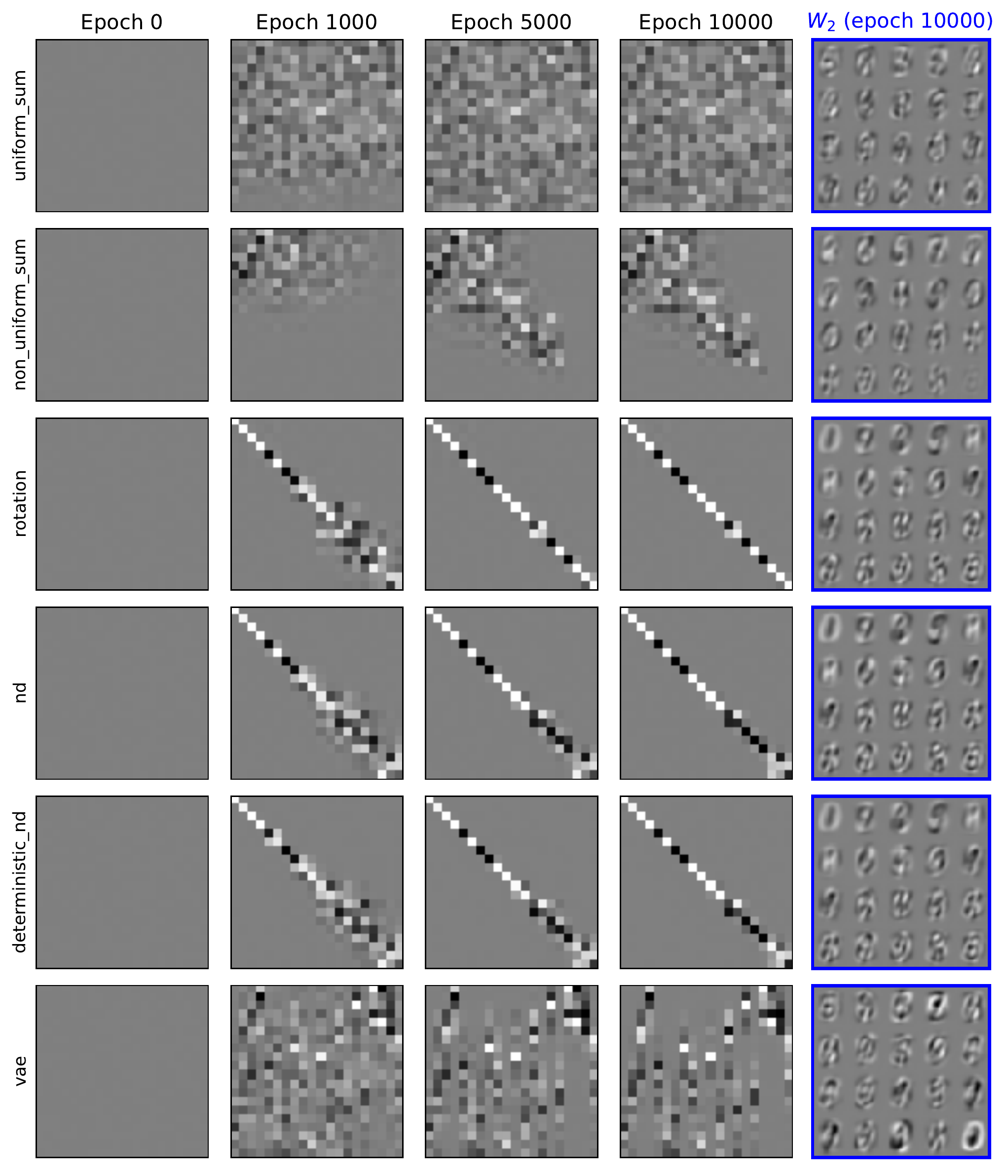}
        \caption{Visualization of $U\transpose \decoder$ and the decoder weights (last column) of LAEs trained on MNIST. All models are trained with Nesterov accelerated gradient descent. Pixel values range between -1 (black) and 1 (white). An ordered, axis-aligned solution corresponds to diagonal $U\transpose \decoder$ with $\pm 1$ diagonal entries. The linear VAE does not enforce order over the hidden dimensions, so $U\transpose \decoder$ will resemble a permutation matrix at convergence.}
        \label{fig:matrix_visualization}
        \vspace{-2em}
\end{wrapfigure}

Unsurprisingly, the uniform $\ell_2$ regularization fails to learn the axis-aligned solutions.
When optimized with Nesterov accelerated gradient descent, the regularized models, especially non-uniform $\ell_2$ regularization, has slow convergence to the axis-aligned solution. The model trained with RAG has a faster convergence. It's worth noting that Adam optimizer accelerates the learning of the regularized models and the linear VAE, but it is not directly applicable to RAG.

\begin{wrapfigure}{r}{0.548\textwidth}
    \vspace{-1.3em}
    \begin{minipage}{0.548\textwidth}
    \centering
    \includegraphics[width=0.9\linewidth]{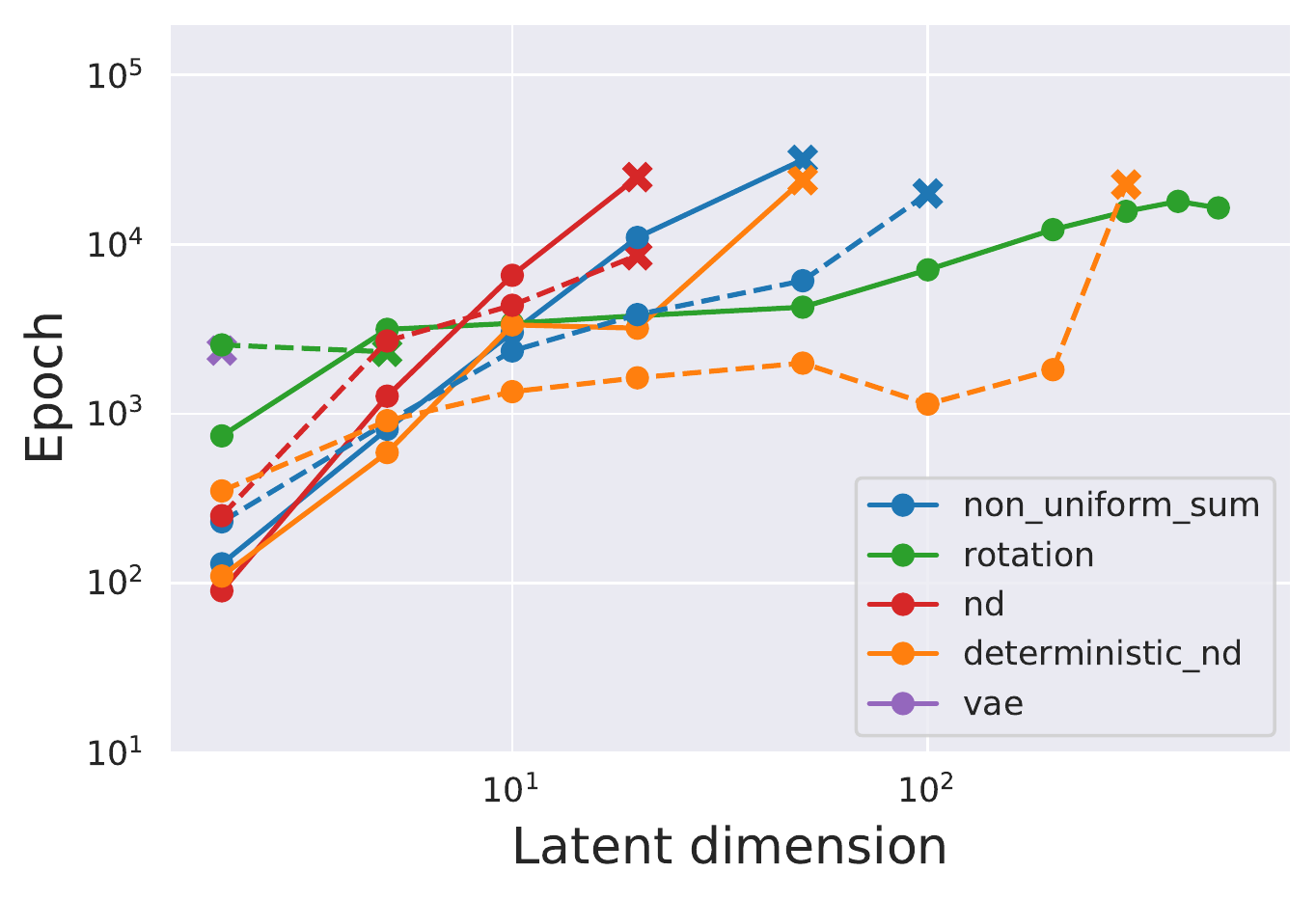}
    \caption{Epochs taken to reach 0.3 axis-alignment distance on the synthetic dataset, for different latent dimensions.
    Solid and dashed lines represent models trained with Nesterov accelerated gradient descent and Adam optimizer respectively. Cross markers indicate that beyond the current latent dimension, the models fail to reach 0.3 axis-alignment distance within 50k epochs. 
    }
    \label{fig:mnist_vary_hidden_size}
    \end{minipage}
    \vspace{-2em}
\end{wrapfigure}

\vspace{-1em}
\paragraph{Scalability to latent representation sizes}
As predicted by the Hessian condition number analysis in Section~\ref{sec:nonuni_slow_convergence} and Section~\ref{sec:nested_dropout}, we expect the models with the two regularized objectives to become much less efficient as the latent dimension grows. 
We test this 
on a synthetic dataset with input dimension $m=1000$. The data singular values are $1, \dots, m$. Full experimental details are in Appendix~\ref{app:exp_details}.
Figure~\ref{fig:mnist_vary_hidden_size} shows how quickly each model converges to the axis-alignment distance of 0.3. When optimized with the Nesterov accelerated gradient descent, the non-uniform $\ell_2$ regularization and the deterministic nested dropout scale poorly with latent dimension compared to RAG. This result is consistent with our Hessian condition number analysis. 
Although Adam optimizer provides acceleration for the regularized objectives, it does not solve the poor scaling with latent dimensions, as both regularized models fail to converge with large latent dimensions.

\vspace{-0.5em}
\section{Conclusion}
\vspace{-0.3em}
Learning the optimal representation in an LAE amounts to symmetry breaking, which is central to general representation learning. In this work, we investigated several algorithms that learn the optimal representation in LAEs, and analyze their strength of symmetry breaking. We showed that naive regularization approaches are able to break the symmetry in LAEs but introduce ill-conditioning that leads to slow convergence. The alternative algorithm we proposed, the rotation augmented gradient (RAG), guarantees convergence to the optimal representation and overcomes the convergence speed issues present in the regularization approaches. Our theoretical analysis provides new insights into the loss landscape of representation learning problems and the algorithmic properties required to perform gradient-based learning of representations.

\section*{Broader Impact}
The contribution of this work is the theoretical understanding of learning the optimal representations in LAEs with gradient-based optimizers. We believe that the discussion of broader impact is not applicable to this work.

\begin{ack}We thank Jonathan Bloom, Richard Zemel, Juhan Bae and Cem Anil for helpful discussions.

XB is supported by a Natural Sciences and Engineering Research council (NSERC) Discovery Grant. JL is supported by grants from NSERC and Samsung. SS's research is supported in part by an NSERC Discovery grant. RG acknowledges support from the CIFAR Canadian AI Chairs program. Part of this research was conducted when SS and RG were visitors at the Special year on Optimization, Statistics, and Theoretical Machine Learning at the School of Mathematics, Institute for Advanced Study, Princeton. 
\end{ack}
\bibliography{references}

\begin{thebibliography}{29}
\providecommand{\natexlab}[1]{#1}
\providecommand{\url}[1]{\texttt{#1}}
\expandafter\ifx\csname urlstyle\endcsname\relax
  \providecommand{\doi}[1]{doi: #1}\else
  \providecommand{\doi}{doi: \begingroup \urlstyle{rm}\Url}\fi

\bibitem[Absil et~al.(2009)Absil, Mahony, and Sepulchre]{absil2009optimization}
P.-A. Absil, R.~Mahony, and R.~Sepulchre.
\newblock \emph{Optimization algorithms on matrix manifolds}.
\newblock Princeton University Press, 2009.

\bibitem[Arora et~al.(2019)Arora, Cohen, Hu, and Luo]{arora2019implicit}
S.~Arora, N.~Cohen, W.~Hu, and Y.~Luo.
\newblock Implicit regularization in deep matrix factorization.
\newblock In \emph{Advances in Neural Information Processing Systems}, pages
  7411--7422, 2019.

\bibitem[Baldi and Hornik(1989)]{baldi1989neural}
P.~Baldi and K.~Hornik.
\newblock Neural networks and principal component analysis: {L}earning from
  examples without local minima.
\newblock \emph{Neural networks}, 2\penalty0 (1):\penalty0 53--58, 1989.

\bibitem[Bamler and Mandt(2018)]{bamler2018improving}
R.~Bamler and S.~Mandt.
\newblock Improving optimization for models with continuous symmetry breaking.
\newblock \emph{arXiv preprint arXiv:1803.03234}, 2018.

\bibitem[Dai et~al.(2018)Dai, Wang, Aston, Hua, and Wipf]{dai2018connections}
B.~Dai, Y.~Wang, J.~Aston, G.~Hua, and D.~Wipf.
\newblock Connections with robust pca and the role of emergent sparsity in
  variational autoencoder models.
\newblock \emph{The Journal of Machine Learning Research}, 19\penalty0
  (1):\penalty0 1573--1614, 2018.

\bibitem[Deng et~al.(2009)Deng, Dong, Socher, Li, Li, and Fei-Fei]{imagenet}
J.~Deng, W.~Dong, R.~Socher, L.-J. Li, K.~Li, and L.~Fei-Fei.
\newblock {Image{N}et: {A} Large-Scale Hierarchical Image Database}.
\newblock In \emph{CVPR09}, 2009.

\bibitem[Ge et~al.(2015)Ge, Huang, Jin, and Yuan]{pmlr-v40-Ge15}
R.~Ge, F.~Huang, C.~Jin, and Y.~Yuan.
\newblock Escaping from saddle points --- online stochastic gradient for tensor
  decomposition.
\newblock In P.~Grünwald, E.~Hazan, and S.~Kale, editors, \emph{Proceedings of
  The 28th Conference on Learning Theory}, volume~40 of \emph{Proceedings of
  Machine Learning Research}, pages 797--842, Paris, France, 03--06 Jul 2015.
  PMLR.
\newblock URL \url{http://proceedings.mlr.press/v40/Ge15.html}.

\bibitem[Gemp et~al.(2020)Gemp, McWilliams, Vernade, and
  Graepel]{gemp2020eigengame}
I.~Gemp, B.~McWilliams, C.~Vernade, and T.~Graepel.
\newblock Eigengame: Pca as a nash equilibrium.
\newblock \emph{arXiv preprint arXiv:2010.00554}, 2020.

\bibitem[Gidel et~al.(2019)Gidel, Bach, and Lacoste-Julien]{gidel2019implicit}
G.~Gidel, F.~Bach, and S.~Lacoste-Julien.
\newblock Implicit regularization of discrete gradient dynamics in deep linear
  neural networks.
\newblock \emph{arXiv preprint arXiv:1904.13262}, 2019.

\bibitem[Higgins et~al.(2017)Higgins, Matthey, Pal, Burgess, Glorot, Botvinick,
  Mohamed, and Lerchner]{higgins2017beta}
I.~Higgins, L.~Matthey, A.~Pal, C.~Burgess, X.~Glorot, M.~Botvinick,
  S.~Mohamed, and A.~Lerchner.
\newblock beta-{VAE}: {L}earning basic visual concepts with a constrained
  variational framework.
\newblock \emph{Iclr}, 2\penalty0 (5):\penalty0 6, 2017.

\bibitem[Jacot et~al.(2018)Jacot, Gabriel, and Hongler]{ntk}
A.~Jacot, F.~Gabriel, and C.~Hongler.
\newblock Neural {T}angent {K}ernel: {C}onvergence and generalization in neural
  networks.
\newblock In \emph{Advances in neural information processing systems}, pages
  8571--8580, 2018.

\bibitem[Jin et~al.(2017)Jin, Ge, Netrapalli, Kakade, and
  Jordan]{pmlr-v70-jin17a}
C.~Jin, R.~Ge, P.~Netrapalli, S.~M. Kakade, and M.~I. Jordan.
\newblock How to escape saddle points efficiently.
\newblock In D.~Precup and Y.~W. Teh, editors, \emph{Proceedings of the 34th
  International Conference on Machine Learning}, volume~70 of \emph{Proceedings
  of Machine Learning Research}, pages 1724--1732, International Convention
  Centre, Sydney, Australia, 06--11 Aug 2017. PMLR.
\newblock URL \url{http://proceedings.mlr.press/v70/jin17a.html}.

\bibitem[Khalil and Grizzle(2002)]{khalil2002nonlinear}
H.~K. Khalil and J.~W. Grizzle.
\newblock \emph{Nonlinear systems}, volume~3.
\newblock Prentice hall Upper Saddle River, NJ, 2002.

\bibitem[Kingma and Ba(2014)]{kingma2014adam}
D.~P. Kingma and J.~Ba.
\newblock Adam: {A} method for stochastic optimization.
\newblock \emph{arXiv preprint arXiv:1412.6980}, 2014.

\bibitem[Kingma and Welling(2013)]{kingma2013auto}
D.~P. Kingma and M.~Welling.
\newblock Auto-encoding variational bayes.
\newblock \emph{arXiv preprint arXiv:1312.6114}, 2013.

\bibitem[Kumar and Poole(2020)]{kumar2020implicit}
A.~Kumar and B.~Poole.
\newblock On implicit regularization in $\beta$-{VAE}s.
\newblock \emph{arXiv preprint arXiv:2002.00041}, 2020.

\bibitem[Kunin et~al.(2019)Kunin, Bloom, Goeva, and Seed]{ae-loss-landscape}
D.~Kunin, J.~M. Bloom, A.~Goeva, and C.~Seed.
\newblock Loss landscapes of regularized linear autoencoders.
\newblock \emph{arXiv preprint arXiv:1901.08168}, 2019.

\bibitem[LeCun et~al.(1998)LeCun, Bottou, Bengio, and Haffner]{mnist}
Y.~LeCun, L.~Bottou, Y.~Bengio, and P.~Haffner.
\newblock Gradient-based learning applied to document recognition.
\newblock \emph{Proceedings of the IEEE}, 86\penalty0 (11):\penalty0
  2278--2324, 1998.

\bibitem[Lucas et~al.(2019)Lucas, Tucker, Grosse, and Norouzi]{dont-blame-elbo}
J.~Lucas, G.~Tucker, R.~B. Grosse, and M.~Norouzi.
\newblock Don't blame the {ELBO}! {A} linear vae perspective on posterior
  collapse.
\newblock In \emph{Advances in Neural Information Processing Systems}, pages
  9403--9413, 2019.

\bibitem[Oftadeh et~al.(2020)Oftadeh, Shen, Wang, and
  Shell]{oftadeheliminating}
R.~Oftadeh, J.~Shen, A.~Wang, and D.~Shell.
\newblock Eliminating the invariance on the loss landscape of linear
  autoencoders.
\newblock In \emph{Proceedings of the 37th International Conference on Machine
  Learning}, pages 5726--5734, 2020.

\bibitem[Oja(1982)]{oja1982simplified}
E.~Oja.
\newblock Simplified neuron model as a principal component analyzer.
\newblock \emph{Journal of mathematical biology}, 15\penalty0 (3):\penalty0
  267--273, 1982.

\bibitem[Pearson(1901)]{pearson1901liii}
K.~Pearson.
\newblock Liii. on lines and planes of closest fit to systems of points in
  space.
\newblock \emph{The London, Edinburgh, and Dublin Philosophical Magazine and
  Journal of Science}, 2\penalty0 (11):\penalty0 559--572, 1901.

\bibitem[Pfau et~al.(2018)Pfau, Petersen, Agarwal, Barrett, and
  Stachenfeld]{pfau2018spectral}
D.~Pfau, S.~Petersen, A.~Agarwal, D.~G. Barrett, and K.~L. Stachenfeld.
\newblock Spectral inference networks: {U}nifying deep and spectral learning.
\newblock \emph{arXiv preprint arXiv:1806.02215}, 2018.

\bibitem[Rippel et~al.(2014)Rippel, Gelbart, and Adams]{nested-dropout}
O.~Rippel, M.~Gelbart, and R.~Adams.
\newblock Learning ordered representations with nested dropout.
\newblock In \emph{International Conference on Machine Learning}, pages
  1746--1754, 2014.

\bibitem[Rolinek et~al.(2019)Rolinek, Zietlow, and
  Martius]{rolinek2019variational}
M.~Rolinek, D.~Zietlow, and G.~Martius.
\newblock Variational autoencoders pursue pca directions (by accident).
\newblock In \emph{Proceedings of the IEEE Conference on Computer Vision and
  Pattern Recognition}, pages 12406--12415, 2019.

\bibitem[Sanger(1989)]{generalized-hebbian-alg}
T.~D. Sanger.
\newblock Optimal unsupervised learning in a single-layer linear feedforward
  neural network.
\newblock \emph{Neural Networks}, 2\penalty0 (6):\penalty0 459--473, 1989.
\newblock \doi{10.1016/0893-6080(89)90044-0}.
\newblock URL
  \url{https://www.sciencedirect.com/science/article/pii/0893608089900440}.

\bibitem[Saxe et~al.(2019)Saxe, McClelland, and Ganguli]{saxe2018}
A.~M. Saxe, J.~L. McClelland, and S.~Ganguli.
\newblock A mathematical theory of semantic development in deep neural
  networks.
\newblock \emph{Proceedings of the National Academy of Sciences of the United
  States of America}, 116\penalty0 (23):\penalty0 11537--11546, Jun 4, 2019.
\newblock \doi{10.1073/pnas.1820226116}.
\newblock URL \url{https://www.ncbi.nlm.nih.gov/pubmed/31101713}.

\bibitem[Tang(2019)]{tang2019exponentially}
C.~Tang.
\newblock Exponentially convergent stochastic k-{PCA} without variance
  reduction.
\newblock In \emph{Advances in Neural Information Processing Systems}, pages
  12393--12404, 2019.

\bibitem[Tipping and Bishop(1999)]{tipping1999probabilistic}
M.~E. Tipping and C.~M. Bishop.
\newblock Probabilistic principal component analysis.
\newblock \emph{Journal of the Royal Statistical Society: Series B (Statistical
  Methodology)}, 61\penalty0 (3):\penalty0 611--622, 1999.

\end{thebibliography}
\bibliographystyle{abbrvnat}

\appendix
\onecolumn
\section{Table of Notation}
\label{app:notation}

\begin{table}[h]
    \centering
    \noindent\setlength\tabcolsep{4pt}\setlength{\extrarowheight}{5pt}%
    \begin{tabularx}{0.9\linewidth}{c|*{1}{>{\RaggedRight\arraybackslash}X}}
         &  \textbf{Description} \\\hline
        $k$ & Number of latent dimensions in hidden layer of autoencoder \\
        $m$ & Number of dimensions of input data \\
        $n$ & Number of datapoints \\
        $\encoder \in \real^{k \times m}$ & Encoder weight matrix \\
        $\decoder \in \real^{m \times k}$ & Decoder weight matrix \\
        $\x \in \real^{m \times n}$ & Data matrix, with $n$ $m$-dimensional\\
        $\Vert \cdot \Vert_F$ & The Frobenius matrix norm\\
        $\sigma_i\sq$ & The $i^{th}$ eigenvalues of the empirical covariance matrix $\frac{1}{n}\x\x^\top$\\
        $S$ & Diagonal matrix with entries $\sigma_1,\ldots,\sigma_k$ \\
        $U$ & Matrix whose columns are the eigenvectors of $\frac{1}{n}\x\x^\top$, in descending order of corresponding eigenvalues\\
        $\mathcal{L}$ & Linear autoencoder reconstruction loss function\\
        $\mathcal{L}_\lambda$ & Linear autoencoder loss function with uniform $\ell_2$ regularization\\
        $\mathcal{L}_{\sigma'}$ & Linear autoencoder loss function with uniform $\ell_2$ regularization\\
        $\Lambda$ & Diagonal matrix containing non-uniform regularization weights, $\textrm{diag}(\lambda_1,\ldots,\lambda_k)$\\
        $H$ & The Hessian matrix of the non-uniform regularized loss (unless otherwise specified)\\
        $s_{\max}(H)$ & The largest eigenvalue of $H$\\
        $s_{\min}(H)$ & The smallest eigenvalue of $H$\\
        $f_A(v)$ & The Rayleigh quotient, $f_A(v) = v^\top A v / v^\top v$\\
        $\mathcal{L}_{\rm ND}$ & Linear autoencoder with nested dropout loss function\\
        $Y$ & $Y = \encoder\x$, latent representation of linear autoencoder \\
        $\alpha$ & Learning rate of gradient descent optimizer\\
        $\ut(\cdot)$ / $\lt(\cdot)$ & Operator that sets the lower or upper triangular part (excluding the diagonal) to zero of a matrix (respectively)
    \\ \hline
    \end{tabularx}
	\vspace{1em}
    \caption{Summary of notation used in this manuscript, ordered according to introduction in main text.}
    \label{tab:notation}
\end{table}
\section{Conditioning analysis for the regularized LAE}
\label{app:hessian_curvature}

Our goal here is to show that the regularized LAE objective is ill-conditioned, and also to provide insight into the nature of the ill-conditioning. In order to demonstrate ill-conditioning, we will prove a lower bound on the condition number of the Hessian at a minimum, by providing a lower bound on the largest singular value of the Hessian and an upper bound on the smallest singular value. The largest eigenvalue limits the maximum stable learning rate, and thus if the ratio of these two terms is very large then we will be forced to make slow progress in learning the correct rotation. Throughout this section, we will assume that the data covariance is full rank and has unique eigenvalues.

Since the Hessian $H$ is symmetric, we can compute bounds on the singular values through the Rayleigh quotient, $f_H(v) = v^\top H v / v^\top v$. In particular, for any vector $v$ of appropriate dimensions,
\begin{equation}\label{eqn:rayleigh_bound}
    s_{\min}(H) \leq f_H(v) \leq s_{\max}(H).
\end{equation}
Thus, if we exhibit two vectors with Rayleigh quotients $f_H(v_1)$ and $f_H(v_2)$, then the condition number is lower bounded by $f_H(v_1)/f_H(v_2)$.

In order to compute the Rayleigh quotient, we compute the second derivatives of auxiliary functions parameterizing the loss over paths in weight-space, about the globally optimal weights. This can be justified by the following Lemma,

\begin{lemma}\label{lemma:path_curvature}
Consider smooth functions $\ell: \real^n \rightarrow \real$, and $g: \real \rightarrow \real^n$, with $h = \ell \circ g : \real \rightarrow \real$. Assume that $g(0)$ is a stationary point of $\ell$, and let $H$ denote the Hessian of $\ell$ at $g(0)$. Writing $f_H(v)$ for the Rayleigh quotient of $H$ with $v$, we have,
\[f_H(v) = \frac{h''(0)}{ J_g(0)^\top J_g(0)},\]
where $J_g$ denotes the Jacobian of $g$.

\end{lemma}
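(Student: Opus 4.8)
The plan is to read off the free vector in the Rayleigh quotient: since the denominator on the right-hand side is $J_g(0)^\top J_g(0)$, the statement must be asserting the identity for the particular choice $v = J_g(0) = g'(0) \in \real^n$. With this identification, the entire lemma reduces to a two-line application of the chain rule together with the stationarity hypothesis, so there is no serious obstacle—the only thing to get right is bookkeeping of which term survives at $t=0$.

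Concretely, I would first differentiate $h = \ell \circ g$ once using the chain rule, giving
\[
h'(t) = \nabla \ell(g(t))^\top g'(t).
\]
Differentiating a second time (product rule on the right, chain rule on $\nabla\ell(g(t))$) yields
\[
h''(t) = g'(t)^\top \nabla^2 \ell(g(t))\, g'(t) + \nabla \ell(g(t))^\top g''(t).
\]
I would then evaluate at $t=0$ and invoke the assumption that $g(0)$ is a stationary point of $\ell$, i.e. $\nabla \ell(g(0)) = 0$, which kills the second summand. Writing $H = \nabla^2 \ell(g(0))$ and $J_g(0) = g'(0)$, this leaves
\[
h''(0) = g'(0)^\top H\, g'(0) = J_g(0)^\top H\, J_g(0).
\]

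Finally I would divide by $J_g(0)^\top J_g(0) = \|g'(0)\|_2^2$ and recognize the result as the Rayleigh quotient evaluated at $v = J_g(0)$, namely
\[
f_H(J_g(0)) = \frac{J_g(0)^\top H\, J_g(0)}{J_g(0)^\top J_g(0)} = \frac{h''(0)}{J_g(0)^\top J_g(0)},
\]
which is exactly the claimed identity. The only point requiring a word of care is to state the identification $v = J_g(0)$ explicitly (implicit in the statement), and to note that smoothness of $\ell$ and $g$ guarantees the second derivatives exist and the chain-rule manipulations are valid; neither is a genuine difficulty. This lemma is precisely the tool that lets the subsequent curvature bounds compute $f_H(v)$ along a chosen path $g$ in weight space by differentiating the scalar function $h$ twice, rather than manipulating the full Hessian directly.
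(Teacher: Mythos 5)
Your proof is correct and follows essentially the same route as the paper's: two applications of the chain rule, with the stationarity hypothesis $\nabla\ell(g(0))=0$ eliminating the $\nabla\ell(g(t))^\top g''(t)$ term, leaving $h''(0)=J_g(0)^\top H J_g(0)$. The paper dresses this up with a Taylor expansion of $h$ about $\alpha=0$, but the substance is identical, and your explicit remark that the lemma is implicitly evaluated at $v=J_g(0)$ is a helpful clarification rather than a deviation.
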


\begin{proof}
The proof is a simple application of the chain rule and Taylor's theorem. Let $u = g(\alpha)$, then,
\[\frac{d^2h}{d\alpha^2} = J_g^\top \frac{\partial^2 \ell}{\partial^2 u} J_g + \frac{d\ell}{du}^\top\frac{d^2 g}{d\alpha^2}.\]
Thus, by Taylor expanding $h$ about $\alpha=0$,
\begin{align}
    h(\alpha) &= h(0) + \alpha \frac{dh}{d\alpha}\bigg\vert_{\alpha=0} + \frac{\alpha^2}{2} \frac{d^2 h}{d\alpha^2}\bigg\vert_{\alpha=0} + o(\alpha^3)\\
    &= h(0) + \alpha \left(\frac{d\ell}{du}^\top J_g\right)\Bigg\vert_{\alpha=0} + \frac{\alpha^2}{2} \left(J_g^\top \frac{\partial^2 \ell}{\partial^2 u} J_g + \frac{d\ell}{du}^\top\frac{d^2 g}{d\alpha^2}\right)\Bigg\vert_{\alpha=0} + o(\alpha^3)
\end{align}
Now, note that as $g(0)$ is a stationary point of $\ell$, thus $\frac{d\ell}{du}\big\vert_{\alpha=0} = 0$. Differentiating the Taylor expansion twice with respect to $\alpha$, and evaluating at $\alpha=0$ gives,
\[h''(0) = J_g(0)^\top H J_g(0)\]
Thus, dividing by $J_g(0)^\top J_g(0)$ we recover the Rayleigh quotient at $H$.
\end{proof}

\paragraph{Scaling curvature} The first vector for which we compute the Rayleigh quotient corresponds to rescaling of the largest principal component at the global optimum. To do so, we define the auxiliary function,
\begin{align*}
h_Z (\alpha) 
&= \loss_{\sigma'}(\encoder + \alpha Z_1, \decoder + \alpha Z_2; \x) \\
&= \frac{1}{2n}\Vert\x - (\decoder + \alpha  Z_2)(\encoder + \alpha Z_1)\x \Vert_F^2 + \frac{1}{2}||\Lambda^{1/2} (\encoder + \alpha Z_1)||_F^2 + \frac{1}{2}||(\decoder + \alpha Z_2) \Lambda^{1/2}||_F^2  
\end{align*}

Thus, by Lemma~\ref{lemma:path_curvature}, we have $h''_Z(0) = \frac{1}{2}\vecop{\Zvec}^\top H \vecop{\Zvec}$, that is, the curvature evaluated along the direction $\Zvec$. It is easy to see that $h_Z(\alpha)$ is a polynomial in $\alpha$, and thus to evaluate $h''_Z(0)$ we need only compute the terms of order $\alpha^2$ in $h_Z$. Writing the objective using the trace operation,
\begin{align*}
\begin{split}
    h_Z (\alpha) =&  \frac{1}{2n}\trace \big[ (\x - \decoder\encoder\x - \alpha(Z_2\encoder + \decoder Z_1)\x - \alpha^2 Z_2 Z_1 \x)^\top\\
    &~~~~~~~~~~~~(\x - \decoder\encoder\x - \alpha(Z_2\encoder + \decoder Z_1)\x - \alpha^2 Z_2 Z_1 \x) \big] \\
    &+ \frac{1}{2}\trace \left[ \Lambda ((\encoder + \alpha Z_1)(\encoder + \alpha Z_1)^\top + (\decoder + \alpha Z_2)^\top (\decoder + \alpha Z_2))\right]
\end{split}
\end{align*}
Collecting the terms in $\alpha^2$:
\begin{align*}
  \alpha^2 \big(&\frac{1}{2n}\trace\left[ \x^\top (Z_2 \encoder + \decoder Z_1)^\top (Z_2 \encoder + \decoder Z_1)\x - 2\x^\top Z_1^\top Z_2^\top (\x-\decoder\encoder \x)\right] \\
  &+ \frac{1}{2}\trace\left[\Lambda (Z_1 Z^\top_1 + Z_2^\top Z_2)\right]\big)  
\end{align*}
Above we have used permutation invariance of the trace operator to collect together two middle terms.

At this point, we proceed by analyzing the Rayleigh quotient along the direction corresponding to scaling the leading principal component column, at the global optimum:
\[\encoder^\top = \decoder = W = U (I - \Lambda \sigmadiag^{-2})^{\frac{1}{2}}\]

where $U$ are the eigenvectors of the data covariance, and $\sigmadiag^2$ the diagonal matrix containing the corresponding eigenvalues. Additionally, we choose $Z_1$ and $Z_2$ to contain the first column of the decoder ($\weightcolumn_1$), padded with zeros to match the dimension of $\encoder$ and $\decoder$,
\[Z_1^\top = Z_2 = Z =  \left(\begin{array}{cc}
    \weightcolumn_1 & 0 \\
\end{array}\right),\]
We will require the following identities,
\begin{align}\label{eqn:first_hess_go_id}
    \x - \decoder\encoder\x &= n U(\sigmadiag - (I - \Lambda \sigmadiag^{-2})\sigmadiag)V^\top = n U\Lambda \sigmadiag^{-1} V^\top \\
    U^\top W &= (I - \Lambda \sigmadiag^{-2})^{\frac{1}{2}} \\
    U^\top Z &= \left(\begin{array}{cc}
        \sqrt{1 - \lambda_1 \sigma_1^{-2}} & 0 \\
        0 & 0
    \end{array}\right)\\
    W^\top W &= I - \Lambda \sigmadiag^{-2}\\
    Z^\top Z &= \left(\begin{array}{cc}
        1 - \lambda_1 \sigma_1^{-2} & 0 \\
        0 & 0
    \end{array}\right)\\
    \label{eqn:last_hess_go_id}
    Z^\top W &= \left(\begin{array}{cc}
        1 - \lambda_1 \sigma_1^{-2} & 0 \\
        0 & 0
    \end{array}\right)
\end{align}

We now tackle each term in turn. Beginning with the first,
\begin{align*}
    &~~~~\trace\left( \x^\top (Z_2 \encoder + \decoder Z_1)^\top (Z_2 \encoder + \decoder Z_1)\x \right) \\
    &= \trace \left( \x\x^\top(Z W^\top + W Z^\top)(Z W^\top + W Z^\top) \right)\\
    &= n\trace \left(\sigmadiag^2 U^\top (Z W^\top + W Z^\top)(Z W^\top + W Z^\top) U \right)\\
    &= n\trace \left( \sigmadiag^2 (U^\top Z W^\top + U^\top W Z^\top)(Z (U^\top W)^\top + W (U^\top Z)^\top) \right)\\
    &= n\trace\bigl(\sigmadiag^2 ((U^\top Z)(W^\top Z) (U^\top W)^\top + (U^\top Z)(W^\top W) (U^\top Z)^\top \\
    &\:\: + (U^\top W)(Z^\top Z) (U^\top W)^\top + (U^\top W)(Z^\top W) (U^\top Z)^\top \bigr) \\
    &= 4n \sigma_1^2 (1 - \lambda_1 \sigma_1^{-2})^2
\end{align*}
For the second term,
\begin{align*}
    -2\trace\left(\x^\top Z_1^\top Z_2^\top (\x-\decoder\encoder \x)\right) &= -2n\trace \left(V \sigmadiag U^\top Z Z^\top U \Lambda\sigmadiag^{-1} V^\top\right)\\
    &= -2n\trace \left( U^\top Z Z^\top U \Lambda \right)\\
    &= -2n\lambda_1(1 - \lambda_1 \sigma_1^{-2})
\end{align*}
For the final third term,
\begin{align*}
    \trace\left(\Lambda (Z_1 Z^\top_1 + Z_2^\top Z_2)\right) &= 2\trace \left( \Lambda (Z^\top Z)\right) = 2\lambda_1 (1 - \lambda_1 \sigma_1^{-2})
\end{align*}
Combining these,
\[h''_{Z}(0) = (1 - \lambda_1 \sigma_1^{-2})\left( 4\sigma_1^2(1 - \lambda_1\sigma_1^{-2}) + 2\lambda_1 - 2\lambda_1\right) = 4\sigma_1^2(1 - \lambda_1 \sigma_1^{-2})^2\]

Using Lemma~\ref{lemma:path_curvature}, we see that to recover the Rayleigh quotient, we must divide by $\Vert \Zvec\Vert_F^2 = 2(1 - \lambda_1 \sigma_1^{-2})$. Thus, using Equation~\ref{eqn:rayleigh_bound}, we have
\[s_{\max}(H) \geq \frac{\vecop{\Zvec}^\top H \vecop{\Zvec}}{\Vert \vecop{\Zvec} \Vert_F^2} = 2\sigma_1^2(1 - \lambda_1 \sigma_1^{-2}) \geq 2(\sigma_1^2 - \sigma_k^2).\]

\paragraph{Rotation curvature}
To approximate the rotation curvature, we consider paths along the rotation manifold. This corresponds to rotating the latent space of the LAE. Using Lemma~\ref{lemma:path_curvature}, we will compute the Rayleigh quotient $f_H(t)$ for vectors $t$ on the tangent space to this rotation manifold.

Explicitly, we consider an auxiliary function of the form,
\[\gamma_{R}(\theta) = \frac{1}{2n}\Vert\x - \decoder R(\theta)^\top R(\theta)\encoder\x \Vert_F^2 + \frac{1}{2}||\Lambda^{1/2} R(\theta)\encoder||_F^2 + \frac{1}{2}||\decoder R(\theta)^\top \Lambda^{1/2}||_F^2,\]

where $R(\theta)$ is a rotation matrix parameterized by $\theta$. The first term does not depend on $\theta$, as $R$ is orthogonal. Thus, we need only compute the second derivative of the regularization terms. About the global optimum, the regularization terms can be written as,
\[\trace\left( \Lambda R(\theta) W^T W R(\theta)^T \right)\]
We will consider rotations of the $i^{th}$ and $j^{th}$ columns only (a Givens rotation). To reduce notational clutter, we write $\nu_i = (1 - \lambda_i \sigma^{-2}_i)$.
\begin{align*}
    \trace\left( \Lambda R(\theta) W^T W R(\theta)^T \right) &= \trace\left( \Lambda \left[ \begin{array}{cc}
        \nu_i \cos\theta & -\nu_j\sin\theta \\
        \nu_i \sin\theta  & \nu_j \cos\theta 
    \end{array}\right] \left[ \begin{array}{cc}
        \cos\theta & \sin\theta \\
        -\sin\theta & \cos\theta
    \end{array}\right]\right) + \sum_{l \neq i,j} \lambda_l \nu_l\\
    &= \trace\left( \Lambda \left[\begin{array}{cc}
        \nu_i\cos^2\theta  + \nu_j\sin^2\theta  & \cdot \\
        \cdot & \nu_i\sin^2\theta  + \nu_j\cos^2\theta 
    \end{array} \right]\right) + \sum_{l \neq i,j} \lambda_l \nu_l\\
    &= \lambda_i (\nu_i\cos^2\theta + \nu_j\sin^2\theta) + \lambda_j (\nu_i\sin^2\theta +  \nu_j\cos^2\theta ) + \sum_{l \neq i,j} \lambda_l \nu_l\\
    &=  \nu_i(\lambda_i - \lambda_j)\cos^2\theta  + \nu_j (\lambda_i - \lambda_j)\sin^2\theta + \sum_{l \neq i,j} \lambda_l \nu_l
\end{align*}

We proceed to take derivatives.
\[\frac{\partial}{\partial \theta}\trace\left( \Lambda R(\theta) W^T W R(\theta)^T \right) = 2\sin\theta \cos\theta (\nu_j - \nu_i)(\lambda_i - \lambda_j) = \sin 2\theta (\nu_j - \nu_i)(\lambda_i - \lambda_j)\]
Thus, the second derivative, $\gamma''(\theta)$, is given by,
\[2(\nu_j - \nu_i)(\lambda_i - \lambda_j)\cos 2\theta\]
Which, when evaluated at $\theta=0$, gives,
\[\gamma''(0) = 2(\nu_j - \nu_i)(\lambda_i - \lambda_j).\]

Per Lemma~\ref{lemma:path_curvature}, we also require the magnitude of the tangent to the path at $\theta=0$, to compute the Rayleigh quotient. At the global optimum, we have,
\begin{align*}
    \left\Vert W \frac{d}{d \theta}R(\theta)^\top\right\Vert_F^2 &= \left\Vert (I - \Lambda \sigmadiag^{-2})^{1/2} \frac{ d}{d \theta}R(\theta)^\top\right\Vert_F^2 \\
    &= \left\Vert \left[ \begin{array}{cc}
        \nu_{i}^{1/2} & 0 \\ 
        0 & \nu_{j}^{1/2}
    \end{array}\right] \left[ \begin{array}{cc}
        -\sin\theta & \cos\theta \\
        -\cos\theta & -\sin\theta
    \end{array}\right] \right\Vert_f^2 \\
    &= \nu_i + \nu_j
\end{align*}

Thus the Rayleigh quotient is given by,
\[f_H(t) = \frac{\nu_j - \nu_i}{\nu_i + \nu_j}(\lambda_i - \lambda_j).\]  
Without loss of generality, we will pick $i > j$, so that $\lambda_i > \lambda_j$, $\sigma_i < \sigma_j$, and $\nu_i < \nu_j$. Where the last of these inequalities follows from $\lambda_i\sigma_i^{-2} > \lambda_i \sigma^{-2}_j > \lambda_j\sigma^{-2}_j$.

\paragraph{Conditioning of the objective} We can combine the lower bound on the largest singular value with the upper bound on the smallest singular value to give a lower bound on the condition number. The ratio can be written,
\[\frac{2(\sigma_1^2 - \sigma_k^2) (\nu_i + \nu_j)}{(\lambda_i - \lambda_j)(\nu_j - \nu_i)}\]
Thus, the condition number is controlled by our choice of placement of $\{\lambda_j\}_{j=1}^k$ on the interval $(0, \sigma^2_k)$. We lower bound the condition number by the solution to the following optimization problem,
\begin{equation}
\label{eq:nonuni_hessian_minmax}
\textrm{cond}(\hessnonuni) \geq \min_{\lambda_1,\ldots,\lambda_k} \max_{i > j} \frac{2(\sigma_1^2 - \sigma_k^2) (\nu_i + \nu_j)}{(\lambda_i - \lambda_j)(\nu_j - \nu_i)}
\end{equation}
To simplify the problem, we lower bound $\nu_i + \nu_j > 2\nu_i$. Now the inner maximization can be reduced to a search over a single index by setting $i=j+1$, as the entries of $\Lambda$ and each $\nu$ are monotonic (decreasing and increasing respectively).

Further, we can see that at the minimum each of the terms $\nu_{j+1} / \left((\lambda_{j+1} - \lambda_j)(\nu_{j} - \nu_{j+1})\right)$ must be equal --- otherwise we could adjust our choice of $\Lambda$ to reduce the largest of these terms. We denote the equal value as $c_1$. Thus, we can write,
\begin{align}
\nonumber
    &\lambda_k - \lambda_1 = \sum_{j=1}^{k-1} (\lambda_{j+1} - \lambda_j)
    = \frac{1}{c_1} \sum_{j=1}^{k-1} \frac{\nu_{j+1}}{\nu_j - \nu_{j+1}}\\
\label{eq:nonuni_c1}
    &\implies c_1 = \frac{1}{\lambda_k - \lambda_1} \sum_{j=1}^{k-1} \frac{\nu_{j+1}}{\nu_j - \nu_{j+1}}
    > \frac{1}{\sigma_k\sq} \sum_{j=1}^{k-1} \frac{\nu_{j+1}}{\nu_j - \nu_{j+1}}
\end{align}

We can further bound $c_1$ by finding a lower bound for the summation in~\eqref{eq:nonuni_c1}. The minimum of~\eqref{eq:nonuni_c1} can be reached when all terms in the summation are equal. To see this, we let the value of each summation term to be $c_2 > 0$. We have,
\begin{align*}
    &\nu_{j+1} = \frac{c_2}{1 + c_2} \nu_j,~~j = 1,\dots,k-1
\end{align*}
For $l=2,\dots,k-1$, the derivative of~\eqref{eq:nonuni_c1} with respect to $\nu_l$ is zero, and the second derivative is positive.
\begin{align*}
    \frac{\partial}{\partial \nu_l} \frac{1}{\sigma_k\sq} \sum_{j=1}^{k-1} \frac{\nu_{j+1}}{\nu_j - \nu_{j+1}} 
    &= \frac{1}{\sigma_k\sq}\frac{\partial}{\partial \nu_l} \big( 
    \frac{\nu_{l-1}}{\nu_{l-1} - \nu_{l}} + \frac{\nu_{l+1}}{\nu_l - \nu_{l+1}}
    \big)\\
    &= \frac{1}{\sigma_k\sq} \big(\frac{\nu_{l-1}}{(\nu_{l-1} - \nu_l)\sq} - \frac{\nu_{l+1}}{(\nu_l - \nu_{l+1})\sq}\big)\\
    &= \frac{1}{\sigma_k\sq} \cdot \frac{1}{\nu_l} \big(
        \frac{\frac{1+c_2}{c_2}}{(\frac{1+c_2}{c_2} - 1)\sq} - \frac{\frac{c_2}{1 + c_2}}{(1 - \frac{c_2}{1 + c_2})\sq}
    \big)\\
    &= 0
\end{align*}
\begin{align*}
    \frac{\partial\sq}{\partial \nu_l\sq} \frac{1}{\sigma_k\sq} \sum_{j=1}^{k-1} \frac{\nu_{j+1}}{\nu_j - \nu_{j+1}}
    &= \frac{1}{\sigma_k\sq} \big(
    \frac{2\nu_{l-1} (\nu_{l-1} - \nu_l)}{(\nu_{l-1} - \nu_l)^4} + \frac{2\nu_{l+1} (\nu_l - \nu_{l+1})}{(\nu_l - \nu_{l+1})^4}
    \big)
    > 0
\end{align*}
Therefore, the minimum of~\eqref{eq:nonuni_c1} can be reached when all terms in the summation are equal. We bound $c_2$ as follows,
\begin{align*}
    &\nu_1 - \nu_k = \sum_{j=1}^{k-1} (\nu_j - \nu_{j+1}) = \frac{1}{c_2} \sum_{j=1}^{k-1} \nu_{j+1}\\
    &\implies c_2 = \frac{1}{\nu_1 - \nu_k} \sum_{j=1}^{k-1} \nu_{j+1}
    > \frac{1}{\nu_1} \sum_{i=2}^{k} (1 - \frac{\lambda_i}{\sigma_i\sq})
    > \sum_{i=2}^{k} \frac{\sigma_i\sq - \lambda_i}{\sigma_i\sq} 
    > \frac{1}{\sigma_1\sq} \sum_{i=2}^{k-1} (\sigma_i\sq - \sigma_k\sq)
\end{align*}
We bound the condition number by putting the above step together,
\begin{align}
\nonumber
    \mathrm{cond}(\hessnonuni) \geq 2 (\sigma_1\sq - \sigma_k\sq) c_1 
    > 2(\sigma_1\sq - \sigma_k\sq) \frac{k-1}{\sigma_k\sq} c_2
    > \frac{2 (k-1) (\sigma_1\sq - \sigma_k\sq) \sum_{i=2}^{k-1} (\sigma_i\sq - \sigma_k\sq)}{\sigma_1\sq \sigma_k\sq}
\end{align}

\section{Deterministic nested dropout derivation}
\label{app:deterministic_nd}
In this section, we derive the analytical form of the expected LAE loss of the nested dropout algorithm~\citep{nested-dropout}.

As in Section~\ref{sec:nested_dropout}, we define $\pi_b$ as the operation that sets the hidden units with indices $b+1,\dots, k$ to zero. The loss written in the explicit expectation form is,
\begin{align}
\label{eq:nd_expected_form_appendix}
    \mathcal{L}_{\mathrm{ND}} (\encoder, \decoder; \x) = 
    \mathop{\mathbb{E}}_{b\sim p_B(\cdot)} \big[ \frac{1}{2n} \fnorm{\x - \decoder \pi_b(\encoder \x)}\sq \big]
\end{align}

In order to derive the analytical form of the expectation, we replace $\pi_b$ in~\eqref{eq:nd_expected_form} with element-wise masks in the latent space. Let $m_j^{(i)}$ be 0 if the $j^{th}$ latent dimension of the $i^{th}$ data point is dropped out, and 1 otherwise. Define the mask $M \in \{0,1\}^{k \times n}$ as,
\begin{align*}
    \M = \begin{bmatrix}
        m_1^{(1)} & \cdots & m_1^{(n)}\\
        \vdots &  \ddots & \vdots\\
        m_k^{(1)} & \cdots & m_k^{(n)}
    \end{bmatrix}
\end{align*}

We rewrite~\eqref{eq:nd_expected_form_appendix} as the expectation over $M$ (``$\circ$'' denotes element-wise multiplication),
\begin{align}
\label{eq:nd_expectation_M}
    \mathcal{L}_{\mathrm{ND}}(\encoder, \decoder;\x) = \mathbb{E}_{M} \big[ \frac{1}{2n} \fnorm{\x - \decoder (M \circ \encoder \x)}\sq \big]
\end{align}

Define $\tilde{\x} \coloneqq \decoder(\M \circ \encoder \x)$. We apply to~\eqref{eq:nd_expectation_M} the bias-variance breakdown of the prediction $\tilde{\x}$,
\begin{align*}
    \mathcal{L}_{\mathrm{ND}}(\encoder, \decoder;\x)
    &\coloneqq \mathbb{E}_{\M} [\mathcal{L}_{\mathrm{ND}} (\encoder, \decoder, \M)] \\
    &= \frac{1}{2n} \mathbb{E} [\trace ((\x - \tilde{\x}) (\x - \tilde{\x})\transpose]\\
    &= \frac{1}{2n} \trace(\x\transpose\x- 2\x\transpose \mathbb{E}[\tilde{\x}] + \mathbb{E}[\tilde{\x}]\transpose \mathbb{E}[\tilde{\x}])\\
    &= \frac{1}{2n} \trace((\x - \mathbb{E}[\tilde{\x}])\transpose (\x - \mathbb{E}[\tilde{\x}])) + \frac{1}{2}\trace(\cov (\tilde{\x}))
\end{align*}

Define the marginal probability of the latent unit with index $i$ to be kept (not dropped out) as $p_i$,
\begin{align*}
    p_i = 1 - \sum_{j = 1}^{i-1} p_B(b=j)
\end{align*}

We also define the matrices $P_D$ and $P_L$ that will be used in the following derivation,
\begin{align}
\label{eq:def_pd_pl}
    P_D = \begin{bmatrix}
    p_1 \\
    & \ddots \\
    & & p_k
    \end{bmatrix}
    ,~~~~
    P_L = \begin{bmatrix}
    p_1 & p_2 &\cdots& p_k\\
    p_2 & p_2 &  & p_k\\
    \vdots & & & \vdots\\
    p_k & p_k & \cdots & p_k
    \end{bmatrix}
\end{align}

We can compute $\mathbb{E}[\tilde{\x}]$ and $\trace(\cov (\tilde{x}))$ analytically as follows,
\begin{align*}
    \mathbb{E}[\tilde{\x}] &= \mathbb{E}_{\M} [\decoder(\M \circ \encoder \x)] = \decoder P_D \encoder \x\\
    \trace(\cov (\tilde{x})) &= \frac{1}{n}
    \trace(\mathbb{E}[\tilde{\x}\tilde{\x}\transpose])
    - \frac{1}{n}\trace(\mathbb{E}[\tilde{\x}]\mathbb{E}[\tilde{\x}]\transpose)\\
    &= \frac{1}{n}
    \trace (\x\transpose \encoder\transpose 
    (\decoder\transpose \decoder \circ P_L)
    \encoder \x
    )
    - \frac{1}{n}\trace(\x\transpose \encoder\transpose P_D
    \decoder\transpose\decoder P_D
    \encoder \x)
\end{align*}

Finally, we obtain the analytical form of the expected loss,
\begin{align*}
    \mathcal{L}_{\mathrm{ND}}(\encoder, \decoder;\x)
    &= \frac{1}{2n} \trace (\x\transpose \x) 
    - \frac{1}{n} \trace (\x\transpose \decoder P_D \encoder \x)\\
    &~~~~+ \frac{1}{2n} \trace (\x\transpose \encoder\transpose 
    (\decoder\transpose \decoder \circ 
    P_L)\encoder \x
    )
\end{align*}

\section{Conditioning analysis for the deterministic nested dropout}
\label{app:nd_hessian_curvature}

In this section we present an analogous study of the curvature under the Deterministic Nested Dropout objective. We recall from Appendix~\ref{app:deterministic_nd} that the loss can be written as ($P_D$, $P_L$ as defined in~\eqref{eq:def_pd_pl}),
\begin{align*}
    \mathcal{L}_{\mathrm{ND}}(\encoder, \decoder; \x) 
    &= \frac{1}{2n} \trace (\x\transpose \x) 
    - \frac{1}{n} \trace (\x\transpose \decoder P_D \encoder \x)
    \\
    &+ \frac{1}{2n} \trace (\x\transpose \encoder\transpose 
    (\decoder\transpose \decoder \circ 
    P_L)\encoder \x
    )
\end{align*}
Let $Q = \mathrm{diag}(q1,\dots, q_k)$, where $q_i \in \real$, $q_i \neq 0$, for $i = 1,\dots,k$. The global minima of the objective are not unique, and can be expressed as,
\begin{align}
\label{eq:exp_nd_w1*}
    \encoder^* &= Q U\transpose\\
\label{eq:exp_nd_w2*}
    \decoder^* &= U Q^{-1}
\end{align}

We can adopt the same approach as in Appendix~\ref{app:hessian_curvature}. We will compute quadratic forms with the Hessian of the objective, via paths through the parameter space. We will consider paths along scaling and rotation of the parameters.

\paragraph{Scaling curvature} 
Let $g(\alpha) = \mathcal{L}_{\mathrm{ND}}(\encoder^* + \alpha Z_1, \decoder^* + \alpha Z_2; \x)$.
As in Appendix~\ref{app:hessian_curvature}, we need only compute the second order ($\alpha$) terms in $g(\alpha)$,
\begin{align}
\label{eq:exp_nd_2nd_order}
\begin{split}
    \alpha\sq [
    &- \frac{1}{n} \trace(\x\transpose Z_2 P_D Z_1 \x) 
    + \frac{1}{2n}\trace(2\x\transpose Z_1\transpose (((\decoder^*)\transpose Z_2 + Z_2 \transpose \decoder^*) \circ P_L) \encoder^* \x)\\
    &+ \frac{1}{2n}\trace(\x\transpose (\encoder^*)\transpose (Z_2\transpose Z_2 \circ P_L) \encoder^* \x)
    + \frac{1}{2n}\trace(\x\transpose Z_1\transpose ((\decoder^*)\transpose \decoder^* \circ P_L) Z_1 \x)
    ]
\end{split}
\end{align}

Let $Z = \begin{bmatrix} u_1 & 0\end{bmatrix} \in \real^{m \times k}$, where $u_1 \in \real^{m}$ is the first column of $U$. Let $Z_1\transpose = Z_2 = Z$, we have the following identity,
\begin{align}
\label{eq:exp_nd_z_identity}
    Z\transpose Z &= U \transpose Z = \mathrm{diag}(1, 0, \dots, 0) \in \real^{k\times k}
\end{align}

Substituting~\eqref{eq:exp_nd_w1*},~\eqref{eq:exp_nd_w2*} into~\eqref{eq:exp_nd_2nd_order}, and applying identity~\eqref{eq:exp_nd_z_identity}, the second order term in $g(\alpha)$ becomes,
\begin{align*}
        \frac{1}{2} \alpha\sq g''(0) &= \alpha\sq \cdot p_1 \sigma_1\sq (1 + \frac{1}{2}(q_1\sq + \frac{1}{q_1\sq})) \geq \alpha\sq \cdot 2 p_1 \sigma_1\sq\\
        \implies g''(0) &\geq 4 p_1 \sigma_1\sq 
\end{align*}

Applying Lemma~\ref{lemma:path_curvature} and notice that $\fnorm{Z}=1$, we can get a lower bound for the largest singular value of the Hessian $H$,
\begin{align*}
    s_\mathrm{max} (H) 
    &\geq \frac{
        \vecop{\begin{bmatrix}Z_1\transpose & Z_2\end{bmatrix}}\transpose
        H
        \vecop{\begin{bmatrix}Z_1\transpose & Z_2\end{bmatrix}}
    }{
        \fnorm{\begin{bmatrix}Z_1\transpose & Z_2\end{bmatrix}}\sq
    }
    =\frac{g''(0)}{2 \fnorm{Z}\sq}
    \geq 2 p_1 \sigma_1\sq
\end{align*}

\paragraph{Rotation curvature}
We use a similar approach as in Appendix~\ref{app:hessian_curvature} to get a upper bound for the smallest singular value of the Hessian matrix. We consider paths along the (scaled) rotation manifold,
\begin{align*}
    \encoder &= Q R(\theta)Q^{-1}\encoder^*\\
    \decoder &= \decoder^* Q R(\theta)\transpose Q^{-1}
\end{align*}
where $R(\theta)$ is a rotation matrix parameterized by $\theta$, representing the rotation of the $i^{th}$ and $j^{th}$ dimensions only (a Givens rotation). 
\begin{align}
\label{eq:exp_nd_rotation_loss}
\begin{split}
    &\mathcal{L}_{\mathrm{ND}}(\encoder, \decoder; \x) = \mathrm{Const}
    - \frac{1}{n}\trace
    \bigg(
        \x\transpose \decoder^*  Q R(\theta)\transpose Q^{-1} P_D Q R(\theta) Q^{-1} \encoder^* \x
    \bigg)\\
    &+ \frac{1}{2n}\trace\bigg(
        \x\transpose (\encoder^*)\transpose Q^{-1} R(\theta)\transpose Q \bigg(
            Q^{-1} R(\theta) Q (\decoder^*)\transpose \decoder^* Q R(\theta)\transpose Q^{-1} \circ P_L
        \bigg)
        Q R(\theta) Q^{-1} \encoder^* \x
    \bigg)
\end{split}
\end{align}
Without loss of generality, we consider the loss in the $2 \times 2$ case ($i^{th}$ and $j^{th}$ dimensions only), and denote all terms independent of $\theta$ as $\mathrm{Const}$. Substituting~\eqref{eq:exp_nd_w1*} and~\eqref{eq:exp_nd_w2*} into~\eqref{eq:exp_nd_rotation_loss}, 
\begin{align*}
    \mathcal{L}_{\mathrm{ND}}(\encoder, \decoder; \x)
    &= \mathrm{Const} 
    - \frac{1}{2}\trace(\begin{bmatrix}
        \sigma_i\sq\\
        & \sigma_j\sq
    \end{bmatrix}
    R(\theta)\transpose
    \begin{bmatrix}
    p_i\\
    & p_j
    \end{bmatrix}
    R(\theta)
    )\\
    &= \mathrm{Const} - \frac{1}{2} [(\sigma_i\sq p_i + \sigma_j\sq p_j) \cos\sq \theta + (\sigma_j\sq p_i + \sigma_i\sq p_j) \sin\sq \theta]
\end{align*}

We can compute the derivatives of the objective with respect to $\theta$,
\begin{align*}
    \frac{\partial}{\partial \theta} \mathcal{L}_{\mathrm{ND}} (\encoder, \decoder; \x) &= \frac{1}{2}(\sigma_i\sq - \sigma_j\sq) (p_i - p_j) \sin 2\theta\\
    \frac{\partial\sq}{\partial \theta\sq} \mathcal{L}_{\mathrm{ND}} (\encoder, \decoder; \x)\Big\vert_{\theta=0} &= (\sigma_i\sq - \sigma_j\sq) (p_i - p_j) \cos 2\theta \Big\vert_{\theta=0} = (\sigma_i\sq - \sigma_j\sq) (p_i - p_j)
\end{align*}

Also, we compute the Frobenius norm of the path derivative. We use $U_{i,j} \in \real^{m \times 2}$ to denote the matrix containing only the $i^{th}$ and $j^{th}$ columns of $U$.
\begin{align*}
    &\Big\Vert \frac{d}{d \theta} \encoder\transpose \Big\Vert_F\sq 
    = \Big\Vert U_{i,j} R(\theta + \frac{\pi}{2})\transpose Q \Big\Vert_F\sq
    = q_i\sq + q_j\sq\\
    &\Big\Vert \frac{d}{d \theta} \decoder \Big\Vert_F\sq 
    = \Big\Vert U_{i,j} R(\theta + \frac{\pi}{2})\transpose Q^{-1} \Big\Vert_F\sq
    = \frac{1}{q_i\sq} + \frac{1}{q_j\sq}\\
    \implies &\Big\Vert\frac{d}{d \theta} \begin{bmatrix}\encoder\transpose & \decoder \end{bmatrix}\Big\Vert_F\sq 
    = q_i\sq + \frac{1}{q_i\sq} + q_j\sq + \frac{1}{q_j\sq}
    \geq 4
\end{align*}

Applying Lemma~\ref{lemma:path_curvature}, we obtain an upper bound for the smallest singular value of the Hessian,
\begin{align*}
    s_{\mathrm{min}} \leq \frac{
        \frac{\partial\sq}{\partial \theta\sq} \mathcal{L}_{\mathrm{ND}} (\encoder, \decoder; \x)\Big\vert_{\theta=0}
    }{
        \Big\Vert\frac{d}{d \theta} \begin{bmatrix}\encoder\transpose & \decoder \end{bmatrix} \Big\Vert_F\sq \Big\vert_{\theta=0}
    }
    \leq \frac{(\sigma_i\sq - \sigma_j\sq) (p_i - p_j)}{4}
\end{align*}

\paragraph{Conditioning of the objective}
Combining the lower bound of the largest singular value with the upper bound of the smallest singular value of the Hessian matrix, we obtain a lower bound on the condition number,
\begin{align*}
    \frac{8 p_1 \sigma_1\sq}{(\sigma_i\sq - \sigma_j\sq)(p_i - p_j)}
\end{align*}
The condition number is controlled by the choice of the cumulative keep probabilities $p_1, \dots, p_k$. Thus, the condition number can be further lower bounded by the solution of the following optimization problem,
\begin{align*}
    \mathrm{cond}(H) \geq \min_{p_1, \dots, p_k} \max_{i > j} \frac{8 p_1 \sigma_1\sq}{(\sigma_i\sq - \sigma_j\sq)(p_i - p_j)}
\end{align*}
The inner optimization problem can be reduced to a search over a single index $i$, with $j = i + 1$. The minimum of the outer optimization problem is achieved when the inner objective is constant for all $i = 1, \dots, k-1$ (otherwise we can adjust $p_1, \dots, p_k$ to make the inner objective smaller). We denote the constant as $c$, and lower bound it as follows,
\begin{align*}
    &\frac{1}{c (\sigma_i\sq - \sigma_j\sq)} = \frac{(p_i - p_j)}{8 p_1 \sigma_1\sq},~~\forall i = 1, \dots, k-1\\
    \implies &\frac{1}{c} \sum_{i=1}^{k-1} \frac{1}{\sigma_i\sq - \sigma_j\sq} 
    = \sum_{i=1}^{k-1} \frac{(p_i - p_j)}{8 p_1 \sigma_1\sq} 
    = \frac{p_1 - p_k}{8 p_1 \sigma_1\sq}
    < \frac{1}{8 \sigma_1\sq}\\
    \implies &c > 8 \sigma_1\sq \sum_{i=1}^{k-1} \frac{1}{\sigma_i\sq - \sigma_j\sq}
    \geq \frac{8 \sigma_1\sq (k-1)\sq}{\sigma_1\sq - \sigma_k\sq}
\end{align*}

The last inequality is achieved when all terms in the summation are equal.
The lower bound of the condition number of the Hessian matrix is,
\begin{align*}
    \mathrm{cond}(H) > \frac{8 \sigma_1\sq (k-1)\sq}{\sigma_1\sq - \sigma_k\sq}
\end{align*}

Note that this lower bound will be looser if we do not have the prior knowledge of $\sigma_1, \dots, \sigma_k$, in order to set $p_1, \dots, p_k$ appropriately.

\section{Deferred proofs}
\label{appendix:deferred_proofs}

\subsection{Proof of the Transpose Theorem}
\label{app:proofs:transpose}

The proof of the transpose theorem relied on Lemma~\ref{lemma:c_positive_semi_def} (stated below). This result was essentially proved in \citet{ae-loss-landscape}. We reproduce the statement and proof here for completeness, which deviates trivially from the original proof.

\begin{lemma}
\label{lemma:c_positive_semi_def}
The matrix $C = (I-\decoder\encoder)\x\x^\top$ is positive semi-definite at stationary points.
\end{lemma}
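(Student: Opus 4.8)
The plan is to read off the two first-order stationarity conditions for the objective~\eqref{eq:nonuni-loss}, eliminate one of the two weight matrices, and thereby force $C$ to satisfy a single matrix identity whose structure makes symmetry and positive semi-definiteness transparent. Writing $\Sigma = \x\x\transpose$, differentiating and setting $\nabla_{\encoder}\loss_{\sigma'} = 0$ and $\nabla_{\decoder}\loss_{\sigma'} = 0$ should give
\begin{align*}
\decoder\transpose C = n\Lambda\encoder, \qquad C\encoder\transpose = n\decoder\Lambda,
\end{align*}
where $C = (\identity - \decoder\encoder)\Sigma$; the factor $n$ comes from the $\tfrac{1}{n}$ in the reconstruction term, and $\Lambda$ is invertible since its diagonal entries are strictly positive.

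The central step is to eliminate the decoder. Because $\Lambda \succ 0$, the second relation gives $\decoder = \tfrac{1}{n}C\encoder\transpose\Lambda^{-1}$, and substituting this into the definition $C = \Sigma - \decoder\encoder\Sigma$ collapses everything into
\begin{align*}
C = \Sigma - CK\Sigma, \qquad\text{equivalently}\qquad C(\identity + K\Sigma) = \Sigma,
\end{align*}
where $K = \tfrac{1}{n}\encoder\transpose\Lambda^{-1}\encoder \succeq 0$. This single identity is the crux: it expresses $C$ through the fixed PSD matrices $\Sigma$ and $K$, folding all the critical-point data into $K$. I would first check $\identity + K\Sigma$ is invertible, since the nonzero eigenvalues of $K\Sigma$ coincide with those of the PSD matrix $\Sigma^{1/2}K\Sigma^{1/2}$ and are therefore nonnegative, so the spectrum of $\identity + K\Sigma$ is bounded below by $1$. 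Hence $C = \Sigma(\identity + K\Sigma)^{-1} = (\identity + \Sigma K)^{-1}\Sigma$, the two forms agreeing because $\Sigma(\identity + K\Sigma) = (\identity + \Sigma K)\Sigma$; taking transposes of either form then yields $C = C\transpose$.

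The hard part is that $K\Sigma$ is \emph{not} symmetric, so one cannot conclude definiteness of $C$ by a naive congruence. The resolution is to test $C$ against an arbitrary vector $v$ after the substitution $p = (\identity + K\Sigma)^{-1}v$, which turns the quadratic form into a sum of two manifestly nonnegative pieces,
\begin{align*}
v\transpose C v = p\transpose\Sigma(\identity + K\Sigma)p = p\transpose\Sigma p + (\Sigma p)\transpose K(\Sigma p) \ge 0.
\end{align*}
This establishes $C \succeq 0$. Note the argument uses only $\Sigma \succeq 0$ and $\Lambda \succ 0$, so it does not require $\x\x\transpose$ to be full rank; when it is, the same computation upgrades the conclusion to $C \succ 0$. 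This reproduces, up to cosmetic changes, the corresponding step in~\citet{ae-loss-landscape}.
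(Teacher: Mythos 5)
Your proof is correct, but it takes a genuinely different route from the paper's. The paper uses only the stationarity condition for $\decoder$, right-multiplies it by $\decoder\transpose$ to obtain $\x\x\transpose(\decoder\encoder)\transpose = \decoder\encoder\,\x\x\transpose(\decoder\encoder)\transpose + \decoder\Lambda\decoder\transpose \succeq \decoder\encoder\,\x\x\transpose(\decoder\encoder)\transpose$, and then invokes a cancellation lemma for positive semi-definite matrices (Lemma~B.1 of \citet{ae-loss-landscape}) to strip the common factor $(\decoder\encoder)\transpose$ and conclude $C \succeq 0$. You instead solve that same stationarity condition for $\decoder$ explicitly, substitute back into the definition of $C$, and arrive at the closed form $C = \Sigma(\identity + K\Sigma)^{-1}$ with $K = \tfrac{1}{n}\encoder\transpose\Lambda^{-1}\encoder \succeq 0$ and $\Sigma = \x\x\transpose$; the invertibility of $\identity + K\Sigma$ (via the spectrum of $\Sigma^{1/2}K\Sigma^{1/2}$), the symmetry $C = C\transpose$, and the decomposition $v\transpose C v = p\transpose\Sigma p + (\Sigma p)\transpose K(\Sigma p) \ge 0$ with $p = (\identity+K\Sigma)^{-1}v$ all check out. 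What your route buys: it is fully self-contained (no appeal to the external cancellation lemma), it makes the symmetry of $C$ explicit rather than implicit, and it isolates exactly which hypotheses are used ($\Sigma \succeq 0$ and $\Lambda \succ 0$), which is why you get the strict-definiteness upgrade for free when $\Sigma$ is full rank. What the paper's route buys: it is shorter and reuses machinery already established in \citet{ae-loss-landscape}. One small caveat: your closing remark that this ``reproduces, up to cosmetic changes'' the step in \citet{ae-loss-landscape} undersells the difference --- their argument is the inequality-plus-cancellation one, not your closed-form resolvent computation.
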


\begin{proof}
At stationary points we have,
\[\nabla_{W_2} \mathcal{L}_{\sigma'} = 2(W_2 W_1 - I)XX^T W^T_1 + 2 W_2 \Lambda = 0\]
Multiplying on the right by $\decoder^\top$ and rearranging gives,
\[\x\x^\top (\decoder \encoder)^\top = \decoder\encoder\x\x^\top (\decoder\encoder)^\top + \decoder \Lambda \decoder^\top \]
Both terms on the right are positive definite, thus,
\[ \x\x^\top (\decoder \encoder)^\top \succeq \decoder\encoder\x\x^\top (\decoder\encoder)^\top. \]
By Lemma~B.1 in \citet{ae-loss-landscape}, we can cancel $(\decoder \encoder)^\top$ on the right\footnote{This result is a simple consequence of properties of positive semi-definite matrices.} and recover $C \succeq 0$.
\end{proof}

Using Lemma~\ref{lemma:c_positive_semi_def}, we proceed to prove Theorem~\ref{th:nonuni_transpose} (the Transpose Theorem).
\begin{proof}[Proof of Theorem~\ref{th:nonuni_transpose}]
All stationary points must satisfy,
\begin{align*}
    \nabla_{W_1} \mathcal{L}_{\sigma'} &= \frac{2}{n} W\transpose_2 (W_2 W_1 - I)XX\transpose + 2 \Lambda W_1 = 0 \\
    \nabla_{W_2} \mathcal{L}_{\sigma'} &= \frac{2}{n}(W_2 W_1 - I)XX\transpose W\transpose_1 + 2 W_2 \Lambda = 0
\end{align*}
We have,
\begin{align*}
    0 &= \nabla_{W_1} \mathcal{L}_{\sigma'} - \nabla_{W_2} \mathcal{L}_{\sigma'}\transpose \\
    &= \frac{2}{n}(W_1 - W_2\transpose)(I-W_1 W_2)XX\transpose + 2\Lambda(W_1 - W_2\transpose)
\end{align*}
By Lemma~\ref{lemma:c_positive_semi_def}, we know that $C = \frac{1}{n}(I-W_1 W_2)XX\transpose$ is positive semi-definite. Further, writing $A = W_1 - W_2\transpose$,
\[0 = v\transpose A C A\transpose v + v\transpose\Lambda AA\transpose v, \: \forall v\]

As $ACA\transpose \succeq 0$, we must have $\forall v,~v\transpose\Lambda AA\transpose v \leq 0$. Consider setting $v = e_i$, where $e_i$ is the $i^{th}$ coordinate vector in $\real^{k}$ ($i^{th}$ entry is 1, and all other entries are 0). We have,
\[e_i\transpose \Lambda AA\transpose e_i = \lambda_i \Vert A_i \Vert_2\sq \leq 0,\]
where $A_i$ denotes the $i^{th}$ row of $A$.
Since $\lambda_i > 0$, we have $A_i = 0$. Since this holds for all $i = 1, \dots, k$, we have $A = 0$.
\end{proof}

\subsection{Proof of the Landscape Theorem}
\label{app:proofs:landscape}
Before proceeding with our proof of the Landscape Theorem (Theorem~\ref{th:nonuni_landscape}), we will require the following Lemmas. We begin by proving a weaker version of the landscape theorem (Lemma~\ref{lemma:nonuni_landscape_orth}), which allows for symmetry via orthogonal transformations.

$\indset \subset \{ 1, \cdots, m\}$ contains the indices of the learned dimensions.
We define $\sigmadiag_\indset$, $\Lambda_\indset$, $U_\indset$ and $I_\indset$ similarly as in~\citet{ae-loss-landscape}.
\begin{itemize}
    \item $l = |\indset|$. $i_1 < \cdots < i_l$ are increasing indices in $\indset$. We use subscript $l$ to denote matrices of dimension $l \times l$.
    \item $\sigmadiag_\indset = \mathrm{diag}(\sigma_{i_1}, \dots, \sigma_{i_l}) \in \real^{l \times l}$, 
    $\Lambda_\indset = \mathrm{diag}(\lambda_{i_1}, \dots, \lambda_{i_l}) \in \real^{l \times l}$
    \item $U_\indset \in \real^{m\times l}$ has the columns in $U$ with indices $i_1, \dots, i_l$.
    \item $I_\indset \in \real^{m\times l}$ has the columns in the $m \times m$ identity matrix with indices $i_1, \dots, i_l$.
\end{itemize}
\begin{lemma}[Weak Landscape Theorem]
\label{lemma:nonuni_landscape_orth}
All stationary points of~\eqref{eq:nonuni-loss} have the form:
    \begin{align*}
        \encoder &= O (I_l - \Lambda \sigmadiag_\indset^{-2})^{\frac{1}{2}} U_\indset\transpose\\
        \decoder &= U_\indset (I_l - \Lambda \sigmadiag_\indset^{-2})^{\frac{1}{2}} O\transpose
    \end{align*}
    where $O \in \real^{k \times k}$ is an orthogonal matrix.
\end{lemma}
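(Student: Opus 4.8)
The plan is to follow the strategy of \citet{ae-loss-landscape}, adapted to the non-scalar regularizer $\Lambda$. First I would invoke the Transpose Theorem (Theorem~\ref{th:nonuni_transpose}) to set $\encoder = \decoder\transpose$, writing $W := \decoder \in \real^{m\times k}$, so that the only remaining stationarity condition is the decoder equation $\nabla_{W_2}\loss_{\sigma'} = 0$. Writing $\Sigma := \frac{1}{n}\x\x\transpose$ for the data covariance and substituting $W_1 = W\transpose$, $W_2 = W$, this condition becomes
\[ (I - WW\transpose)\Sigma W = W\Lambda. \]
Everything then reduces to characterizing the matrices $W$ solving this single matrix equation, and reconstructing $\encoder,\decoder$ from $W$.

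The first key step is to show that $P := WW\transpose$ commutes with $\Sigma$. Right-multiplying the stationarity equation by $W\transpose$ gives $(I - P)\Sigma P = W\Lambda W\transpose$; the right-hand side is symmetric because $\Lambda$ is diagonal, so the left-hand side must equal its own transpose $P\Sigma(I-P)$, and expanding this identity collapses to $\Sigma P = P\Sigma$. This is exactly the place where the argument of \citet{ae-loss-landscape} must be re-examined, since $\Lambda$ is no longer a multiple of the identity; fortunately only symmetry of $W\Lambda W\transpose$ is used, so the commutation survives. Since $P$ is symmetric, PSD, and commutes with $\Sigma$, the two are simultaneously diagonalizable and $\mathrm{range}(W) = \mathrm{range}(P)$ is $\Sigma$-invariant, hence a sum of eigenspaces of $\Sigma$. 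Using distinctness of the top eigenvalues, this selects an index set $\indset \subset \{1,\dots,m\}$ with $|\indset| = l = \mathrm{rank}(W)$, and I may write $W = U_\indset B$ for some $B \in \real^{l\times k}$.

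The second key step is to solve the reduced equation. Left-multiplying by $U_\indset\transpose$ and using $U_\indset\transpose \Sigma = \sigmadiag_\indset^2 U_\indset\transpose$ yields $(I_l - BB\transpose)\sigmadiag_\indset^2 B = B\Lambda$, where $BB\transpose$ is diagonal because it commutes with the distinct diagonal $\sigmadiag_\indset^2$. Reading off the diagonal pins down the singular values of $B$, giving $BB\transpose = I_l - \Lambda_\indset \sigmadiag_\indset^{-2}$ (up to the pairing of latent columns with eigenvectors, which is absorbed into $O$), so that $B = (I_l - \Lambda_\indset \sigmadiag_\indset^{-2})^{1/2} O\transpose$ with $O\transpose = (BB\transpose)^{-1/2}B$ having orthonormal rows (and $O$ completed to an orthogonal matrix if desired). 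Substituting back recovers the claimed forms for $\encoder$ and $\decoder$. Positivity of the scaling, $1 - \lambda_i\sigma_i^{-2} > 0$, is then automatic, since $BB\transpose \succeq 0$ forces any violating direction out of $\indset$.

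I expect the main obstacle to be the careful bookkeeping around degeneracies and the index set, rather than the commutation identity. Specifically, the eigenvalues of $\Sigma$ outside the top $k$ may be repeated or zero, so I must argue that $\mathrm{range}(W)$ is genuinely a span of a well-defined subset of eigenvectors and that the matching of $\Lambda$ entries to $\sigmadiag_\indset$ entries is consistent; the orthogonal matrix $O$ is precisely what absorbs the remaining relabeling ambiguity at this weak stage, before the Givens-rotation argument underlying Theorem~\ref{th:nonuni_landscape} refines $O$ down to a signed permutation.
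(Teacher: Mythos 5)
Your proposal is correct, but it takes a genuinely different route from the paper. The paper first changes variables to $Q_1 = \encoder U$, $Q_2 = U\transpose \decoder$, rewrites the loss in trace form, derives the identity $2Q\sigmadiag^2 Q = Q^2\sigmadiag^2 + \sigmadiag^2 Q^2$ for the \emph{inner} Gram matrix $Q = Q_1\transpose Q_1$, proves $Q$ is diagonal by an explicit block-induction (Lemma~\ref{lemma:q_diagonal}), and then finishes with an idempotency argument borrowed from Proposition~4.3 of \citet{ae-loss-landscape}. You instead work directly with the first-order condition $(I - WW\transpose)\Sigma W = W\Lambda$ and establish commutation of the \emph{outer} Gram matrix $WW\transpose$ with $\Sigma$ via symmetry of $W\Lambda W\transpose$ --- the classical Baldi--Hornik-style argument --- before decomposing $W = U_\indset B$ and solving the reduced $l\times k$ equation. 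Your route is more elementary (no trace-form rewrite, no induction lemma) and, as you note, correctly isolates the only place where non-scalar $\Lambda$ could break the classical argument. One observation you appear to miss: your own reduced equation, read entrywise as $(1-d_i)\sigma_{i_i}^2 B_{ij} = \lambda_j B_{ij}$ with $BB\transpose = \mathrm{diag}(d_1,\dots,d_l)$, combined with distinctness of the $\lambda_j$, forces each row of $B$ to have exactly one nonzero entry --- so $O$ is already a signed partial permutation and your argument delivers the \emph{strong} Landscape Theorem (Theorem~\ref{th:nonuni_landscape}) directly, short-circuiting the Givens-rotation refinement; there is no genuinely orthogonal ambiguity left to absorb at the ``weak stage.'' The bookkeeping you flag (possible repeated or zero eigenvalues outside the top $k$, and the pairing of $\lambda$'s with $\sigma$'s) is a real burden, but it is shared equally by the paper's proof, which also assumes distinct diagonal entries in Lemma~\ref{lemma:q_diagonal}; note also that zero eigenvalues are excluded automatically since $(I_l - BB\transpose)\sigmadiag_\indset^2 BB\transpose = B\Lambda B\transpose \succ 0$ forces $\sigma_{i}^2 > 0$ and $d_i < 1$ for every $i \in \indset$, as you anticipate.
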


To prove Lemma~\ref{lemma:nonuni_landscape_orth}, we introduce Lemma~\ref{lemma:q_diagonal} and Lemma~\ref{lemma:q_stationary_form}.

\begin{lemma}
\label{lemma:q_diagonal}
Given a symmetric matrix $\Q \in \real^{m \times m}$, and diagonal matrix $\Dmatrix \in \real^{m \times m}$. If $\Dmatrix$ has distinct diagonal entries, and $\Q, \Dmatrix$ satisfy
\begin{align}\label{eq:q_diagonal_condition}
    2\Q\Dmatrix^2 \Q = \Q^2 \Dmatrix^2 + \Dmatrix^2 \Q^2
\end{align}
Then $\Q$ is diagonal.
\end{lemma}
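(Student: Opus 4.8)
The plan is to extract scalar consequences from only the \emph{diagonal} entries of the matrix identity, and then peel off rows and columns of $Q$ one at a time by an extremal argument. Write $b_i = d_i^2$ for the squared diagonal entries of $D$, so the hypothesis reads $2QBQ = Q^2 B + B Q^2$ with $B = D^2 = \mathrm{diag}(b_1,\dots,b_m)$. Using that $Q$ is symmetric ($q_{ki} = q_{ik}$) and that $B$ is diagonal, the $(i,i)$ entry of $2QBQ - Q^2 B - BQ^2 = 0$ collapses to
\[
\sum_{k=1}^m q_{ik}^2\,(b_k - b_i) = 0 \qquad \text{for every } i,
\]
which is the only family of equations I will need; the off-diagonal entries will turn out to be superfluous.

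First I would note that a single one of these equations is not enough on its own, so I would induct on $m$ via an extremal index. Pick $i^\star$ attaining the maximum of $b_1,\dots,b_m$. Then every summand $q_{i^\star k}^2(b_k - b_{i^\star})$ is nonpositive, and --- this is exactly where distinctness enters --- for $k \neq i^\star$ it is \emph{strictly} negative unless $q_{i^\star k} = 0$. Since the sum vanishes and all terms are $\le 0$, each term must be $0$, forcing $q_{i^\star k} = 0$ for all $k \neq i^\star$; by symmetry $q_{k i^\star} = 0$ as well. Hence row and column $i^\star$ of $Q$ are purely diagonal.

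Next I would delete the index $i^\star$ and recurse. The point to verify is that the reduced equations retain their form: for any surviving index $i$, the $k=i^\star$ term of $\sum_k q_{ik}^2(b_k-b_i)$ already vanishes because $q_{i i^\star}=0$, so the equation restricted to the surviving indices is precisely the diagonal identity for the corresponding principal submatrix of $Q$ (with the matching entries of $B$). Applying the same extremal step to the submatrix and iterating eliminates every off-diagonal entry, so $Q$ is diagonal.

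The one genuine subtlety --- and the main thing to get right --- is the distinctness hypothesis. The argument needs the strict inequality $b_k \neq b_i$ for $k \neq i$, i.e.\ distinctness of the \emph{squared} entries $d_i^2$, not merely of the $d_i$: if $D = \mathrm{diag}(1,-1)$ then $B = I$ and every symmetric $Q$ satisfies the identity. This is harmless in our setting, where the lemma is applied with $D = \sigmadiag$ whose entries $\sigma_i$ are distinct and positive, so the $\sigma_i^2$ are distinct; I would state the proof under ``the entries of $D^2$ are distinct'' and note this holds here. As an alternative route one can rewrite the hypothesis as the double commutator $[Q,[Q,B]] = 0$, using $Q^2 B - 2QBQ + BQ^2 = [Q,[Q,B]]$; diagonalizing the symmetric $Q$ as $V\,\mathrm{diag}(\mu)\,V^{\top}$ gives $(\mu_i-\mu_j)^2 (V^{\top} B V)_{ij} = 0$, and since $Q$ has real eigenvalues this already forces $[Q,B]=0$, after which $Q$ commuting with the distinct-entry diagonal $B$ makes it diagonal. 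I find the extremal-index argument cleaner and would present it as the main proof.
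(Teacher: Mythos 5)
Your proof is correct, and it takes a genuinely different route from the paper's. The paper argues by induction on the dimension: it writes $Q_{l+1}$ in block form with an $l\times l$ principal block $Q_l$, a border vector $a$, and a scalar, equates the top-left $l\times l$ blocks of the two sides of $2QD^2Q = Q^2D^2 + D^2Q^2$, and invokes the inductive hypothesis to cancel the $Q_l$-terms, leaving $2d^2\,a a^\top = a a^\top D_l^2 + D_l^2 a a^\top$ and hence $a_i^2(s_i^2 - d^2)=0$, so the border vanishes. You instead read off only the diagonal entries of the full identity, obtaining $\sum_k q_{ik}^2(d_k^2 - d_i^2) = 0$ for every $i$, and eliminate rows one at a time by an extremal sign argument. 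Your version is more economical: it requires no block bookkeeping, and it sidesteps the slightly delicate point in the paper's induction of justifying that the hypothesis applies to the principal submatrix — in your recursion this is immediate, since the $k=i^\star$ term of each surviving equation has already been shown to vanish. Your double-commutator remark, $Q^2D^2 - 2QD^2Q + D^2Q^2 = [Q,[Q,D^2]]$, is also a clean high-level explanation that yields $[Q,D^2]=0$ directly from the reality of $Q$'s spectrum. Finally, you are right to flag that the operative hypothesis is distinctness of the entries of $D^2$ rather than of $D$: the paper's proof passes from distinctness of the $d_i$ to $s_i^2 - d^2 \neq 0$, which fails for, e.g., $D=\mathrm{diag}(1,-1)$; as you note, this is harmless in the present application, where the lemma is invoked with $D=S$ having distinct positive entries.
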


\begin{proof}[\textbf{Proof of Lemma~\ref{lemma:q_diagonal}}]
We prove Lemma~\ref{lemma:q_diagonal} using induction. We use subscript $l$ to denote matrices of dimension $l \times l$.

When $l=1$, $\Q_l$ is trivially diagonal, and Equation (\ref{eq:q_diagonal_condition}) always holds.

Assume for some $l \geq 1$, $\Q_l$ is diagonal and satisfies (\ref{eq:q_diagonal_condition}) for subscript $l$.

We have for dimension $l \times l$:
\begin{align}
\label{eq:induction_condition_l}
    2\Q_l \Dmatrix_l^2 \Q_l = \Q_l^2 \Dmatrix_l^2 + \Dmatrix_l^2 \Q_l^2
\end{align}
We write $\Q_{l+1}$ and $\Dmatrix^2_{l+1}$ in the following form ($\avec \in \real^{l \times 1}$, $q, s$ are scalars)
\begin{align*}
    \Q_{l+1} = \begin{bmatrix}
        \Q_l                &   \avec\\
        \avec\transpose    &   q
    \end{bmatrix}
    ~~~~~
    \Dmatrix_{l+1}^2 = \begin{bmatrix}
        \Dmatrix_l^2          &  \zerovec \\
        \zerovec\transpose    &   d^2
    \end{bmatrix}
\end{align*}

Expand the LHS and RHS of Equation (\ref{eq:q_diagonal_condition}) for subscript $l+1$:
\begin{align}
\nonumber
    &~~~~2 \Q_{l+1} \Dmatrix_{l+1}^2 \Q_{l+1} 
    = 2
    \begin{bmatrix}
        \Q_l                &   \avec\\
        \avec\transpose    &   q
    \end{bmatrix}\begin{bmatrix}
        \Dmatrix_l^2              &   \zerovec\\
        \zerovec\transpose        &   d^2
    \end{bmatrix}
    \begin{bmatrix}
        \Q_l                &   \avec\\
        \avec\transpose    &   q
    \end{bmatrix}\\
    \label{eq:lhs_expand}
    &= 2 \begin{bmatrix}
        \Q_l \Dmatrix_l^2 \Q_l + d^2 \avec\avec\transpose
    &   \Q_l \Dmatrix_l^2 \avec + d^2 q \avec\\
        \avec\transpose \Dmatrix_l^2 \Q_l + d^2 q \avec\transpose
    &   \avec\transpose \Dmatrix_l^2 \avec + d^2 q^2
    \end{bmatrix}
\end{align}

\begin{align*}
    \Q_{l+1}^2 \Dmatrix_{l+1}^2 + \Dmatrix_{l+1}^2 \Q_{l+1}^2
    &= \begin{bmatrix}
        \Q_l                &   \avec\\
        \avec\transpose    &   q
    \end{bmatrix}^2
    \begin{bmatrix}
        \Dmatrix_l^2              &   \zerovec\\
        \zerovec\transpose        &   d^2
    \end{bmatrix}
    + \begin{bmatrix}
        \Dmatrix_l^2              &   \zerovec\\
        \zerovec\transpose        &   d^2
    \end{bmatrix} \begin{bmatrix}
        \Q_l                &   \avec\\
        \avec\transpose    &   q
    \end{bmatrix}^2\\
    &= \begin{bmatrix}
        \mathrm{RHS}_{1:l, 1:l}  &   \mathrm{RHS}_{1:l, l+1}\\
        \mathrm{RHS}_{l+1, 1:l}  &   \mathrm{RHS}_{l+1, l+1}
    \end{bmatrix}
\end{align*}

\begin{align}
    \label{eq:rhs_1_l}
    \mathrm{RHS}_{1:l, 1:l} &= \Q_l^2\Dmatrix_l^2 + \Dmatrix_l^2\Q_l^2 + \avec\avec\transpose \Dmatrix_l^2 + \Dmatrix_l^2 \avec\avec\transpose
\end{align}


Equate the 1 to $l^{th}$ row and column of LHS and RHS (top-left of Equation (\ref{eq:lhs_expand}) and (\ref{eq:rhs_1_l})), and apply the induction assumption (\ref{eq:induction_condition_l}):
\begin{align*}
    2 d^2 \avec\avec\transpose &= \avec\avec\transpose \Dmatrix_l^2 + \Dmatrix_l^2 \avec\avec\transpose\\
    \Longrightarrow \zerovec &= \avec\avec\transpose (\Dmatrix_l^2 - d^2 \identity) + (\Dmatrix_l^2 - d^2 \identity) \avec\avec\transpose\\
    \Longrightarrow 0 &= 2 a_i^2 (s_i^2 - d^2),~~\forall i = 1, \cdots, l,~~\Dmatrix_l^2 = \mathrm{diag}(s_1^2, \cdots, s_l^2)
\end{align*}

Since $\Dmatrix_{l+1}$ is a diagonal matrix with distinct diagonal entries, $s_i^2 - d^2 \neq 0$ for $\forall i = 1, \cdots, l$. Hence $\avec = \zerovec$, and $\bm{Q}_{l+1}$ is diagonal.

It's easy to check that $\avec = \zerovec$ satisfies Equation (\ref{eq:q_diagonal_condition}), hence diagonal $\Q_{l+1}$ is a valid solution.

By induction, $\Q \in \real^{m \times m}$ is diagonal.
\end{proof}

\begin{lemma}
\label{lemma:q_stationary_form}
Consider the loss function,
\begin{align*}
\begin{split}
    \tilde{L}(Q_1, Q_2) = \mathrm{tr}(Q_2Q_1\sigmadiag^2 Q\transpose_1 Q\transpose_2 - 2Q_2Q_1\sigmadiag^2 \\+ 2Q_1Q_2\Lambda + \sigmadiag^2)
\end{split}
\end{align*}
where $\sigmadiag\sq = \mathrm{diag}(\sigma_1\sq, \dots, \sigma_k\sq)$, $\Lambda = \mathrm{diag}(\lambda_1, \dots, \lambda_k)$ are diagonal matrices with distinct positive elements, and $\forall i = 1, \dots, k, \sigma_i\sq > \lambda_i$. Then all stationary points satisfying $Q_1\transpose=Q_2$ are of the form,
\[Q_1 = O(I_l -\Lambda_\indset\sigmadiag_\indset^{-2})^{\frac{1}{2}}I_\indset\transpose\]

\end{lemma}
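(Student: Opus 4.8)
The plan is to distill the two stationarity conditions into a single quadratic matrix equation that exactly matches the hypothesis of Lemma~\ref{lemma:q_diagonal}, deduce that $Q_1\transpose Q_1$ is diagonal, and then read off the remaining structure entrywise.

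First I would write out $\nabla_{Q_1}\tilde{L} = 0$ and $\nabla_{Q_2}\tilde{L} = 0$. Setting $Q := Q_1$ and imposing the constraint $Q_2 = Q_1\transpose = Q\transpose$, a short computation gives the pair
\begin{align*}
Q Q\transpose Q \sigmadiag\sq - Q \sigmadiag\sq + \Lambda Q &= 0, \\
Q \sigmadiag\sq Q\transpose Q - Q \sigmadiag\sq + \Lambda Q &= 0,
\end{align*}
where the second line is the transpose of the $\nabla_{Q_2}$ equation. The two equations share the terms $-Q\sigmadiag\sq + \Lambda Q$, so subtracting them eliminates the regularization entirely and leaves the consistency relation $Q Q\transpose Q \sigmadiag\sq = Q \sigmadiag\sq Q\transpose Q$.

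The key step is to left-multiply this relation by $Q\transpose$ and abbreviate $N := Q\transpose Q$ (symmetric, positive semidefinite). This yields $N\sq \sigmadiag\sq = N \sigmadiag\sq N$; transposing gives $\sigmadiag\sq N\sq = N \sigmadiag\sq N$, and summing the two produces
\[ 2 N \sigmadiag\sq N = N\sq \sigmadiag\sq + \sigmadiag\sq N\sq. \]
This is precisely the hypothesis of Lemma~\ref{lemma:q_diagonal} with $\Dmatrix\sq = \sigmadiag\sq$, whose diagonal entries $\sigma_1\sq, \dots, \sigma_k\sq$ are distinct; hence $N = Q\transpose Q$ is diagonal. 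Finally, I would substitute this diagonality back into the (transposed) $\nabla_{Q_2}$ equation. Since $N$ and $\sigmadiag\sq$ are both diagonal, the equation decouples into the scalar conditions $Q_{ij}\big(\sigma_j\sq(N_{jj} - 1) + \lambda_i\big) = 0$. The $\lambda_i$ being distinct forces at most one nonzero entry in each column of $Q$; the diagonality of $N$ (orthogonality of the columns) then forces the corresponding support rows to be distinct. A column with $N_{jj} = 0$ must vanish, so the indices with $N_{jj} > 0$ form the active set $\indset$ with $|\indset| = l$, and on this set the surviving magnitude is pinned to $\sqrt{1 - \lambda_i \sigma_j^{-2}}$, which is real and positive by the assumption $\sigma_i\sq > \lambda_i$. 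Collecting the signs and the support pattern into an orthogonal factor $O$ and the selection of active columns into $I_\indset\transpose$ assembles these observations into the claimed form.

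The main obstacle is the middle reduction: recognizing that the two stationarity conditions, together with one transpose and a single left multiplication by $Q\transpose$, collapse into exactly the quadratic identity required by Lemma~\ref{lemma:q_diagonal}. Once $N$ is known to be diagonal, the remaining work is the entrywise bookkeeping, the subtlety being the careful treatment of the rank-deficient (vanishing-column) case that produces the index set $\indset$ and the reduced rank $l$.
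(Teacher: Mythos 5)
Your first half coincides with the paper's. Both arguments manipulate the two stationarity equations (multiply by $Q_1\transpose$ on the appropriate side, then symmetrize by adding the transpose) until they reach $2N\sigmadiag\sq N = N\sq\sigmadiag\sq + \sigmadiag\sq N\sq$ for $N = Q_1\transpose Q_1$, and invoke Lemma~\ref{lemma:q_diagonal} to conclude that $N$ is diagonal; whether you subtract the two equations first and then left-multiply by $Q_1\transpose$, or multiply each separately and equate through the $Q_1\transpose \Lambda Q_1$ term as the paper does, is immaterial. Where you genuinely diverge is the endgame. The paper substitutes diagonality back in, rewrites the condition as idempotency of $\sigmadiag^{2}(\sigmadiag^2-\Lambda^2)^{-1}N$, and then defers to Proposition~4.3 of \citet{ae-loss-landscape}. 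You instead read the stationarity equation entrywise once $N$ is diagonal, obtaining $Q_{ij}\bigl(\sigma_j\sq(N_{jj}-1)+\lambda_i\bigr)=0$, and deduce directly that each column of $Q_1$ has at most one nonzero entry (distinctness of the $\lambda_i$), that the supporting rows are distinct (orthogonality of the columns), and that the surviving magnitudes are pinned. This is more elementary and self-contained, and it proves strictly more: you land directly on the signed, scaled sub-permutation structure that the paper only reaches later, in Theorem~\ref{th:nonuni_landscape}, via a separate Givens-rotation argument. (Incidentally, the paper's own reduction to $Q^2\sigmadiag^2 = Q(\sigmadiag^2-\Lambda^2)$ silently requires $Q_1\transpose\Lambda^2 Q_1 = Q_1\transpose Q_1\Lambda^2$, which does not follow from $N$ being diagonal alone; your route avoids this.)

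One point deserves care at the very end. Your entrywise analysis pins the nonzero entry in column $j$, row $i$, to magnitude $(1-\lambda_i\sigma_j^{-2})^{1/2}$ --- the row's penalty paired with the column's eigenvalue --- and nothing in the stationarity conditions forces $i=j$ on the active set. For instance, with $k=2$ the matrix with $(1,2)$ entry $(1-\lambda_1\sigma_2^{-2})^{1/2}$ and $(2,1)$ entry $(1-\lambda_2\sigma_1^{-2})^{1/2}$ satisfies both gradient equations, yet its Gram matrix is $\mathrm{diag}(1-\lambda_2\sigma_1^{-2},\,1-\lambda_1\sigma_2^{-2})$ and so cannot be written as $O(I_l-\Lambda_\indset\sigmadiag_\indset^{-2})^{\frac{1}{2}}I_\indset\transpose$ with the matched index pairing of the statement. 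Your closing sentence, which asserts that the bookkeeping ``assembles into the claimed form,'' glosses over this mismatch. The mismatch lies with the lemma's literal statement (whose index pairing is inherited from the uniform-regularization setting, where all $\lambda_i$ coincide and the issue disappears) rather than with your reasoning, but a complete write-up should either record the cross-index scaling explicitly or explain how the stated form is to be interpreted so as to absorb it.
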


\begin{proof}[\textbf{Proof of Lemma~\ref{lemma:q_stationary_form}}]
Taking derivatives,
\begin{align*}
    \frac{\partial \tilde{L}}{\partial Q_1} &= 2 Q_2\transpose Q_2Q_1\sigmadiag^2 - 2Q\transpose_2\sigmadiag^2 + 2 \Lambda^2Q_2\transpose = 0\\
    \frac{\partial \tilde{L}}{\partial Q_2} &= 2 Q_2Q_1\sigmadiag^2Q_1\transpose - 2\sigmadiag^2Q\transpose_1 + 2 Q_1\transpose\Lambda^2 = 0
\end{align*}
Multiplying the first equation on the left by $Q_1\transpose$, and using $Q_2=Q_1\transpose$, we get,
\begin{equation}\label{eqn:q_stationary_condition}
    Q_1\transpose Q_1 Q_1\transpose Q_1 \sigmadiag^2 - Q_1\transpose Q_1\sigmadiag^2 + Q_1\transpose \Lambda^2 Q_1 = 0
\end{equation}
Similarly, multiplying the second equation on the right by $Q_1$,
\begin{equation*}
    Q_1\transpose Q_1 \sigmadiag^2 Q_1\transpose Q_1 - \sigmadiag^2Q_1\transpose Q_1 + Q_1\transpose \Lambda^2 Q_1 = 0
\end{equation*}
Writing $Q = Q_1\transpose Q_1$, and equating through $Q_1\transpose \Lambda^2 Q_1$,
\[ Q \sigmadiag^2 Q = Q^2 \sigmadiag^2 + \sigmadiag^2 Q - Q \sigmadiag^2  \]
Taking the transpose and adding the result,
\[2Q\sigmadiag^2 Q = Q^2 \sigmadiag^2 + \sigmadiag^2 Q^2\]
Applying Lemma~\ref{lemma:q_diagonal}, we have that $Q$ is a diagonal matrix. Following this, $Q$ commutes with both $\sigmadiag^2$ and $\Lambda^2$, thus we can reduce~\eqref{eqn:q_stationary_condition} to,
\begin{align*}
    &Q^2\sigmadiag^2 = Q (\sigmadiag^2 - \Lambda^2)\\
    \Rightarrow \sigmadiag^{2}(\sigmadiag^2 - \Lambda^2)^{-1}Q&Q(\sigmadiag^2 - \Lambda^2)^{-1}\sigmadiag^2 = \sigmadiag^{2}(\sigmadiag^2 - \Lambda^2)^{-1}Q
\end{align*}
Thus, $\sigmadiag^{2}(\sigmadiag^2 - \Lambda^2)^{-1}Q$ is idempotent. From here, we can follow the proof of Proposition~4.3 in \cite{ae-loss-landscape}, with the additional use of the transpose theorem, to determine that,
\[Q_1 = O(I_l-\Lambda_\indset^2\sigmadiag_\indset^{-2})^{\frac{1}{2}}I_\indset\transpose\]
\end{proof}

\paragraph{Proof of Weak Landscape Theorem} We can now proceed with our desired result, the weak landscape theorem.

\begin{proof}[Proof of Lemma~\ref{lemma:nonuni_landscape_orth}]
Let $Q_1 = \encoder U$, and $Q_2 = U\transpose \decoder$. We can write the loss as,
\begin{equation}
\label{eq:nonuni_loss_as_trace}
    \mathcal{L}_{\sigma'} = \trace(Q_2 Q_1 \sigmadiag^2 Q_1\transpose Q_2\transpose - 2 Q_2 Q_1 \sigmadiag + 2 Q_1 Q_2 \Lambda + \sigmadiag^2) + \fnorm{\Lambda^{1/2} (Q_1 - Q_2\transpose)}^2
\end{equation}

To see this, observe that,
\begin{align*}
    \fnorm{\Lambda^{1/2} (Q_1 - Q_2\transpose)}^2 &= \trace(Q_1 Q_1\transpose\Lambda + Q_2\transpose Q_2 \Lambda - 2Q_1 Q_2 \Lambda)\\
    &= \fnorm{\Lambda^{1/2} Q_1}^2 + \fnorm{Q_2 \Lambda^{1/2}}^2 - 2 \trace(Q_1 Q_2 \Lambda)    
\end{align*}
The Transpose Theorem guarantees that the second term in~\eqref{eq:nonuni_loss_as_trace} is zero at stationary points. 
Applying Lemma~\ref{lemma:q_stationary_form}, all stationary points must be of the form:
\begin{align}
\label{eq:nonuni_stationary_encoder}
    \encoder^* &= O(I_l - \Lambda_\indset\sigmadiag_\indset^{-2})^{\frac{1}{2}}U_\indset\transpose\\
\label{eq:nonuni_stationary_decoder}
    \decoder^* &= U_\indset (I_l - \Lambda_\indset\sigmadiag_\indset^{-2})^{\frac{1}{2}}O\transpose
\end{align}
\end{proof}
\paragraph{Proof of the (Strong) Landscape Theorem} We now present our proof of the strong version of the Landscape Theorem, which removes the orthogonal symmetry present in the weaker version.

\begin{proof}[\textbf{Proof of Theorem~\ref{th:nonuni_landscape}}]
By Theorem~\ref{th:nonuni_transpose}, at stationary points, $\encoder = \decoder\transpose$. We write 
$\encoder = \begin{bmatrix}\weightcolumn_1\transpose & \weightcolumn_2\transpose & \cdots & \weightcolumn_k\transpose\end{bmatrix}\transpose$, and 
$\decoder = \begin{bmatrix}\weightcolumn_1 & \weightcolumn_2 & \cdots & \weightcolumn_k \end{bmatrix}$, 
where $\weightcolumn_i$ for $i = 1, \cdots, k$ is the $i^{th}$ column of the decoder.

Define the regularization term in the loss as $\psi (\encoder, \decoder)$.
\begin{align*}
    \psi (\encoder, \decoder) 
    = \Vert\Lambda^{1/2} \encoder\Vert_F^2 + \Vert\decoder \Lambda^{1/2}\Vert_F^2
    = 2\fnorm{\Lambda^{1/2} \encoder}\sq
\end{align*}

Let $\encodertilde = R_{ij} \encoder$ and $\decodertilde = \decoder R_{ij}\transpose$, where $R_{ij}$ is the rotational matrix for the $i^{th}$ and $j^{th}$ components.
\begin{align*}
    R_{ij} = \begin{bmatrix}
        1       &           &       &   \\
                &   \ddots  &       &   \\
                &           & \cos{\theta}  &       &-\sin{\theta}   &   \\
                &           &       &\ddots  \\
                &           & \sin{\theta}  &       &\cos{\theta}\\
                &           &       &   &   &\ddots\\
                &           &       &   &   &       &1\\
    \end{bmatrix}
\end{align*}

\begin{align*}
    \psi(\encodertilde, \decodertilde) 
    &= ||\Lambda^{1/2} \encodertilde||_2^2 + ||\decodertilde \Lambda^{1/2}||_2^2\\
    &= \trace (\Lambda^{1/2} \encodertilde \encodertilde\transpose \Lambda^{1/2}) + \trace (\Lambda^{1/2} \decodertilde\transpose \decodertilde \Lambda^{1/2})\\
    &= \trace (\Lambda^{1/2} \Rij \encoder \encoder\transpose \Rij\transpose \Lambda^{1/2}) + \trace (\Lambda^{1/2} \Rij \decoder\transpose \decoder \Rij\transpose \Lambda^{1/2})\\
    &= 2 \trace (\Lambda^{1/2} 
    \begin{bmatrix}
        \weightcolumn_1\transpose\\
        \vdots\\
        \weightcolumn_i\transpose \cos \theta - \weightcolumn_j\transpose \sin\theta\\
        \vdots\\
        \weightcolumn_i\transpose \sin \theta + \weightcolumn_j\transpose \cos\theta\\
        \vdots\\
        \weightcolumn_k\transpose
    \end{bmatrix}
    \begin{bmatrix}
        \weightcolumn_1\transpose\\
        \vdots\\
        \weightcolumn_i\transpose \cos \theta - \weightcolumn_j\transpose \sin\theta\\
        \vdots\\
        \weightcolumn_i\transpose \sin \theta + \weightcolumn_j\transpose \cos\theta\\
        \vdots\\
        \weightcolumn_k\transpose
    \end{bmatrix}\transpose
    \Lambda^{1/2}
    )\\
    &= 2 \trace (
    \begin{bmatrix}
        \lambda_1^{1/2} \weightcolumn_1\transpose\\
        \vdots\\
        \lambda_i^{1/2} (\weightcolumn_i\transpose \cos \theta - \weightcolumn_j\transpose \sin\theta)\\
        \vdots\\
        \lambda_j^{1/2} (\weightcolumn_i\transpose \sin \theta + \weightcolumn_j\transpose \cos\theta)\\
        \vdots\\
        \lambda_k^{1/2} \weightcolumn_k\transpose
    \end{bmatrix}
    \begin{bmatrix}
        \lambda_1^{1/2} \weightcolumn_1\transpose\\
        \vdots\\
        \lambda_i^{1/2} (\weightcolumn_i\transpose \cos \theta - \weightcolumn_j\transpose \sin\theta)\\
        \vdots\\
        \lambda_j^{1/2} (\weightcolumn_i\transpose \sin \theta + \weightcolumn_j\transpose \cos\theta)\\
        \vdots\\
        \lambda_k^{1/2} \weightcolumn_k\transpose
    \end{bmatrix}\transpose
    )
\end{align*}
\begin{align*}
    &= 2 [
        \lambda_i(\weightcolumn_i\transpose \cos \theta - \weightcolumn_j\transpose \sin\theta)\transpose (\weightcolumn_i\transpose \cos \theta - \weightcolumn_j\transpose \sin\theta)\\
    &~~~~ + \lambda_j (\weightcolumn_i\transpose \sin \theta + \weightcolumn_j\transpose \cos\theta)\transpose (\weightcolumn_i\transpose \sin \theta + \weightcolumn_j\transpose \cos\theta))
    + \sum_{l=1, l\neq i, i\neq j}^k \lambda_l \weightcolumn_l\transpose \weightcolumn_l
    ]\\
    &= 2 [
        (\lambda_i \weightcolumn_i\transpose \weightcolumn_i + \lambda_j \weightcolumn_j\transpose \weightcolumn_j) \cos^2 \theta
        + (\lambda_j \weightcolumn_i\transpose \weightcolumn_i + \lambda_i \weightcolumn_j\transpose \weightcolumn_j) \sin^2 \theta\\
        &~~~~~+ 4 (\lambda_j - \lambda_i) \weightcolumn_i\transpose \weightcolumn_j \sin \theta \cos \theta + \sum_{l=1, l\neq i, i\neq j}^k \lambda_l \weightcolumn_l\transpose \weightcolumn_l
    ]\\
    &= 2 [A \cos (2\theta + B) + C + \sum_{l=1, l\neq i, i\neq j}^k \lambda_l \weightcolumn_l\transpose \weightcolumn_l]
\end{align*}

Where $A, B, C$ satisfy:
\begin{align}
\label{eq:AcosB}
    A \cos B &= \frac{1}{2} (\lambda_j - \lambda_i) (\weightcolumn_j\transpose \weightcolumn_j - \weightcolumn_i\transpose \weightcolumn_i)\\
\label{eq:AsinB}
    A \sin B &= -2 (\lambda_j - \lambda_i) \weightcolumn_i\transpose \weightcolumn_j
\end{align}

In order for $\psi(\encodertilde, \decodertilde)$ to be a stationary point at $\theta = 0$, we need either of the two necessary conditions to be true for $\forall i < j$:
\begin{align*}
    \text{Condition 1: } &A = 0 
        \iff \weightcolumn_i\transpose \weightcolumn_j = 0 ~~\text{and}~~ \weightcolumn_i\transpose\weightcolumn_i = \weightcolumn_j\transpose \weightcolumn_j\\
    \text{Condition 2: } &A \neq 0 ~~\text{and}~~ B = \beta \pi,~~\beta \in \mathbb{Z} 
        \iff \weightcolumn_i\transpose \weightcolumn_j = 0 
        ~~\text{and}~~ \weightcolumn_j\transpose \weightcolumn_j \neq \weightcolumn_i\transpose \weightcolumn_i
\end{align*}

The two conditions can be consolidated to one, i.e. the columns of the decoder needs to be orthogonal.
\begin{align*}
    \forall i, j \in \{1, \cdots, k\}, ~~~~\weightcolumn_i\transpose \weightcolumn_j = 0
\end{align*}

The following Lemma uses such orthogonality to constrain the form that the matrix $O$ in~\eqref{eq:nonuni_stationary_encoder} and~\eqref{eq:nonuni_stationary_decoder} can take.

\begin{lemma}
\label{lemma:nonuni_landscape_diagonal_indset}
Let $\encoder^*$, $\decoder^*$ be in the form of~\eqref{eq:nonuni_stationary_encoder} and~\eqref{eq:nonuni_stationary_decoder}. And let
$\encoder^* = \begin{bmatrix}\weightcolumn_1\transpose & \weightcolumn_2\transpose & \cdots & \weightcolumn_k\transpose\end{bmatrix}\transpose$, and
$\decoder^* = \begin{bmatrix}\weightcolumn_1 & \weightcolumn_2 & \cdots & \weightcolumn_k \end{bmatrix}$, 
where $\weightcolumn_i \in \real^{m}$ for $i = 1, \cdots, k$ is the $i^{th}$ columns of the $\decoder^*$.

If for $\forall i, j \in \{ 1, \cdots, k\}$, $\weightcolumn_i\transpose \weightcolumn_j = 0$, then
$O$ has exactly one entry of $\pm 1$ in each row and at most one entry of $\pm 1$ in each column, and zeros elsewhere.
\end{lemma}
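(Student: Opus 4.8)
The plan is to convert the column-orthogonality hypothesis on $\decoder^*$ into a spectral statement and then exploit the distinctness of the (positive) eigenvalues of the diagonal middle factor. Abbreviate $D = (I_l - \Lambda_\indset \sigmadiag_\indset^{-2})^{\frac{1}{2}}$, so that $\decoder^* = U_\indset D O\transpose$. The assumption $\weightcolumn_i\transpose \weightcolumn_j = 0$ for all $i \neq j$ says precisely that the Gram matrix $(\decoder^*)\transpose \decoder^*$, whose $(i,j)$ entry is $\weightcolumn_i\transpose \weightcolumn_j$, is diagonal. Since the columns of $U_\indset$ are orthonormal eigenvectors (so $U_\indset\transpose U_\indset = I_l$), this Gram matrix collapses to $(\decoder^*)\transpose \decoder^* = O D^2 O\transpose$. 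Everything therefore reduces to analysing when an orthogonal conjugate of the diagonal matrix $D^2$ is again diagonal.

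First I would record the one genuine input: the diagonal entries of $D$, namely $\nu_{i_p} = 1 - \lambda_{i_p}\sigma_{i_p}^{-2}$, are strictly positive and pairwise distinct. Positivity follows from $\lambda_{i_p} \leq \lambda_k < \sigma_k\sq \leq \sigma_{i_p}\sq$, and distinctness from the fact that $\lambda_i \sigma_i^{-2}$ is strictly increasing in $i$ (a product of the increasing sequences $\lambda_i$ and $\sigma_i^{-2}$), so these values are strictly monotone over the indices $i_1 < \cdots < i_l$. Next, writing $O = [o_1, \dots, o_l]$ with orthonormal columns and $d_p\sq = \nu_{i_p}$, I would expand $O D^2 O\transpose = \sum_{p} d_p\sq \, o_p o_p\transpose$ and set this equal to the diagonal matrix $\Delta$ produced by the hypothesis. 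Reading the equality as the spectral decomposition of $\Delta$, each $o_p$ is a unit eigenvector of the diagonal matrix $\Delta$ with eigenvalue $d_p\sq \neq 0$; since these eigenvalues are distinct and nonzero, the matching eigenspace of the diagonal $\Delta$ is one-dimensional and spanned by a standard basis vector, forcing $o_p = \pm e_{\rho(p)}$ for some index $\rho(p)$. Orthonormality of the $o_p$ makes $\rho$ injective, which is exactly the signed partial-permutation structure asserted in the lemma: a single $\pm 1$ in each occupied row/column, at most one per the complementary index, and zeros elsewhere.

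The step I expect to be the crux is the reduced-rank case $l < k$. Once $O$ is viewed as the full $k \times k$ orthogonal matrix of the Weak Landscape Theorem, the padded middle factor acquires a zero eigenvalue of multiplicity $k - l$, and the eigenvector-uniqueness argument says nothing about the $k - l$ columns of $O$ that lie in this degenerate eigenspace. The resolution is to observe that these columns are multiplied by the vanishing part of $D$ and hence contribute only zero columns to $\decoder^*$ (and zero rows to $\encoder^*$); they correspond to unused latent coordinates. The residual orthogonal freedom inside the zero eigenspace therefore leaves $\encoder^*$ and $\decoder^*$ unchanged, so we may fix those columns by convention to be signed standard basis vectors supported on the unoccupied coordinates. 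This completes $O$ to the claimed structure without altering the weights, giving the conclusion exactly.
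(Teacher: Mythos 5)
Your proof is correct and follows essentially the same strategy as the paper's: both reduce the orthogonality hypothesis to the statement that the Gram matrix $(\decoder^*)\transpose \decoder^* = O(I_l - \Lambda_\indset \sigmadiag_\indset^{-2})O\transpose$ is diagonal and then exploit the distinctness of the positive diagonal entries of the middle factor. The only difference is in execution --- the paper first sorts the columns with a permutation $P_0$, invokes uniqueness of the ordered eigendecomposition to get $\bar{O}(I_l - \Lambda_\indset\sigmadiag_\indset^{-2})\bar{O}\transpose = I_l - \Lambda_\indset\sigmadiag_\indset^{-2}$, and then uses a commutation relation to force $\bar{O}$ to be diagonal, whereas you read the equality directly as a spectral decomposition and identify each active column of $O$ as a signed standard basis vector via simplicity of the nonzero eigenvalues; your explicit handling of the $k-l$ rank-deficient directions is, if anything, slightly more careful than the paper's.
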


\begin{proof}[Proof of Lemma~\ref{lemma:nonuni_landscape_diagonal_indset}]

\begin{align}
\label{eq:svd_w2Tw2}
    &(\decoder^*)\transpose \decoder^* = 
        O (I_l - \Lambda \sigmadiag_\indset^{-2})^{\frac{1}{2}} U_\indset\transpose 
        U_\indset (I_l^2 - \Lambda \sigmadiag_\indset^{-2})^{\frac{1}{2}} O\transpose
    = O (I_l - \Lambda \sigmadiag_\indset^{-2}) O\transpose
\end{align}

Note that $(I_l - \Lambda \sigmadiag_\indset^{-2})$ is a diagonal matrix with strictly descending positive diagonal entries, so~\eqref{eq:svd_w2Tw2} is an SVD to $(\decoder^*)\transpose \decoder^*$. 

Because $\decoder^*$ has orthogonal columns, $(\decoder^*)\transpose \decoder^*$ is a diagonal matrix. There exists a permutation matrix $P_0 \in \real^{k \times k}$, such that $\decoder^* P_0\transpose$ has columns ordered strictly in descending magnitude. Let $\decoderbar^* = \decoder^* P_0\transpose$, and $\bar{O} = P_0 O$, then
\begin{align}
\nonumber
    (\decoderbar^*)\transpose \decoderbar^*
    &= (\decoder^* P_0\transpose)\transpose \decoder^* P_0\transpose \\
\nonumber
    &= P_0 O (I_l - \Lambda \sigmadiag_\indset^{-2}) O\transpose P_0\transpose\\
\label{eq:rotated_nonuni_scale}
    &= \bar{O} (I_l - \Lambda \sigmadiag_\indset^{-2}) \bar{O}\transpose\\
\label{eq:nonuni_scale}
    &= I_l - \Lambda \sigmadiag_\indset^{-2}
\end{align}

Note that $\bar{O} = P_0 O$ also have orthonormal columns, we have $\bar{O}\transpose \bar{O} = \identity$.
Let $\bar{O} = \begin{bmatrix}o_1\transpose & o_2\transpose & \cdots & o_k\transpose \end{bmatrix}\transpose$, where $o_j \in \real^{1 \times l}$ are rows of $O$.
From~\eqref{eq:rotated_nonuni_scale} and~\eqref{eq:nonuni_scale}, we have for $i \in \{1, \cdots, l\}$, $j\in \{1, \cdots, k\}$:
\begin{align*}
    & \bar{O}\transpose (I_l - \Lambda \sigmadiag_\indset^{-2}) = (I_l - \Lambda \sigmadiag_\indset^{-2}) \bar{O}\transpose\\
    \Longrightarrow~~~~
    &(\bar{O}\transpose (I_l - \Lambda \sigmadiag_\indset^{-2}))_{ij} = ((I_l - \Lambda \sigmadiag_\indset^{-2}) \bar{O}\transpose)_{ij}~~~~\forall i, j \in \{1, \cdots, k\}\\
    \Longrightarrow~~~~
    &(o_j)_i (1 - \lambda_j \sigma_{i_j}^{-2}) = (o_j)_i (1 - \lambda_i \sigma_{i_i}^{-2})\\
    \Longrightarrow~~~~
    & (o_j)_i (\lambda_i \sigma_{i_i}^{-2} - \lambda_j \sigma_{i_j}^{-2}) = 0
\end{align*}

Since $(I_l - \Lambda \sigmadiag_\indset^{-2})$ is a diagonal matrix with strictly descending entries, we have $\lambda_i \sigma_{i_i}^{-2} - \lambda_j \sigma_{i_j}^{-2} \neq 0$ for $i \neq j$. Hence $(o_j)_i = 0$ for $i \neq j$, i.e. $\bar{O}$ is diagonal. Since $\bar{O}$ has orthonormal columns, it has diagonal entries $\pm 1$.
\begin{align*}
    O = P_0^{-1} \bar{O} = P_0\transpose \bar{O}
\end{align*}

Therefore, $O$ has exactly one entry of $\pm 1$ in each row, and at most one entry of $\pm 1$ in each column, and zeros elsewhere.
\end{proof}

We now finish the proof for Theorem~\ref{th:nonuni_landscape}.
Applying Lemma~\ref{lemma:nonuni_landscape_diagonal_indset}, we can rewrite the stationary points using rank $k$ matrices $\sigmadiag$ and $U$:
\begin{align*}
    W^*_1 &= P (I - \Lambda\sigmadiag^{-2})^{\frac{1}{2}}U^T\\
    W^*_2 &= U(I - \Lambda\sigmadiag^{-2})^{\frac{1}{2}} P
\end{align*}
Where $P \in \real^{k \times k}$ has exactly one $\pm 1$ in each row and each column with index in $\indset$, and zeros elsewhere. 
This concludes the proof.

\end{proof}
\subsection{Proof of recovery of ordered, axis-aligned solution at global minima}
\label{app:proofs:global_minima}
\begin{lemma}[Global minima -- necessary condition 1]
\label{lemma:global_optimum_necessary}
Let the encoder ($\encoder^*$) and decoder ($\decoder^*$) of the non-uniform $\ell_2$ regularized LAE have the form in~\eqref{eq:nonuni_stationary_w1_strong} and~\eqref{eq:nonuni_stationary_w2_strong}.
If $0 < \lambda_i < \sigma_i\sq$ for $\forall i = 1, \cdots, k$, then $(\encoder^*, \decoder^*)$ can be at global minima only if $P$ has full rank.
\end{lemma}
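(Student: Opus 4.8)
The plan is to prove the contrapositive: I will show that a rank-deficient $P$ cannot correspond to a (local, hence global) minimum, by exhibiting an explicit descent direction. By Theorem~\ref{th:nonuni_landscape}, $P$ carries exactly one $\pm1$ in each row and column indexed by $\indset\subseteq\{1,\dots,k\}$ and zeros elsewhere, so $\mathrm{rank}(P)=|\indset|$ and the rows and columns of $P$ outside $\indset$ vanish. If $P$ is rank-deficient then $|\indset|<k$, so I can fix an index $d\in\{1,\dots,k\}$ with $d\notin\indset$. For this $d$ the $d$-th row of $\encoder^*$ and the $d$-th column of $\decoder^*$ are identically zero (a ``dead'' latent coordinate), and the reconstruction operator $M:=\decoder^*\encoder^*=\sum_{c\in\indset}(1-\lambda_c\sigma_c^{-2})\,u_c u_c\transpose$ does not use the principal direction $u_d$. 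The role of the hypothesis $\lambda_d<\sigma_d\sq$ is precisely that reactivating this coordinate is strictly profitable.

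Concretely, I would consider the smooth one-parameter family obtained by keeping every other coordinate fixed and loading the $d$-th latent coordinate onto $u_d$ with magnitude $s$: replace the zero $d$-th row of $\encoder^*$ by $s\,u_d\transpose$ and the zero $d$-th column of $\decoder^*$ by $s\,u_d$. This path stays in the balanced family $\encoder=\decoder\transpose$ of Theorem~\ref{th:nonuni_transpose} and passes through the given stationary point at $s=0$. Because the columns of $U$ are orthonormal, both the reconstruction error and the regularizer split additively over principal directions, so the only $s$-dependent part of $\loss_{\sigma'}$ is
\[
\phi(s)=\sigma_d\sq\,(1-s\sq)\sq+2\lambda_d\,s\sq,
\]
where $\sigma_d\sq(1-s\sq)\sq$ is the reconstruction error in the direction $u_d$ and $2\lambda_d s\sq$ is the penalty on the two reactivated weights. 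Then $\phi'(0)=0$ (as it must, by stationarity) while
\[
\phi''(0)=4(\lambda_d-\sigma_d\sq)<0,
\]
the strict inequality being exactly where $\lambda_d<\sigma_d\sq$ is used. Hence $s=0$ is a strict local maximum along this direction, so any small $s>0$ strictly decreases the loss and the stationary point is not even a local minimum — a fortiori not a global one. This forces $P$ to have full rank at any global minimum.

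The one step that needs care is the claim that the loss along the path equals $\phi(s)$, i.e. that switching on coordinate $d$ perturbs only the $u_d$-component of the reconstruction and adds only the term $2\lambda_d s\sq$ to the regularizer, leaving the contributions of the coordinates in $\indset$ untouched. This decoupling is where the orthonormality of the eigenvectors (so that $M$ is a sum of mutually orthogonal rank-one projectors and the squared reconstruction error is a sum over directions) and the balanced structure $\encoder=\decoder\transpose$ do the work; granting it, the evaluation of $\phi''(0)$ is routine. As an independent check I would compare loss \emph{values} directly: augmenting the index set to $\indset\cup\{d\}$ with the canonical scaling $(1-\lambda_d\sigma_d^{-2})^{1/2}$ produces another stationary point of the form in Theorem~\ref{th:nonuni_landscape} whose loss is smaller by exactly $(\sigma_d\sq-\lambda_d)\sq/\sigma_d\sq>0$, yielding the same conclusion.
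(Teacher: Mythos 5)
Your proposal is correct and follows essentially the same route as the paper: both prove the contrapositive by activating a dead latent coordinate along an unused principal direction and showing the second directional derivative of the loss there equals $-4(\sigma^2-\lambda)(\cdot)<0$, so the stationary point is not even a local minimum (your closing value comparison $(\sigma_d^2-\lambda_d)^2/\sigma_d^2$ is also correct). The one caveat — shared with the paper's own write-up — is that the index of the zero \emph{row} of $P$ (the dead latent unit, carrying penalty $\lambda_r$) and the index of the zero \emph{column} (the unused direction $u_d$, carrying variance $\sigma_d^2$) need not coincide; taking them distinct still gives $\lambda_r<\sigma_d^2$ by the monotonicity of the $\lambda$'s and $\sigma$'s, so the argument is unaffected.
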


\begin{proof}[Proof of Lemma~\ref{lemma:global_optimum_necessary}]
We prove the contrapositive: if $\mathrm{rank}(P) < k$, then $(\encoder^*, \decoder^*)$ in~\eqref{eq:nonuni_stationary_w1_strong} and~\eqref{eq:nonuni_stationary_w2_strong} is not at global minimum.

Since $\mathrm{rank}(P) < k$, there exists a matrix $\delta P \in \real^{k \times k}$ such that $\delta P$ has all but one element equal to 0, and $\delta P_{ij} = h > 0$, for some $i,j\in \{1, \dots, k\}$, where the $i^{th}$ row and $j^{th}$ column of $P$ are all zeros.
\begin{align*}
    \delta \encoder &= \delta P(I - \Lambda\sigmadiag^{-2})^{\frac{1}{2}}U^T\\
    \delta \decoder &= U(I - \Lambda\sigmadiag^{-2})^{\frac{1}{2}} \delta P\transpose
\end{align*}
\begin{align*}
&~~~~\loss_{\sigma'} (\encoder^* + \delta \encoder, \decoder^* + \delta \decoder)\\
\begin{split}
    &= \frac{1}{n} \fnorm{\x - (\decoder^* + \delta \decoder) (\encoder^* + \delta \encoder) \x}\sq\\
    &~~~~+ \fnorm{\Lambda^{1/2} (\encoder^* + \delta \encoder)}\sq + \fnorm{(\decoder^* + \delta \decoder) \Lambda^{1/2}}\sq
\end{split}\\
\begin{split}
    &= \frac{1}{n}\trace((\identity - (\decoder^* + \delta \decoder)(\encoder^* + \delta \encoder)) \x \x\transpose (\identity - (\decoder^* + \delta \decoder)(\encoder^* + \delta \encoder)))\\
    &~~~~+ \trace(\Lambda^{1/2} (\encoder^* + \delta \encoder) (\encoder^* + \delta \encoder)\transpose \Lambda^{1/2}) + \trace (\Lambda^{1/2} (\decoder^* + \delta \decoder)\transpose (\decoder^* + \delta \decoder) \Lambda^{1/2})
\end{split}\\
\begin{split}
    &= \trace(
        (\identity - (I - \Lambda\sigmadiag^{-2}) (P + \delta P)\transpose (P + \delta P))^2
        \sigmadiag^2)\\
    &~~~~+ 2\trace(\Lambda (P + \delta P) (I - \Lambda\sigmadiag^{-2}) (P + \delta P)\transpose)
\end{split}\\
\begin{split}
    &= \loss_{\sigma'} (\encoder^*, \decoder^*) + [(1 - (1 - \lambda_{i} \sigma_{i}^{-2}) h^2)^2 - 1] \sigma_{i}^2 + 2 \lambda_{i} (1 - \lambda_{i} \sigma_{i})^{-2}) h^2
\end{split}\\
&= \loss_{\sigma'} (\encoder^*, \decoder^*) - 2(\sigma_{i}^2 - \lambda_{i})(1 - \lambda_{i} \sigma_{i}^{-2}) h^2 + (1 - \lambda_{i} \sigma_{i}^{-2})^2 \sigma_{i}^2 h^4\\
&= \loss_{\sigma'} (\encoder^* - \delta \encoder, \decoder^* - \delta \decoder)
\end{align*}

The first derivative of $(\encoder^*, \decoder^*)$ along $(\delta \encoder, \delta \decoder)$ is zero:
\begin{align*}
    &~~~~\lim_{h \rightarrow 0} \frac{\loss_{\sigma'} (\encoder^* + \delta \encoder, \decoder^* + \delta \decoder) - \loss_{\sigma'} (\encoder^*, \decoder^*)}{h}\\
    &= \lim_{h \rightarrow 0} \frac{- 2(\sigma_{i}^2 - \lambda_{i})(1 - \lambda_{i} \sigma_{i}^{-2}) h^2 + (1 - \lambda_{i} \sigma_{i}^{-2})^2 \sigma_{i}^2 h^4}{h}\\
    &= 0
\end{align*}

The second derivative of $(\encoder^*, \decoder^*)$ along $(\delta \encoder, \delta \decoder)$ is negative (note that $0 < \lambda_{i} < \sigma_{i}\sq$):
\begin{align*}
    &~~~~\lim_{h \rightarrow 0} \frac{\loss_{\sigma'} (\encoder^* + \delta \encoder, \decoder^* + \delta \decoder) - 2\loss_{\sigma'} (\encoder^*, \decoder^*) + \loss_{\sigma'} (\encoder^* - \delta \encoder, \decoder^* - \delta \decoder)}{h^2}\\
    &= \lim_{h \rightarrow 0} \frac{2 \loss_{\sigma'} (\encoder^* + \delta \encoder, \decoder^* + \delta \decoder) - 2\loss_{\sigma'} (\encoder^*, \decoder^*)}{h^2}\\
    &= \lim_{h \rightarrow 0} 2\frac{- 2(\sigma_{i}^2 - \lambda_{i})(1 - \lambda_{i} \sigma_{i}^{-2}) h^2 + (1 - \lambda_{i} \sigma_{i}^{-2})^2 \sigma_{i}^2 h^4}{h^2}\\
    &= -4 (\sigma_{i}^2 - \lambda_{i})(1 - \lambda_{i} \sigma_{i}^{-2})\\
    &< 0
\end{align*}

Therefore, if $\mathrm{rank}(P) < k$, $(\encoder^*, \decoder^*)$ is not at global minima. The contrapositive states that if $(\encoder^*, \decoder^*)$ is at global minima, then $P$ has full rank.
\end{proof}

\begin{lemma}[Global minima -- necessary condition 2]
\label{lemma:global_optimum_necessary2}
Let the encoder ($\encoder^*$) and decoder ($\decoder^*$) of the non-uniform $\ell_2$ regularized LAE have the form in~\eqref{eq:nonuni_stationary_w1_strong} and~\eqref{eq:nonuni_stationary_w2_strong}, and $P$ has full rank. 
Then $(\encoder^*, \decoder^*)$ can be at global minimum only if $P$ is diagonal.
\end{lemma}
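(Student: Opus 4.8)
The plan is to exploit the fact that, once $P$ is full rank, it is a \emph{signed permutation matrix} (exactly one $\pm 1$ per row and per column), so that $P\transpose P = I$. The key observation I would establish first is that the reconstruction term of $\loss_{\sigma'}$ is \emph{independent} of $P$: substituting~\eqref{eq:nonuni_stationary_w1_strong} and~\eqref{eq:nonuni_stationary_w2_strong} and using $U\transpose U = I$ together with $P\transpose P = I$ gives $\decoder^*\encoder^* = U(I - \Lambda\sigmadiag^{-2})U\transpose$, which does not involve $P$ at all. Consequently, among stationary points of the assumed form with full-rank $P$, minimizing $\loss_{\sigma'}$ is the same as minimizing the regularizer alone, and the choice of $P$ is driven purely by the non-uniform penalty.

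Next I would reduce the regularizer to a scalar rearrangement problem. By the Transpose Theorem (Theorem~\ref{th:nonuni_transpose}) the encoder and decoder penalties coincide, so it suffices to study $2\fnorm{\Lambda^{1/2}\encoder^*}\sq = 2\trace\big(\Lambda\, P(I - \Lambda\sigmadiag^{-2})P\transpose\big)$, where I again used $U\transpose U = I$. Writing $\pi$ for the permutation underlying $P$ (the signs cancel in this quadratic form, which is exactly the residual reflection symmetry), conjugating the diagonal matrix $I - \Lambda\sigmadiag^{-2}$ by $P$ merely permutes its diagonal entries, so this trace equals $2\sum_{i=1}^{k}\lambda_i\big(1 - \lambda_{\pi(i)}\sigma_{\pi(i)}^{-2}\big)$. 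Dropping the $P$-independent constant $\sum_i\lambda_i$, minimizing the loss becomes equivalent to \emph{maximizing} $\sum_{i=1}^{k}\lambda_i\,c_{\pi(i)}$ with $c_j := \lambda_j\sigma_j^{-2}$.

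The final step is a strict rearrangement inequality. Since $\lambda_1 < \cdots < \lambda_k$ and $\sigma_1\sq > \cdots > \sigma_k\sq > 0$, the sequence $c_j = \lambda_j\sigma_j^{-2}$ is strictly increasing (a product of the strictly increasing positive sequence $\lambda_j$ and the strictly increasing positive sequence $\sigma_j^{-2}$). The weights $\lambda_i$ are likewise strictly increasing, so the strict form of the rearrangement inequality guarantees that $\sum_i\lambda_i c_{\pi(i)}$ is maximized \emph{uniquely} at $\pi = \mathrm{id}$; any other permutation gives a strictly larger loss. Hence a global minimum forces $\pi = \mathrm{id}$, i.e.\ $P$ is diagonal, as claimed. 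This also matches the stated intuition that the least-penalized latent coordinate should be paired with the highest-variance principal direction.

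The main obstacle — really the single insight on which the whole argument hinges — is recognizing that the reconstruction loss is completely blind to $P$, so the ordering is enforced \emph{solely} by the regularizer; after that the problem collapses to an elementary rearrangement inequality. The only points requiring care are verifying that $c_j$ is strictly increasing (so that the optimizing permutation is unique rather than merely optimal up to ties), and confirming that the $\pm 1$ signs of $P$ drop out of the quadratic form $\trace\big(\Lambda P(I-\Lambda\sigmadiag^{-2})P\transpose\big)$, which is precisely what leaves reflections as the only remaining symmetry.
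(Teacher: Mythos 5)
Your proof is correct, but it takes a genuinely different route from the paper's. The paper reuses the Givens-rotation machinery from the Landscape Theorem: for $\theta=0$ to be a local (hence global) minimum, the cosine-shaped restriction of the loss to each rotation plane must have the right phase, which forces the columns of $\decoder^*$ to be orthogonal with norms in descending order; since the diagonal entries $1-\lambda_i\sigma_i^{-2}$ of $(\decoder^*)\transpose\decoder^* = P(I-\Lambda\sigmadiag^{-2})P\transpose$ are strictly decreasing, an SVD-uniqueness argument (as in Lemma~\ref{lemma:nonuni_landscape_diagonal_indset}) then forces $P$ to be diagonal. You instead compare loss values globally: you observe that $\decoder^*\encoder^* = U(I-\Lambda\sigmadiag^{-2})U\transpose$, so the reconstruction term is blind to the signed permutation $P$, reduce the regularizer to $2\sum_{i}\lambda_i\bigl(1-\lambda_{\pi(i)}\sigma_{\pi(i)}^{-2}\bigr)$, and invoke the strict rearrangement inequality (both $\lambda_i$ and $\lambda_i\sigma_i^{-2}$ are strictly increasing) to conclude that $\pi=\mathrm{id}$ is the unique minimizer among full-rank stationary points. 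Both arguments ultimately hinge on the same monotonicity fact, namely that $\lambda_i\sigma_i^{-2}$ is strictly increasing, and your checks (signs cancelling in the quadratic form, strictness of the increase) are the right ones. Your route is more elementary and self-contained, requiring no curvature computation. What the paper's local analysis buys, and yours does not, is the stronger by-product that a full-rank stationary point with non-diagonal $P$ fails the second-order condition along some rotation direction, i.e.\ is a saddle; this is what Theorem~\ref{th:nonuni_global_optima} later uses to conclude that all local minima are global. Your argument only rules such points out as \emph{global} minima, so it proves the lemma as stated but would not by itself support that downstream claim.
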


\begin{proof}[Proof of Lemma~\ref{lemma:global_optimum_necessary2}]
Following similar analysis for the proof of Theorem~\ref{th:nonuni_landscape}, we have~\eqref{eq:AcosB} and~\eqref{eq:AsinB}. In order for $\theta = 0$ to be a global optimum, it must be a local optimum. Therefore, for $\forall i < j$, we need either of the following necessary conditions to be true:
\begin{align*}
    \text{Condition 1: } &A = 0 
        \iff \weightcolumn_i\transpose \weightcolumn_j = 0 ~~\text{and}~~ \weightcolumn_i\transpose\weightcolumn_i = \weightcolumn_j\transpose \weightcolumn_j\\
    \text{Condition 2: } &A \cos B < 0 ~~\text{and}~~ B = \beta \pi,~~\beta \in \mathbb{Z} 
        \iff \weightcolumn_i\transpose \weightcolumn_j = 0 
        ~~\text{and}~~ \weightcolumn_i\transpose \weightcolumn_i > \weightcolumn_j\transpose \weightcolumn_j
\end{align*}

The two conditions can be consolidated to the following ($i < j$):
\begin{align*}
    \weightcolumn_i\transpose \weightcolumn_j = 0 
    ~~\text{and}~~ 
    \weightcolumn_i\transpose \weightcolumn_i \geq \weightcolumn_j\transpose \weightcolumn_j
\end{align*}

Then, $(\decoder^*)\transpose (\decoder^*)$ is a diagonal matrix with non-negative diagonal entries sorted in descending order.
\begin{align}
\label{eq:svd_w2Tw2_P}
    (\decoder^*)\transpose (\decoder^*) 
    &= P (I_l - \Lambda \sigmadiag_\indset^{-2}) P\transpose
\end{align}
Since the diagonal entries of $(I_l - \Lambda \sigmadiag_\indset^{-2})$ are positive and sorted in strict descending order, and that~\eqref{eq:svd_w2Tw2_P} is an SVD of $(\decoder^*)\transpose (\decoder^*)$, we have:
\begin{align*}
    (\decoder^*)\transpose (\decoder^*) = (I_l - \Lambda \sigmadiag_\indset^{-2})
\end{align*}

We can use the same technique as the proof of Lemma~\ref{lemma:nonuni_landscape_diagonal_indset} to prove that $P$ must be diagonal.
\end{proof}

\begin{lemma}[Global minima -- sufficient condition]
\label{lemma:global_minima_sufficient}
Let $\bar{I} \in \real^{k \times k}$ be a diagonal matrix with diagonal elements equal to $\pm 1$.The encoder ($\encoder^*$) and decoder ($\decoder^*$) of the following form are at global minima of the non-uniform $\ell_2$ LAE objective.
\begin{align}
\label{eq:sufficient_global_minima_w1}
    W^*_1 &= \bar{I} (I - \Lambda\sigmadiag^{-2})^{\frac{1}{2}}U^T\\
\label{eq:sufficient_global_minima_w2}
    W^*_2 &= U(I - \Lambda\sigmadiag^{-2})^{\frac{1}{2}} \bar{I}
\end{align}
\end{lemma}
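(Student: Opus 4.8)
The plan is to derive this \textbf{sufficient} condition cheaply from the \textbf{necessary} conditions already in hand, rather than verifying global optimality from scratch. The three ingredients are: (i) the objective $\loss_{\sigma'}$ attains a global minimum, (ii) every global minimizer is a stationary point forced to have full-rank, diagonal $P$, and (iii) all points of the form~\eqref{eq:sufficient_global_minima_w1},~\eqref{eq:sufficient_global_minima_w2} share one common loss value. Together these imply that each such point \emph{is} a global minimizer.

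First I would establish existence. Since the reconstruction term is nonnegative and $\Lambda \succeq \lambda_1 \identity \succ 0$, we have $\loss_{\sigma'}(\encoder,\decoder;\x) \ge \lambda_1(\fnorm{\encoder}\sq + \fnorm{\decoder}\sq)$, so the objective is coercive; being continuous, it attains its infimum at some $(\encoder^{\mathrm{opt}}, \decoder^{\mathrm{opt}})$. Next, any global minimizer is in particular a local minimizer, hence a stationary point, so Theorem~\ref{th:nonuni_landscape} applies and it has the form~\eqref{eq:nonuni_stationary_w1_strong},~\eqref{eq:nonuni_stationary_w2_strong} for some signed-permutation $P$. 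Invoking Lemma~\ref{lemma:global_optimum_necessary} and Lemma~\ref{lemma:global_optimum_necessary2}, the matrix $P$ at any global minimizer is full rank and diagonal; a full-rank diagonal signed permutation is precisely a diagonal $\pm 1$ matrix $\bar{I}$, so every global minimizer is one of the points~\eqref{eq:sufficient_global_minima_w1},~\eqref{eq:sufficient_global_minima_w2}.

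It then remains to show that the loss value does not depend on the sign pattern $\bar{I}$, which promotes ``some point~\eqref{eq:sufficient_global_minima_w1} is optimal'' to ``every point~\eqref{eq:sufficient_global_minima_w1} is optimal.'' Writing $D = \identity - \Lambda\sigmadiag^{-2}$ (diagonal), the reconstruction map is $\decoder^*\encoder^* = U D^{1/2}\bar{I}\,\bar{I} D^{1/2}U\transpose = U D U\transpose$, independent of $\bar{I}$ because $\bar{I}\sq = \identity$; and the regularizer equals $2\,\trace(\Lambda^{1/2}\bar{I} D \bar{I}\Lambda^{1/2}) = 2\,\trace(\Lambda D)$, again independent of $\bar{I}$ since the diagonal $\bar{I}$ commutes with $D$. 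Hence $\loss_{\sigma'}$ is constant over the family~\eqref{eq:sufficient_global_minima_w1},~\eqref{eq:sufficient_global_minima_w2}, and since this family contains a global minimizer, every member of it is a global minimizer.

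The main obstacle is conceptual rather than computational: one must be sure the logic is not circular and that existence genuinely closes the gap. The necessary-condition lemmas are proved independently (by exhibiting explicit descent directions at rank-deficient or non-diagonal $P$), so using them here introduces no circularity; the only real work is the routine sign-invariance computation above. An alternative, fully self-contained route avoids the necessary-condition lemmas by directly evaluating $\loss_{\sigma'}$ at the candidate, obtaining $-\sum_{i=1}^{k}\lambda_i\sq\sigma_i^{-2} + 2\sum_{i=1}^{k}\lambda_i + \sum_{i=k+1}^{m}\sigma_i\sq$, and comparing this against the value at an arbitrary stationary point indexed by a subset $\indset$ of learned components; this confirms the minimum is attained exactly when $\indset = \{1,\dots,k\}$, but it demands more delicate bookkeeping of how the penalties $\Lambda$ pair with the selected principal directions.
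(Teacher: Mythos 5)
Your proposal is correct and follows essentially the same route as the paper's proof: restrict global minimizers to the candidate family via the Landscape Theorem and the two necessary-condition lemmas, then show the loss is constant over all sign patterns $\bar{I}$ (using $\bar{I}\sq = \identity$ and commutativity of diagonal matrices) to conclude every member is a global minimum. Your explicit coercivity argument for existence of a minimizer is a detail the paper leaves implicit, but it does not change the structure of the argument.
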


\begin{proof}[Proof of Lemma~\ref{lemma:global_minima_sufficient}]
Because the objective of the non-uniform regularized LAE is differentiable everywhere for $\encoder$ and $\decoder$, all local minima (therefore also global minima) must occur at stationary points. Theorem~\ref{th:nonuni_landscape} shows that the stationary points must be of the form~\eqref{eq:nonuni_stationary_w1_strong} and~\eqref{eq:nonuni_stationary_w2_strong}. Lemma~\ref{lemma:global_optimum_necessary} further shows that a necessary condition for the global minima is when $l=k$, i.e. the encoder and decoder must be of the form in~\eqref{eq:sufficient_global_minima_w1} and~\eqref{eq:sufficient_global_minima_w2}.

In order to prove that~\eqref{eq:sufficient_global_minima_w1} and~\eqref{eq:sufficient_global_minima_w2} are sufficient condition for global minima, it is sufficient to show that all $\encoder^*$, $\decoder^*$ that satisfy~\eqref{eq:sufficient_global_minima_w1} and~\eqref{eq:sufficient_global_minima_w2} (i.e. all $\bar{I}$) result in the same loss. Notice that $\bar{I}^2 = \identity$, then:

\begin{align}
\nonumber
    \loss_{\sigma'} (\encoder^*, \decoder^*) &= \frac{1}{n}\fnorm{\x - \decoder^* \encoder^* \x}\sq + \fnorm{\Lambda^{1/2} \encoder^*}\sq + \fnorm{\decoder^* \Lambda^{1/2}}\sq\\
\nonumber
    \begin{split}
    &= \frac{1}{n} \fnorm{\x - \decoder^* \encoder^* \x}\sq + \trace (\Lambda^{1/2} \encoder^* (\encoder^*)\transpose \Lambda^{1/2}) \\
    &~~~~ + \trace (\Lambda^{1/2} (\decoder^*)\transpose \decoder^* \Lambda^{1/2})
    \end{split}\\
\nonumber
    \begin{split}
        &= \frac{1}{n} \fnorm{\x - U(I - \Lambda\sigmadiag^{-2})^{\frac{1}{2}} \bar{I}^2 (I - \Lambda\sigmadiag^{-2})^{\frac{1}{2}}U^T \x}\sq\\
        &~~~~ + 2 \trace (\Lambda^{1/2} \bar{I}  (I - \Lambda\sigmadiag^{-2})^{\frac{1}{2}}U^T U(I - \Lambda\sigmadiag^{-2})^{\frac{1}{2}} \bar{I}\transpose \Lambda^{1/2})
    \end{split}\\
\label{eq:nonuniloss_const_wrt_Ibar}
    \begin{split}
        &= \frac{1}{n} \fnorm{\x - U(I - \Lambda\sigmadiag^{-2})U^T \x}\sq + 2\trace(\Lambda (I - \Lambda\sigmadiag^{-2}))
    \end{split}
\end{align}

According to~\eqref{eq:nonuniloss_const_wrt_Ibar}, $\loss_{\sigma'} (\encoder^*, \decoder^*)$ is constant with respect to $\bar{I}$. Hence,~\eqref{eq:sufficient_global_minima_w1} and~\eqref{eq:sufficient_global_minima_w2} are sufficient conditions for global minima of the non-uniform $\ell_2$ regularized LAE objective.
\end{proof}

\begin{proof}[\textbf{Proof of Theorem~\ref{th:nonuni_global_optima}}]
From Lemma~\ref{lemma:global_optimum_necessary},~\ref{lemma:global_optimum_necessary2}, and~\ref{lemma:global_minima_sufficient}, we conclude that the global minima of the non-uniform $\ell_2$ regularized LAE are achieved if and only if the encoder ($\encoder^*$) and decoder ($\decoder^*$) are of the form in~\eqref{eq:sufficient_global_minima_w1} and~\eqref{eq:sufficient_global_minima_w2}, i.e. ordered, axis-aligned individual principal component directions.

We have proven in Lemma~\ref{lemma:global_optimum_necessary} that for $l < k$, there exists a direction for which the second derivative of the objective is negative. 
We have proven also that stationary points with $l = k$ are either global optima, or saddle points (Lemma~\ref{lemma:global_optimum_necessary2},~\ref{lemma:global_minima_sufficient}). Hence, there do not exist local minima that are not global minima. 
\end{proof}
\subsection{Proof of local linear convergence of RAG}
\label{app:proofs:rotation_linear_convergence}
\begin{proof}[Proof of Theorem~\ref{th:local_linear_convergence}]
Applying Assumption~\ref{assumption:subspace_converged}, the instantaneous update for RAG is,
\begin{align*}
    &\dot{W}_1 = \frac{1}{n}A \encoder\\
    &\dot{W}_2 = \frac{1}{n} \decoder A
\end{align*}
The instantaneous update for $\y\y\transpose$ is,
\begin{align*}
    \frac{d}{dt}(\y\y\transpose) = \frac{1}{n} (A \y\y\transpose + \y\y\transpose A\transpose)
\end{align*}

Let $y_{ij}$ be the $i, j^{th}$ element of $\y\y\transpose$, and $i < j$, then,

\begin{align}
\nonumber
    &\frac{d}{dt} y_{ii} = \frac{2}{n} (- \sum_{l=1}^{i-1} y_{il}\sq + \sum_{l=i+1}^{k} y_{il}\sq)\\
\label{eq:d_yyt_ij}
    &\frac{d}{dt} y_{ij} = - \frac{1}{n} (y_{ii} - y_{jj}) y_{ij}
        + \frac{2}{n} (- \sum_{l=1}^{i-1} y_{il}y_{jl} + \sum_{l=j+1}^{k} y_{il}y_{jl})
\end{align}
With Assumption~\ref{assumption:yyt_near_diagonal}, we can write~\eqref{eq:d_yyt_ij} as:
\begin{align}
    \label{eq:d_yyt_ij_scale}
    \frac{d}{dt}y_{ij} =
    - \frac{1}{n} (y_{ii} - y_{jj}) y_{ij} + \mathcal{O}(\frac{\epsilon\sq}{k})
\end{align}

The first term in~\eqref{eq:d_yyt_ij_scale} collects the products of diagonal and off-diagonal elements, and is of order $\mathcal{O}(\frac{\epsilon}{k})$. The second term in~\eqref{eq:d_yyt_ij_scale} collects second-order off-diagonal terms. With $0 < \epsilon \ll 1$, we can drop the second term. 

Also, applying Assumption~\ref{assumption:ordered_diagonal}, we have $y_{ii} > y_{jj}$.
\begin{align*}
    \frac{d}{dt} \vert y_{ij} \vert \approx - \frac{1}{n} (y_{ii} - y_{ij}) \vert y_{ij} \vert
\end{align*}

The instantaneous change of the ``non-diagonality'' $Nd(\frac{1}{n} \y\y\transpose)$ is,
\begin{align*}
    \frac{d}{dt} Nd(\frac{1}{n}\y\y\transpose)
    &= \frac{d}{dt} \bigg( 2 \sum_{i=1}^{k-1} \sum_{j=i+1}^k \frac{1}{n} \vert y_{ij} \vert \bigg)
    =  2 \sum_{i=1}^{k-1} \sum_{j=i+1}^k \frac{1}{n} \bigg(\frac{d}{dt}\vert y_{ij} \vert \bigg)\\
    &\approx 2 \sum_{i=1}^{k-1} \sum_{j=i+1}^k \frac{1}{n} \bigg( - \frac{1}{n}(y_{ii} - y_{jj}) \vert y_{ij} \vert \bigg)\\
    &\leq - g \cdot \bigg( 2 \sum_{i=1}^{k-1} \sum_{j=i+1}^k \frac{1}{n} \vert y_{ij} \vert \bigg)\\
    &= - g \cdot Nd(\frac{1}{n}\y\y\transpose)
\end{align*}

Hence, $Nd(\frac{1}{n}\y\y\transpose)$ converges to 0 with an instantaneous linear rate of $g$.
\end{proof}
\subsection{Convergence of latent space rotation to axis-aligned solutions}
\label{app:proofs:rotation_global_convergence}
We first state LaSalle's invariance principle~\citep{khalil2002nonlinear} in Lemma~\ref{lemma:lesalle}, which is used in Theorem~\ref{th:rotation_asymp_stability} to prove the convergence of latent space rotation to the set of axis-aligned solutions.

\begin{lemma}[LaSalle's invariance principle (local version)]
\label{lemma:lesalle}
Given dynamical system $\dot{x} = f(x)$ where $x$ is a vector of variables, and $f(x^*) = 0$. If a continuous and differentiable real-valued function $V(x)$ satisfies,
\begin{align*}
    \dot{V}(x) \leq 0~\mathrm{for}~\forall~x
\end{align*}
Then $\dot{V}(x) \rightarrow 0$ as $t \rightarrow \infty$.

Moreover, if there exists a neighbourhood $N$ of $x^*$ such that for $x \in N$,
\begin{align*}
    V(x) > 0 ~\mathrm{if}~ x \neq x^*
\end{align*}
And,
\begin{align*}
    \dot{V}(x) = 0 ~\forall~t \geq 0 \implies x(t) = x^* ~\forall~t\geq 0
\end{align*}
Then $x^*$ is locally asymptotically stable.
\end{lemma}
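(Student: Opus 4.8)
The plan is to prove the two conclusions in sequence: first the dissipation claim $\dot V(x(t)) \to 0$, and then the asymptotic stability of $x^*$, both via the standard Lyapunov / invariant-set machinery for autonomous systems $\dot x = f(x)$. Throughout I would work inside a compact, positively invariant sublevel set $\Omega_c = \{x : V(x) \le c\}$, which exists for small $c$ because $V$ is positive definite near $x^*$; securing such a region is what makes the limiting arguments rigorous.

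For the first part, I would fix a trajectory $x(t)$ and observe that along it $\frac{d}{dt} V(x(t)) = \nabla V(x(t))^\top f(x(t)) = \dot V(x(t)) \le 0$, so $t \mapsto V(x(t))$ is monotonically nonincreasing. Restricted to $\Omega_c$ the trajectory is bounded and $V(x(t))$ is bounded below, hence converges to a finite limit $V_\infty$. Integrating the dissipation inequality then gives $\int_0^\infty (-\dot V(x(t)))\,dt = V(x(0)) - V_\infty < \infty$. To upgrade this finiteness to pointwise convergence $\dot V(x(t)) \to 0$, I would invoke Barbalat's lemma: since $f$ is continuous and the trajectory lies in a compact set, $\dot V(x(t))$ is uniformly continuous in $t$, and a nonnegative, uniformly continuous, integrable function must vanish at infinity.

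For the second part, I would combine Lyapunov stability with LaSalle's invariant-set argument. Stability follows from positive definiteness of $V$ on the neighbourhood $N$ together with $\dot V \le 0$: given any $\varepsilon$-ball about $x^*$, choose $c$ so small that $\Omega_c$ lies inside it; since $V$ is nonincreasing, any trajectory entering $\Omega_c$ stays there, so trajectories starting sufficiently close to $x^*$ remain $\varepsilon$-close. For attractivity I would analyse the $\omega$-limit set $L$ of a trajectory in $\Omega_c$. Because $\Omega_c$ is compact and positively invariant, $L$ is nonempty, compact, and invariant, and $V \equiv V_\infty$ on $L$ by continuity together with monotone convergence of $V$ along the trajectory; consequently $\dot V \equiv 0$ on $L$. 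Thus $L$ is an invariant subset of $\{\dot V = 0\}$, and the hypothesis that $\dot V(x) = 0$ for all $t \ge 0$ forces $x(t) = x^*$ for all $t$, which in turn forces $L = \{x^*\}$. Hence $x(t) \to x^*$, and combined with stability this yields local asymptotic stability.

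The main obstacle, and the step demanding the most care, is the passage from monotone convergence of $V$ to $\dot V(x(t)) \to 0$ in the first conclusion: monotonicity of $V$ by itself does not control $\dot V$, so the argument genuinely requires either Barbalat's lemma (which needs uniform continuity of $\dot V$ along the trajectory, hence boundedness of the trajectory and of $f$) or, equivalently, the compact invariant $\omega$-limit set used in the second part. Since the statement as given suppresses these regularity and boundedness hypotheses, a rigorous write-up must first produce a compact positively invariant region — most naturally a sublevel set of $V$ — before either conclusion can be made precise, and this is exactly the role played by positive definiteness of $V$ near $x^*$.
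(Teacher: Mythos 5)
The paper does not prove this lemma at all: it is quoted verbatim as a classical result with a citation to Khalil's \emph{Nonlinear Systems}, and is only ever invoked as a black box in the proof of Theorem~\ref{th:rotation_asymp_stability}. Your argument is the standard textbook proof of that result and is essentially sound: monotonicity of $V$ along trajectories plus boundedness gives convergence of $V(x(t))$, Barbalat's lemma (via uniform continuity of $t \mapsto \dot V(x(t))$ on a compact positively invariant set) upgrades this to $\dot V(x(t)) \to 0$, and the $\omega$-limit-set argument combined with the hypothesis that the only trajectory contained in $\{\dot V = 0\}$ near $x^*$ is the equilibrium yields local asymptotic stability. You are also right that the first conclusion is false as literally stated without a boundedness or compactness hypothesis on the trajectory, and that a compact sublevel set of $V$ (which requires $V(x^*) = 0$, implicit in the usual notion of positive definiteness but not written in the lemma) is the natural way to supply it; in the paper's actual application the state evolves on the compact Stiefel manifold of orthogonal matrices, so this gap is harmless there. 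The only minor point I would tighten is that the lemma's hypothesis ``continuous and differentiable'' should be read as $C^1$ for the uniform-continuity step in Barbalat's lemma to go through.
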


In Section~\ref{sec:rotation-global-convergence}, we gave an informal statement of Theorem~\ref{th:rotation_asymp_stability}. Here, we state the theorem formally.

\begin{customthm}{4}[Global convergence to axis-aligned solutions]
\label{th:rotation_asymp_stability}
Let $O_0 \in \real^{k \times k}$ be an orthogonal matrix, $W \in \real^{k \times m}$ ($k < m$). $\x$ and $U$ are as defined in Section~\ref{sec:prelim}.
$\ut(\cdot)$ and $\lt(\cdot)$ are as defined in Algorithm~\ref{algo:rotation_gradient}. 
Consider the following dynamical system,
\begin{align}
\label{eq:rotation_only_matrix_update}
    &\dot{W} = \frac{1}{2n} (\ut (W \x \x W\transpose) - \lt (W \x \x W\transpose)) W\\
    \label{eq:rotation_initial_condition}
    &W(0) = O_0 U\transpose    
\end{align}
Then $W(t) \rightarrow PU\transpose$ as $t \rightarrow \infty$, where $P \in \real^{k\times k}$ is a permutation matrix with non-zero elements $\pm 1$.
Also, the dynamical system is asymptotically stable at $\tilde{I}U\transpose$, where $\tilde{I}$ is a diagonal matrix with diagonal entries $\pm 1$.
\end{customthm}

It is straightforward to show that~\eqref{eq:rotation_only_matrix_update} and~\eqref{eq:rotation_initial_condition} are equivalent to the instantaneous limit of RAG on the orthogonal subspace $\encoder=\decoder\transpose=OU\transpose$ ($O$ is an orthogonal matrix). To see this, notice that on the orthogonal subspace, the gradient of $\encoder$ and $\decoder$ with respect to the reconstruction loss are zero,
\begin{align*}
    &\nabla_{\encoder} \mathcal{L}(\encoder=OU\transpose,\decoder=UO\transpose;\x) = 0\\
    &\nabla_{\decoder} \mathcal{L}(\encoder=OU\transpose,\decoder=UO\transpose;\x) = 0
\end{align*}

Theorem~\ref{th:rotation_asymp_stability} states that in the instantaneous limit, an LAE that is initialized on the orthogonal subspace and is updated by Algorithm~\ref{algo:rotation_gradient} globally converges to the set of axis-aligned solutions. Moreover, the convergence to the set of \emph{ordered} axis-aligned solutions is asymptotically stable. We provide the proof below.

\begin{proof}

We first show that $W(t)$ remains on the orthogonal subspace, i.e. $W(t) = O(t) U\transpose$ for $\forall~t$, where $O(t)$ is orthogonal.
To reduce the notation clutter, we define $A(W) = \frac{1}{2n} (\mathrm{\ut} (W \x \x W\transpose) - \mathrm{\ut} (W \x \x W\transpose))$. We take the time derivative of $W W\transpose$,
\begin{align*}
    \frac{d (WW\transpose)}{dt} &= \dot{W}W\transpose + W\dot{W}\transpose = A(W) W W\transpose + W W\transpose A(W)\transpose = A(W) W W\transpose - W W\transpose A(W)
\end{align*}

The last inequality follows from the observation that $A(W)$ is skew-symmetric, so that $A(W)\transpose = - A(W)$. Since $W(0)W(0)\transpose = I$, and $W W\transpose = I \implies \frac{d (W W\transpose)}{dt} = 0$, we have,
\begin{align*}
    W(t)W(t)\transpose = I ~\mathrm{for}~ \forall~t\geq 0
\end{align*}
From the dynamical equation~\eqref{eq:rotation_only_matrix_update}, we know that $W(t)$ has the form $W(t) = G(t)U\transpose$ for some matrix $G(t) \in \real^{k\times k}$. We have, 
\begin{align*}
    W(t)W(t)\transpose = G(t)U\transpose U G(t)\transpose = G(t)G(t)\transpose = I
    \implies G(t) \textrm{ is orthogonal.}
\end{align*}

We move on to use LaSalle's invariance principle to prove Theorem~\ref{th:rotation_asymp_stability}. The rest of the proof is divided into two parts. In the first part, we prove that $W(t) \rightarrow PU\transpose$ as $t \rightarrow \infty$, i.e. $W(t)$ globally converges to axis-aligned solutions.
In the second part, we prove that the ordered, axis-aligned solution $\tilde{I}U\transpose$ is locally asymptotically stable.

Let $\Sigma = \frac{1}{n}\x\x\transpose$. We define $V(W)$ as,
\begin{align}
\label{eq:v(w)}
    V(W) = \trace ((\sigmadiag\sq - W \Sigma W\transpose) D)
\end{align}
Where $\sigmadiag$ is as defined in Section~\ref{sec:prelim}, and $D = \mathrm{diag} (d_1, \dots, d_k)$, with $d_1 > \cdots > d_k > 0$. 

Note that definition~\eqref{eq:v(w)} is the Brockett cost function~\citep{absil2009optimization} with an offset. The Brockett cost function achieves minimum when the rows of $W$ are the eigenvectors of $\Sigma$. See Appendix~\ref{app:brockett} for a detailed discussion of the connection between the rotation augmented gradient and the Brockett cost function.

\paragraph{Part 1 (global convergence to axis-aligned solutions)} In this part, we compute $\dot{V}(W)$, and invoke the first part of LaSalle's invariance principle to show global convergence to axis-aligned solutions.

Denote the (transposed) $i^{th}$ row of $W$ as $w_i \in \real^{m \times 1}$. We rewrite~\eqref{eq:rotation_only_matrix_update} in terms of rows of $W$,
\begin{align*}
    \dot{w}_i = - \frac{1}{2} \sum_{j=1}^{i-1} (w_i\transpose \Sigma w_j) w_j + \frac{1}{2} \sum_{j=i+1}^{k} (w_i\transpose \Sigma w_j) w_j
\end{align*}

We proceed to compute $\dot{V}(W)$,
\begin{align*}
    \dot{V}(W) &= - 2 \sum_{i=1}^k d_i w_i\transpose \Sigma \dot{w}_i
    = \sum_{i=1}^k d_i \bigg[\sum_{j=1}^{i-1} (w_i\transpose \Sigma w_j)\sq - \sum_{j=i+1}^{k} (w_i\transpose \Sigma w_j)\sq\bigg]\\
    &= \sum_{i=2}^k \sum_{j=1}^{i-1} d_i (w_i\transpose \Sigma w_j)\sq - \sum_{i=1}^{k-1} \sum_{j=i+1}^k d_i (w_i\transpose \Sigma w_j)\sq\\
    &= \sum_{i=2}^k \sum_{j=1}^{i-1} d_i (w_i\transpose \Sigma w_j)\sq - \sum_{j=1}^{k-1} \sum_{i=j+1}^k d_j (w_i\transpose \Sigma w_j)\sq\\
    &= \sum_{i=2}^k \sum_{j=1}^{i-1} d_i (w_i\transpose \Sigma w_j)\sq - \sum_{i=2}^{k} \sum_{j=1}^{i-1} d_j (w_i\transpose \Sigma w_j)\sq\\
    &= \sum_{i=2}^k \sum_{j=1}^{i-1} (d_i - d_j) (w_i\transpose \Sigma w_j)\sq 
\end{align*}

Since $d_i < d_j$ for $\forall~i > j$, we have, 
\begin{align}
\label{eq:v(w)<=0}
 \dot{V}(W) \leq 0
\end{align}

The equality in~\eqref{eq:v(w)<=0} holds if and only if $\forall~i \neq j$, $w_i\transpose \Sigma w_j = 0$, or, written in matrix form, $W\x\x\transpose W \transpose$ is diagonal.
\begin{align}
\label{eq:v_dot=0_implication}
    \dot{V}(W) = 0 \iff W\x\x\transpose W\transpose ~\mathrm{is ~diagonal}
\end{align}

Since we also have $W = OU\transpose$, and using the SVD of $\x$, we can see that~\eqref{eq:v_dot=0_implication} is equivalent to,
\begin{align*}
    W = PU\transpose
\end{align*}
Also, $W = PU\transpose$ are stationary points of the dynamical equation~\eqref{eq:rotation_only_matrix_update}. By LaSalle's invariance principle, we have,
\begin{align*}
    \dot{V}(W) \rightarrow 0 ~\mathrm{as}~ t \rightarrow \infty
    \implies W(t) \rightarrow PU\transpose ~\mathrm{as}~ t \rightarrow \infty
\end{align*}
$W(t)$ globally converges to the set of axis-aligned solutions. This concludes the first part of the proof. 

\paragraph{Part 2 (asymptotic convergence to optimal representation)} We break down this part of the proof into two steps. First, we show that $V(W)$ is positive definite locally at $\tilde{I}U\transpose$. Then, we show that $\tilde{I}U\transpose$ is the only solution to $\dot{V}(W) = 0$ in its neighbourhood.

We first show that $V(W)$ is positive definite at $W = \tilde{I}U\transpose$. Note that columns of $U$ contain the ordered left singular vectors of $\x$. We can rewrite~\eqref{eq:v(w)} as,
\begin{align}
\label{eq:v(w)_as_sum}
    V(W) = - \trace (O \sigmadiag\sq O\transpose D) + \sum_{i=1}^k d_i \sigma_i\sq = - \sum_{i=1}^k \sum_{j=1}^k d_i \sigma_j\sq O_{ij}\sq + \sum_{i=1}^k d_i \sigma_i\sq
\end{align}
We use $O_{ij}$ to denote the component with row and column index $i$, $j$ respectively. \eqref{eq:v(w)_as_sum} is minimized when $O = \tilde{I}$ and takes value zero. It is positive everywhere else, and thus, $V(W)$ is positive definite at $W = \tilde{I} U\transpose$.

Now, we show that $W = \tilde{I}U\transpose$ is the only solution to $\dot{V}(W) = 0$ within some neighbourhood around itself. Since permutation matrices $P$ are finite and distinct, we can find a neighbourhood around each $\tilde{I}$ on the Stiefel manifold $V_k(\real^k)$, in which $W = \tilde{I}U\transpose$ is the unique solution for $\dot{V}(W)$. We mathematically state this below,
\begin{align*}
    &\exists\textrm{ some neighbourhood $N$ on $V_k(\real^k)$ around $\tilde{I}$, such that}\\
    &\big[O \in N, ~\dot{V}(OU\transpose) = 0 ~\forall~t \geq 0 \big] \implies O = \tilde{I}
\end{align*}

This means that local to $W = \tilde{I} U\transpose$, $\dot{V}(W) = 0$ for $\forall~t \geq 0$ implies $W = \tilde{I}U\transpose$.

We have satisfied all the necessary conditions to invoke LaSalle's invariance principle. Thus, $W = \tilde{I}U\transpose$ is locally asymptotically stable.
\end{proof}

\section{Connection of non-uniform $\ell_2$ regularization to linear VAE with diagonal covariance}
Consider the following VAE model,
\begin{align*}
    p(x | z) &= \mathcal{N} (W z + \mu, \sigma^2 \identity)\\
    q(z | x) &= \mathcal{N} (V (x - \mu), D)
\end{align*}

Where $W$ is the decoder, $V$ is the encoder, and $D$ is the diagonal covariance matrix. The ELBO objective is,
\begin{align*}
    \mathrm{ELBO} = - \mathit{KL}(q(z | x) || p(z)) + \mathbb{E}_{q(z | x)}[\log p(x | z)]
\end{align*}

\label{appendix:connection_to_vae}
It's shown in~\cite{dont-blame-elbo} that such a linear VAE with diagonal latent covariance can learn axis-aligned principal component directions. We show in this section that training such a linear VAE with ELBO is closely related to training a non-uniform $\ell_2$ regularized LAE.

As derived in Appendix C.2 of~\cite{dont-blame-elbo}, the gradients of the ELBO with respect to $D, V$ and $W$, are,
\begin{align*}
    \nabla D &= \frac{n}{2} (D^{-1} - \identity - \frac{1}{\sigma^2}\mathrm{diag}(W\transpose W))\\
    \nabla V &= \frac{n}{\sigma^2} (W\transpose  - (W\transpose W + \sigma^2 \identity)V)\Sigma\\
    \nabla W &= \frac{n}{\sigma^2} (\Sigma V\transpose - DW  - WV\Sigma V\transpose)
\end{align*}
Where $\Sigma = \frac{1}{n} \x\x\transpose$.
The optimal $D^* = \sigma^2 (\mathrm{diag}(W\transpose W) + \sigma^2 \identity)^{-1}$. 
The ``balanced" weights in this case is $V = M^{-1} W\transpose$, $M = W\transpose W + \sigma^2 \identity$

Assume optimal $D = D^*$ and balanced weights, we can rewrite the gradients. First, look at the gradient for $V$,
\begin{align*}
    \nabla V 
    &=  \frac{n}{\sigma^2} (W\transpose  - (W\transpose W + \sigma^2 \identity)V)\Sigma\\
    &=  \frac{n}{\sigma^2} ((W\transpose W + \sigma^2 \identity) V  - (W\transpose W + \sigma^2 \identity)V)\Sigma\\
    &= 0
\end{align*}
The gradient for $V$ simply forces $V$ to be ``balanced" with $W$.
Then for $W$,
\begin{align*}
    \nabla W 
    &=  \frac{n}{\sigma^2} (\Sigma V\transpose - DW  - WV\Sigma V\transpose)\\
    &=  \frac{n}{\sigma^2} (
        \Sigma V\transpose 
        - \sigma^2 (\mathrm{diag}(W\transpose W) + \sigma^2 \identity)^{-1}W
        - WV\Sigma V\transpose)\\
    &= \frac{1}{\sigma^2} (
        \x\x\transpose V\transpose 
        - n \sigma^2 \mathrm{diag}(M)^{-1} W
        - WV\x\x\transpose V\transpose)\\
    &= \frac{1}{\sigma^2} (
        \x \y\transpose
        - n \sigma^2 \mathrm{diag}(M)^{-1} W
        - W\y\y\transpose)\\
    &= \frac{1}{\sigma^2} (\x - W\y)\y\transpose
        - n \cdot \mathrm{diag}(M)^{-1} W
\end{align*}

This is exactly non-uniform $\ell_2$ regularization on $W$. The $\ell_2$ weights are dependent on $W$.
\begin{align*}
     \mathrm{diag}(M)^{-1} &= \mathrm{diag} (W\transpose W + \sigma^2 \identity)^{-1}
\end{align*}
\section{Connection between the rotation augmented gradient and the Brockett cost function}
\label{app:brockett}

In this section, we discuss the connection between our rotation augmented gradient and the gradient of the Brockett cost function. In particular, we show that the two updates share similar forms.

Since the Brockett cost function is defined on the Stiefel manifold, we assume throughout this section that $ \encoder = \decoder\transpose$, and $\decoder\transpose \decoder = \identity$. Let $\Sigma = \frac{1}{n} \x\x\transpose$ be the data covariance, the Brockett cost function is,
\begin{align*}
    \trace (\decoder\transpose \Sigma \decoder N)~~~\text{subj. to}~~~~ \decoder\transpose \decoder = \identity_k ~~(\text{i.e.}~ \decoder \in \mathrm{St}(k, m))
\end{align*}

Where $ N = \mathrm{diag}(\mu_1,\dots, \mu_k) $, and $ 0 < \mu_1 < \cdots < \mu_k $ are constant coefficients. To make the gradient form more consistent with the rotation augmented gradient, we switch the sign of the loss, and reverse the ordering of the diagonal matrix $ N $. This does not change the optimization problem, due to the constraint that $ \decoder $ is on the Stiefel manifold. We define,
\[
\mathcal{L}_B (\decoder) = - \trace (\decoder \transpose \Sigma \decoder D)~~~\text{subj. to}~~~~ \decoder\transpose \decoder = \identity_k
\]
Where $ D = \mathrm{diag}(d_1, \dots, d_k) $, $ d_1 > \cdots > d_k > 0 $.
Let $ \mathrm{skew}(M) = \frac{1}{2} (M - M\transpose) $, the gradient of the cost function on the Stiefel manifold is,

\[
\nabla_{\decoder} \mathcal{L}_B = - 2(\identity - \decoder \decoder \transpose) \Sigma \decoder D - \decoder \mathrm{skew}(2 \decoder\transpose \Sigma \decoder D)
\]

The gradient descent update in the continuous time limit is,
\begin{equation}\label{eq:brocket_update}
\dot{\decoder} = 2(\identity - \decoder\decoder\transpose) \Sigma \decoder D + 2 \decoder \mathrm{skew}(\decoder\transpose \Sigma \decoder D)
\end{equation}

\paragraph{Rotation augmented gradient}
With $ \encoder\transpose = \decoder $, the rotation augmented gradient update is,

\begin{equation}\label{eq:rotation_update}
\dot{\decoder} = 2(\identity - \decoder\decoder\transpose) \Sigma \decoder - 2 \decoder \mathrm{skew}( \ut(\decoder\transpose\Sigma \decoder))
\end{equation}

The updates~\eqref{eq:brocket_update} and~\eqref{eq:rotation_update} appear to have similar forms. We can make the connection more obvious with further manipulation. We express the second term in~\eqref{eq:brocket_update} with the triangular masking operations $ \ut $ and $ \lt $,
\begin{align*}
\mathrm{skew}(\decoder\transpose \Sigma \decoder D)
&= \mathrm{skew}(\ut(\decoder\transpose \Sigma \decoder D) + \lt(\decoder\transpose \Sigma \decoder D))\\
&= \mathrm{skew}(\ut(\decoder\transpose \Sigma \decoder D) - \lt(\decoder\transpose \Sigma \decoder D)\transpose)\\
&= \mathrm{skew}(\ut(\decoder\transpose \Sigma \decoder) D - \big(\lt(\decoder\transpose \Sigma \decoder) D\big)\transpose)\\
&= \mathrm{skew}(\ut(\decoder\transpose \Sigma \decoder) D - D \lt(\decoder\transpose \Sigma \decoder)\transpose)\\
&= \mathrm{skew}(\ut(\decoder\transpose \Sigma \decoder) D - D \ut(\decoder\transpose \Sigma \decoder))
\end{align*}

Then, we write the masks explicitly with element-wise multiplications,
\begin{align*}
\mathrm{skew}(\decoder\transpose \Sigma \decoder D)
&= \mathrm{skew} (
	\bigg(
		\begin{bmatrix}
		1 & \cdots & 1\\
		  & \ddots & \vdots\\
		  && 1
		\end{bmatrix}
		\circ 
		\decoder\transpose \Sigma \decoder
	\bigg)
	\begin{bmatrix}
		d_1 \\
		& \ddots\\
		&& d_k
	\end{bmatrix}\\
	&~~- 
	\begin{bmatrix}
		d_1 \\
		& \ddots\\
		&& d_k
	\end{bmatrix}
	\bigg(
		\begin{bmatrix}
			1 & \cdots & 1\\
			& \ddots & \vdots\\
			&& 1
		\end{bmatrix}
		\circ 
		\decoder\transpose \Sigma \decoder
	\bigg)
)\\
&= \mathrm{skew}\bigg(
	\begin{bmatrix}
		0	&	d_2 - d_1	& d_3 - d_1	&	\cdots 	& d_k - d_1\\
			&	0			& d_3 - d_2	&	\cdots	& d_k - d_2\\
			&				&	\ddots	&			&	\vdots\\
			&				&			&	0		& d_k - d_{k-1}\\
			&				&			&			&	0
	\end{bmatrix}
	\circ
	\decoder\transpose \Sigma \decoder
\bigg)
\end{align*}

Finally, we compare the two updates below,
\paragraph{Brockett update}
\[
\dot{\decoder} = 2(\identity - \decoder\decoder\transpose) \Sigma \decoder D - 2 \decoder \mathrm{skew}\bigg(
	\begin{bmatrix}
	0	&	d_1 - d_2	& d_1 - d_3	&	\cdots 	& d_1 - d_k\\
	&	0			& d_2 - d_3	&	\cdots	& d_2 - d_k\\
	&				&	\ddots	&			&	\vdots\\
	&				&			&	0		& d_{k-1} - d_k\\
	&				&			&			&	0
	\end{bmatrix}
\circ
\decoder\transpose \Sigma \decoder
\bigg)
\]

\paragraph{Rotation augmented gradient update}
\[
\dot{\decoder} = 2(\identity - \decoder\decoder\transpose) \Sigma \decoder - 2 \decoder \mathrm{skew}\bigg(
\begin{bmatrix}
	0 &	1 & \cdots & 1\\
	& 0 & \ddots  & \vdots\\
	&	& \ddots  & 1\\
	&	&	&	0
\end{bmatrix}
 \circ (\decoder\transpose\Sigma \decoder)\bigg)
\]

Both algorithms account for the rotation using the off-diagonal part of $ \decoder\transpose \Sigma \decoder $. The rotation augmented gradient applies binary masking, whereas the Brockett update introduces additional coefficients ($d_1, \dots, d_k$) that ``weights'' the rotation.
\section{Experiment details}
\label{app:exp_details}
We provide the experiment details in this section. The code is provided at \url{https://github.com/XuchanBao/linear-ae}.

\subsection{Convergence to optimal representation}
\label{app:exp_mnist}
In this section, we give the details of experiments for convergence to the optimal representation on the MNIST dataset (Figure~\ref{fig:convergence} and~\ref{fig:matrix_visualization}). 

The dataset is the MNIST training set, consisting of 60,000 images of size $28\times 28$ ($m=784)$). The latent dimension is $k=20$. The data is pixel-wise centered around zero. Training is done in full-batch mode. 

The regularization parameters $\lambda_1, \dots, \lambda_k$ for the non-uniform $\ell_2$ regularization are chosen to be $\sqrt{\lambda_1}=0.1$, $\sqrt{\lambda_k}=0.9$, and $\sqrt{\lambda_2}, \dots,\sqrt{\lambda_{k-1}}$ equally spaced in between.

The prior probabilities for the nested dropout and the deterministic variant of nested dropout are both chosen to be: $p_B(b) = \rho^b (1-\rho)$ for $b < k$, and $p_B(k) = 1 - \sum_{b=1}^{k-1} p_B(b)$. We choose $\rho=0.9$ for our experiments. This is consistent with the geometric distribution recommended in~\citet{nested-dropout}, due to its memoryless property.

The network weights are initialized independently with $\mathcal{N}(0, 10^{-4})$. We experiment with two optimizers: Nesterov accelerated gradient descent with momentum 0.9, and Adam optimizer. The learning rate for each model and each optimizer is searched to be optimal. See Table~\ref{tab:lr_mnist} for the search details, and the optimal learning rates.

\subsection{Scalability to latent representation sizes}
The details of the experiments for scalability to latent representation sizes correspond to Figure~\ref{fig:mnist_vary_hidden_size}.

The synthetic dataset has 5000 randomly generated data points, each with dimension $m=1000$. The singular values of the data are equally spaced between $1$ and $100$. In order to test the scalability of different models to the latent representation sizes, we run experiments with 10 different latent dimension sizes: $k=2,5,10,20,50,100,200,300,400,500$.

The regularization parameters $\lambda_1,\dots,\lambda_k$ for the non-uniform $\ell_2$ regularization are chosen to be $\sqrt{\lambda_1}=0.1$, $\sqrt{\lambda_k}=10$, and $\sqrt{\lambda_2}, \dots,\sqrt{\lambda_{k-1}}$ equally spaced in between.

The prior probabilities for the nested dropout and the deterministic variant of nested dropout, the initialization scheme for the network weights, and the optimizers are chosen in the same way as in Section~\ref{app:exp_mnist}.

We perform a search to find the optimal learning rates for each model, each optimizer with different latent dimensions. See Table~\ref{tab:lr_synth} for the search details, and Table~\ref{tab:lr_synth_final} for the learning rates used in the experiments.

\begin{table}[ht]
    \centering
    \caption{Learning rate search values for experiments on MNIST (Figure~\ref{fig:convergence} and~\ref{fig:matrix_visualization}). The optimal learning rates are labelled in boldface. Note that the Adam optimizer does not apply to RAG.}
    \label{tab:lr_mnist}
    \begin{tabular}{ccc}
    \toprule
    Model & Nesterov learning rates & Adam learning rates\\\midrule
    Uniform $\ell_2$ 
    & $\mathbf{1\mathrm{\mathbf{e}}{-3}}$
    & $\mathbf{1\mathrm{\mathbf{e}}{-3}}$
    \\
    Non-uniform $\ell_2$ 
    & $1\mathrm{e}{-4}$, $\mathbf{3\mathrm{\mathbf{e}}{-4}}$, $1\mathrm{e}{-3}$, $3\mathrm{e}{-3}$
    & $1\mathrm{e}{-3}$, $3\mathrm{e}{-3}$, $\mathbf{1\mathrm{\mathbf{e}}{-2}}$, $3\mathrm{e}{-2}$\\
    Rotation  
    & $1\mathrm{e}{-3}$, $\mathbf{3\mathrm{\mathbf{e}}{-3}}$, $1\mathrm{e}{-2}$
    & --- \\
    Nested dropout (nd) 
    & $1\mathrm{e}{-2}$, $\mathbf{3\mathrm{\mathbf{e}}{-2}}$, $1\mathrm{e}{-1}$ 
    & $3\mathrm{e}{-3}$, $\mathbf{1\mathrm{\mathbf{e}}{-2}}$, $3\mathrm{e}{-2}$, $1\mathrm{e}{-1}$\\
    Deterministic nd
    & $1\mathrm{e}{-2}$, $\mathbf{3\mathrm{\mathbf{e}}{-2}}$, $1\mathrm{e}{-1}$
    & $3\mathrm{e}{-3}$, $\mathbf{1\mathrm{\mathbf{e}}{-2}}$, $3\mathrm{e}{-2}$, $1\mathrm{e}{-1}$\\
    Linear VAE 
    & $3\mathrm{e}{-4}$, $\mathbf{1\mathrm{\mathbf{e}}{-3}}$, $3\mathrm{e}{-3}$
    & $3\mathrm{e}{-4}$, $\mathbf{1\mathrm{\mathbf{e}}{-3}}$, $3\mathrm{e}{-3}$
    \\\bottomrule
    \end{tabular}
\end{table}

\begin{table}[ht]
\centering
\caption{Learning rate search values for experiments on the synthetic dataset (Figure~\ref{fig:mnist_vary_hidden_size}). The optimal learning rates are labelled in boldface. Note that Adam optimizer does not apply to RAG, even though the experiments are shown here.}
\label{tab:lr_synth}
\subtable[$k=20$]{
    \centering
        \begin{tabular}{ccc}
        \toprule
        Model & Nesterov learning rates & Adam learning rates\\\midrule
        Non-uniform $\ell_2$ 
        & $1\mathrm{e}{-4}$, $3\mathrm{e}{-4}$, $\mathbf{1\mathrm{\mathbf{e}}{-3}}$, $3\mathrm{e}{-3}$
        & $1\mathrm{e}{-3}$, $\mathbf{3\mathrm{\mathbf{e}}{-3}}$, $1\mathrm{e}{-2}$, $3\mathrm{e}{-2}$
        \\
        Rotation  
        & $3\mathrm{e}{-5}$, $\mathbf{1\mathrm{\mathbf{e}}{-4}}$, $3\mathrm{e}{-4}$, $1\mathrm{e}{-3}$
        & $1\mathrm{e}{-4}$, $\mathbf{3\mathrm{\mathbf{e}}{-4}}$, $1\mathrm{e}{-3}$
        \\
        Nested dropout (nd) 
        & $1\mathrm{e}{-4}$, $3\mathrm{e}{-4}$, $\mathbf{1\mathrm{\mathbf{e}}{-3}}$, $3\mathrm{e}{-3}$ 
        & $1\mathrm{e}{-3}$, $3\mathbf{e}{-3}$, $\mathbf{1\mathrm{\mathbf{e}}{-2}}$, $3\mathrm{e}{-2}$
        \\
        Deterministic nd
        & $1\mathrm{e}{-4}$, $3\mathrm{e}{-4}$, $\mathbf{1\mathrm{\mathbf{e}}{-3}}$, $3\mathrm{e}{-3}$ 
        & $1\mathrm{e}{-3}$, $\mathbf{3\mathrm{\mathbf{e}}{-3}}$, $1\mathrm{e}{-2}$, $3\mathrm{e}{-2}$
        \\
        Linear VAE 
        & $3\mathrm{e}{-5}$, $1\mathrm{e}{-4}$, $\mathbf{3\mathrm{\mathbf{e}}{-4}}$, $1\mathrm{e}{-3}$
        & $3\mathrm{e}{-4}$, $1\mathrm{e}{-3}$, $\mathbf{3\mathrm{\mathbf{e}}{-3}}$, $1\mathrm{e}{-2}$
        \\\bottomrule
        \end{tabular}
        \label{tab:lr_synth_k=20}
    }
    
    \subtable[$k=200$]{
    \centering
        \begin{tabular}{ccc}
        \toprule
        Model & Nesterov learning rates & Adam learning rates\\\midrule
        Non-uniform $\ell_2$ 
        & $1\mathrm{e}{-4}$, $3\mathrm{e}{-4}$, $\mathbf{1\mathrm{\mathbf{e}}{-3}}$, $3\mathrm{e}{-3}$
        & $1\mathrm{e}{-3}$, $\mathbf{3\mathrm{\mathbf{e}}{-3}}$, $1\mathrm{e}{-2}$, $3\mathrm{e}{-2}$
        \\
        Rotation  
        & $3\mathrm{e}{-5}$, $\mathbf{1\mathrm{\mathbf{e}}{-4}}$, $3\mathrm{e}{-4}$, $1\mathrm{e}{-3}$
        & $1\mathrm{e}{-4}$, $\mathbf{3\mathrm{\mathbf{e}}{-4}}$, $1\mathrm{e}{-3}$
        \\
        Nested dropout (nd) 
        & $1\mathrm{e}{-4}$, $3\mathrm{e}{-4}$, $\mathbf{1\mathrm{\mathbf{e}}{-3}}$, $3\mathrm{e}{-3}$ 
        & $3\mathrm{e}{-4}$, $1\mathbf{e}{-3}$, $\mathbf{3\mathrm{\mathbf{e}}{-3}}$, $1\mathrm{e}{-2}$
        \\
        Deterministic nd
        & $1\mathrm{e}{-4}$, $3\mathrm{e}{-4}$, $\mathbf{1\mathrm{\mathbf{e}}{-3}}$, $3\mathrm{e}{-3}$
        & $1\mathrm{e}{-3}$, $3\mathrm{e}{-3}$, $\mathbf{1\mathrm{\mathbf{e}}{-2}}$, $3\mathrm{e}{-2}$
        \\
        Linear VAE 
        & $3\mathrm{e}{-5}$, $1\mathrm{e}{-4}$, $\mathbf{3\mathrm{\mathbf{e}}{-4}}$, $1\mathrm{e}{-3}$
        & $3\mathrm{e}{-4}$, $\mathbf{1\mathrm{\mathbf{e}}{-3}}$, $3\mathrm{e}{-3}$, $1\mathrm{e}{-2}$
        \\\bottomrule
        \end{tabular}
        \label{tab:lr_synth_k=200}
    }
    
    \subtable[$k=500$]{
    \centering
        \begin{tabular}{cc}
        \toprule
        Model & Adam learning rates\\\midrule
        Deterministic nd
        & $3\mathrm{e}{-3}$, $\mathbf{1\mathrm{\mathbf{e}}{-2}}$, $3\mathrm{e}{-2}$, $1\mathrm{e}{-1}$
        \\\bottomrule
        \end{tabular}
        \label{tab:lr_synth_k=500}
    }
\end{table}

\begin{table}[ht]
    \centering
    \caption{Learning rate used for experiments on the synthetic dataset (Figure~\ref{fig:mnist_vary_hidden_size}). Note that Adam optimizer does not apply to RAG, even though the experiments are shown here.}
    \label{tab:lr_synth_final}
    \subtable[Nesterov accelerated gradient descent ($k\leq 50$)]{
    \begin{tabular}{cccccc}
        \toprule
        $k$ & 2 & 5 & 10 & 20 & 50
        \\\midrule
        Non-uniform $\ell_2$ &
        $1\mathrm{e}{-3}$ & $1\mathrm{e}{-3}$ & $1\mathrm{e}{-3}$ & $1\mathrm{e}{-3}$ & $1\mathrm{e}{-3}$
        \\\hline
        Rotation  
        & $1\mathrm{e}{-4}$ & $1\mathrm{e}{-4}$ & $1\mathrm{e}{-4}$ & $1\mathrm{e}{-4}$ & $1\mathrm{e}{-4}$
        \\\hline
        Nested dropout (nd) 
        & $1\mathrm{e}{-3}$ & $1\mathrm{e}{-3}$ & $1\mathrm{e}{-3}$ & $1\mathrm{e}{-3}$ & $1\mathrm{e}{-3}$
        \\\hline
        Deterministic nd
        & $1\mathrm{e}{-3}$ & $1\mathrm{e}{-3}$ & $1\mathrm{e}{-3}$ & $1\mathrm{e}{-3}$ & $1\mathrm{e}{-3}$
        \\\hline
        Linear VAE 
        & $3\mathrm{e}{-4}$ & $3\mathrm{e}{-4}$ & $3\mathrm{e}{-4}$ & $3\mathrm{e}{-4}$ & $3\mathrm{e}{-4}$
        \\\bottomrule
        \end{tabular}
    }
    
    \subtable[Nesterov accelerated gradient descent ($k\geq 100$)]{
    \begin{tabular}{cccccc}
        \toprule
        $k$ & 100 & 200 & 300 & 400 & 500
        \\\midrule
        Non-uniform $\ell_2$ &
        $1\mathrm{e}{-3}$ & $1\mathrm{e}{-3}$ & $1\mathrm{e}{-3}$ & $1\mathrm{e}{-3}$ & $1\mathrm{e}{-3}$
        \\\hline
        Rotation  
        & $1\mathrm{e}{-4}$ & $1\mathrm{e}{-4}$ & $1\mathrm{e}{-4}$ & $1\mathrm{e}{-4}$ & $1\mathrm{e}{-4}$
        \\\hline
        Nested dropout (nd) 
        & $1\mathrm{e}{-3}$ & $1\mathrm{e}{-3}$ & $1\mathrm{e}{-3}$ & $1\mathrm{e}{-3}$ & $1\mathrm{e}{-3}$
        \\\hline
        Deterministic nd
        & $1\mathrm{e}{-3}$ & $1\mathrm{e}{-3}$ & $1\mathrm{e}{-3}$ & $1\mathrm{e}{-3}$ & $1\mathrm{e}{-3}$
        \\\hline
        Linear VAE 
        & $3\mathrm{e}{-4}$ & $3\mathrm{e}{-4}$ & $3\mathrm{e}{-4}$ & $3\mathrm{e}{-4}$ & $3\mathrm{e}{-4}$
        \\\bottomrule
        \end{tabular}
    }
    
    \subtable[Adam optimizer ($k\leq 50$)]{
    \begin{tabular}{cccccc}
        \toprule
        $k$ & 2 & 5 & 10 & 20 & 50
        \\\midrule
        Non-uniform $\ell_2$ 
        & $3\mathrm{e}{-3}$ & $3\mathrm{e}{-3}$ & $3\mathrm{e}{-3}$ & $3\mathrm{e}{-3}$ & $3\mathrm{e}{-3}$
        \\\hline
        Rotation  
        & $3\mathrm{e}{-4}$ & $3\mathrm{e}{-4}$ & $3\mathrm{e}{-4}$ & $3\mathrm{e}{-4}$ & $3\mathrm{e}{-4}$
        \\\hline
        Nested dropout (nd) 
        & $1\mathrm{e}{-2}$ & $1\mathrm{e}{-2}$ & $1\mathrm{e}{-2}$ & $1\mathrm{e}{-2}$ & $3\mathrm{e}{-3}$
        \\\hline
        Deterministic nd
        & $3\mathrm{e}{-3}$ & $3\mathrm{e}{-3}$ & $3\mathrm{e}{-3}$ & $3\mathrm{e}{-3}$ & $3\mathrm{e}{-3}$
        \\\hline
        Linear VAE 
        & $3\mathrm{e}{-3}$ & $3\mathrm{e}{-3}$ & $3\mathrm{e}{-3}$ & $3\mathrm{e}{-3}$ & $1\mathrm{e}{-3}$
        \\\bottomrule
        \end{tabular}
    }
    
    \subtable[Adam optimizer ($k\geq 100$)]{
    \begin{tabular}{cccccc}
        \toprule
        $k$ & 100 & 200 & 300 & 400 & 500
        \\\midrule
        Non-uniform $\ell_2$ & $3\mathrm{e}{-3}$ & $3\mathrm{e}{-3}$ & $3\mathrm{e}{-3}$ & $3\mathrm{e}{-3}$ & $3\mathrm{e}{-3}$ 
        \\\hline
        Rotation  
        & $3\mathrm{e}{-4}$ & $3\mathrm{e}{-4}$ & $3\mathrm{e}{-4}$ & $3\mathrm{e}{-4}$ & $3\mathrm{e}{-4}$
        \\\hline
        Nested dropout (nd) 
        & $3\mathrm{e}{-3}$ & $3\mathrm{e}{-3}$ & $3\mathrm{e}{-3}$ & $3\mathrm{e}{-3}$ & $3\mathrm{e}{-3}$
        \\\hline
        Deterministic nd
        & $1\mathrm{e}{-2}$ & $1\mathrm{e}{-2}$ & $1\mathrm{e}{-2}$ & $1\mathrm{e}{-2}$ & $1\mathrm{e}{-2}$
        \\\hline
        Linear VAE 
        & $1\mathrm{e}{-3}$ & $1\mathrm{e}{-3}$ & $1\mathrm{e}{-3}$ & $1\mathrm{e}{-3}$ & $1\mathrm{e}{-3}$
        \\\bottomrule
        \end{tabular}
    }
\end{table}
\section{Additional experiments}

\subsection{Non-uniform $\ell_2$ regularization with optimal penalty weights (at global minima)}
\label{app:optimal_lambda}
In this section, we show the experimental results of the learning dynamics of the non-uniform $\ell_2$ regularization on MNIST, with ``optimally'' chosen $\ell_2$ penalty weights. Specifically, we set the latent dimension $k = 20$, and obtain the $\lambda_1, \dots, \lambda_k$ values by solving the $\min \max$ optimization problem in~\eqref{eq:nonuni_hessian_minmax}. These choices of the $\ell_2$ penalty weights are optimal at global minima, because the condition number of the Hessian of the objective at global minima is minimized.

\begin{figure}[ht]
    \centering
    \includegraphics[width=0.5\textwidth]{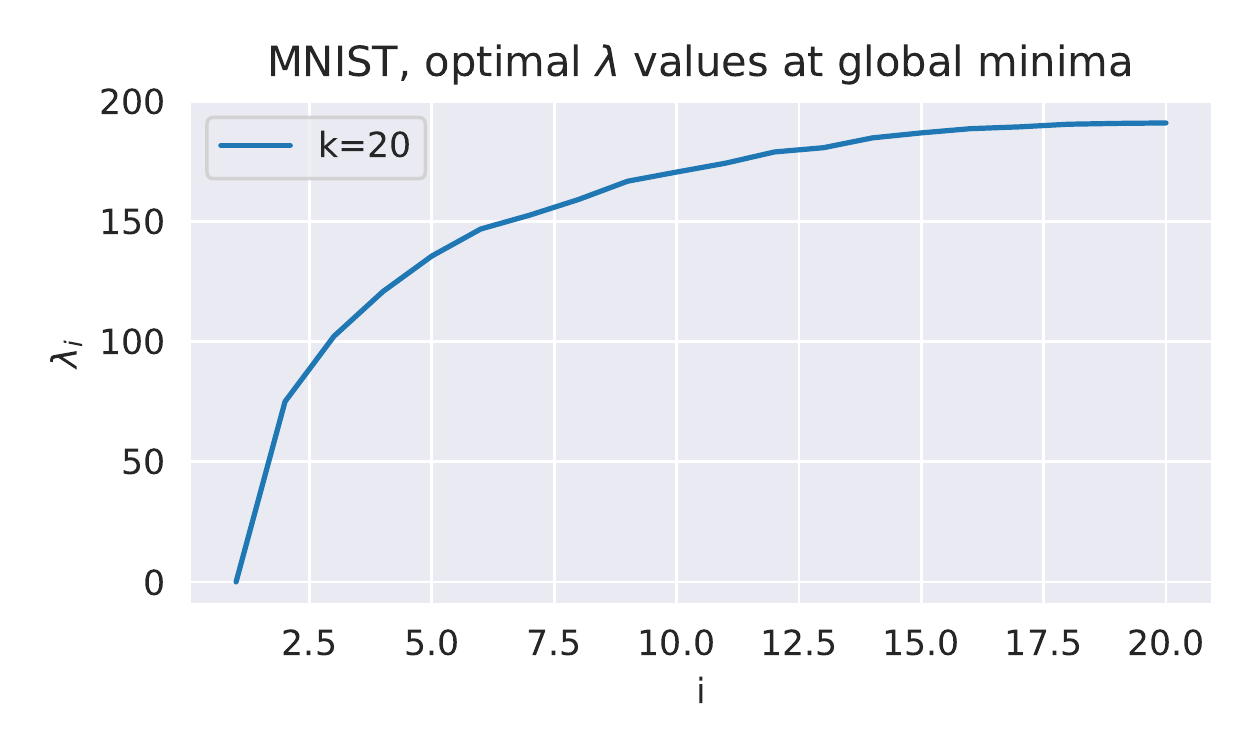}
    \caption{Optimal $\ell_2$ penalty weights on MNIST, with $k=20$.}
    \label{fig:optimal_lambda_mnist}
\end{figure}

In practice, the $\ell_2$ penalty weights in Figure~\ref{fig:optimal_lambda_mnist} are not accessible without knowing the $\sigma$ values of the dataset. However, we show in Figure~\ref{fig:optimal_lambda_convergence} that even with this knowledge, using the $\lambda$ values optimal at global optima significantly slows down the initial phase of training. This means that these $\lambda$ values are suboptimal away from global optima. In general, it is difficult to determine the $\lambda$ values that are optimal for the overall training process. This contributes to the weakness of symmetry breaking by the non-uniform $\ell_2$ regularization.

\begin{figure}[t]
    \subfigure[Axis-alignment]{\label{fig:optimal_lambda_axis_alignment}\includegraphics[width=0.5\textwidth]{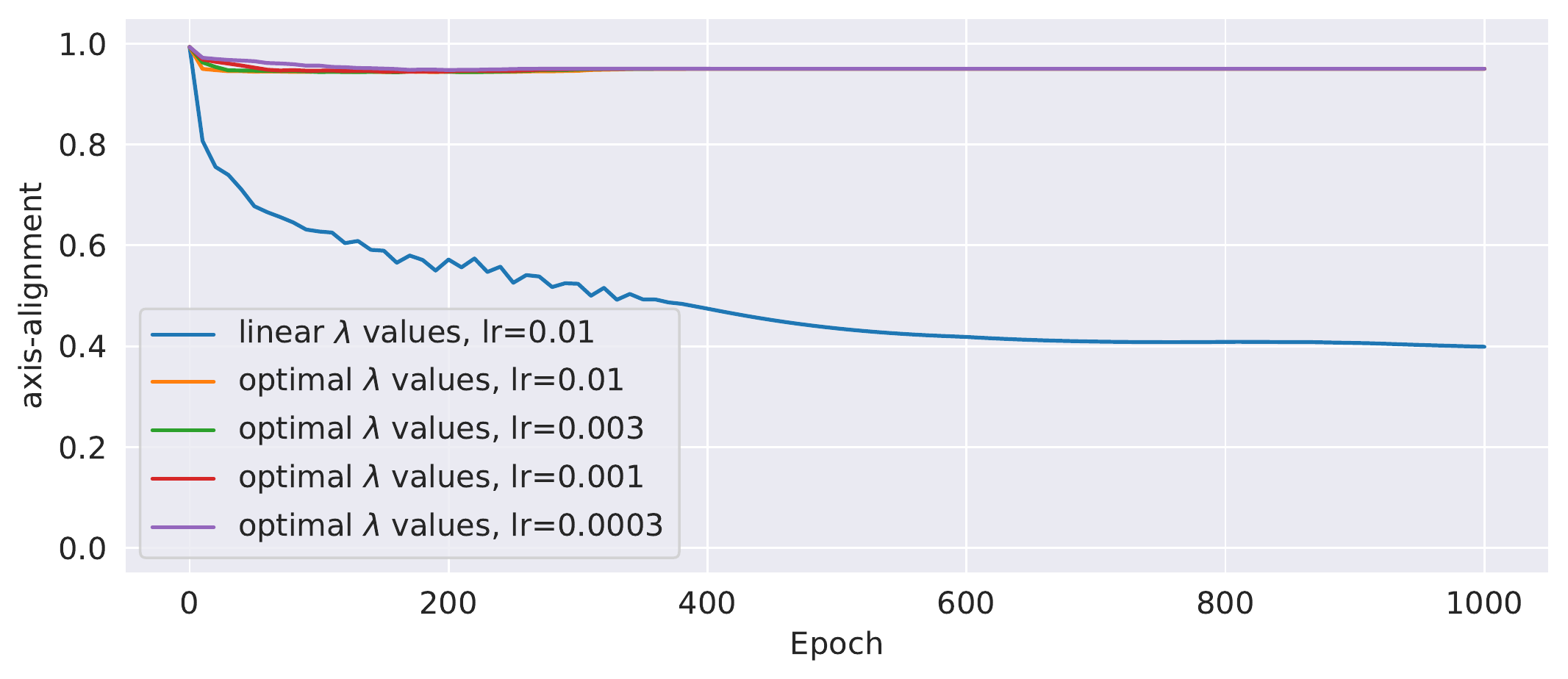}}
    \subfigure[Subspace convergence]{\label{fig:optimal_lambda_subspace_distance}\includegraphics[width=0.5\textwidth]{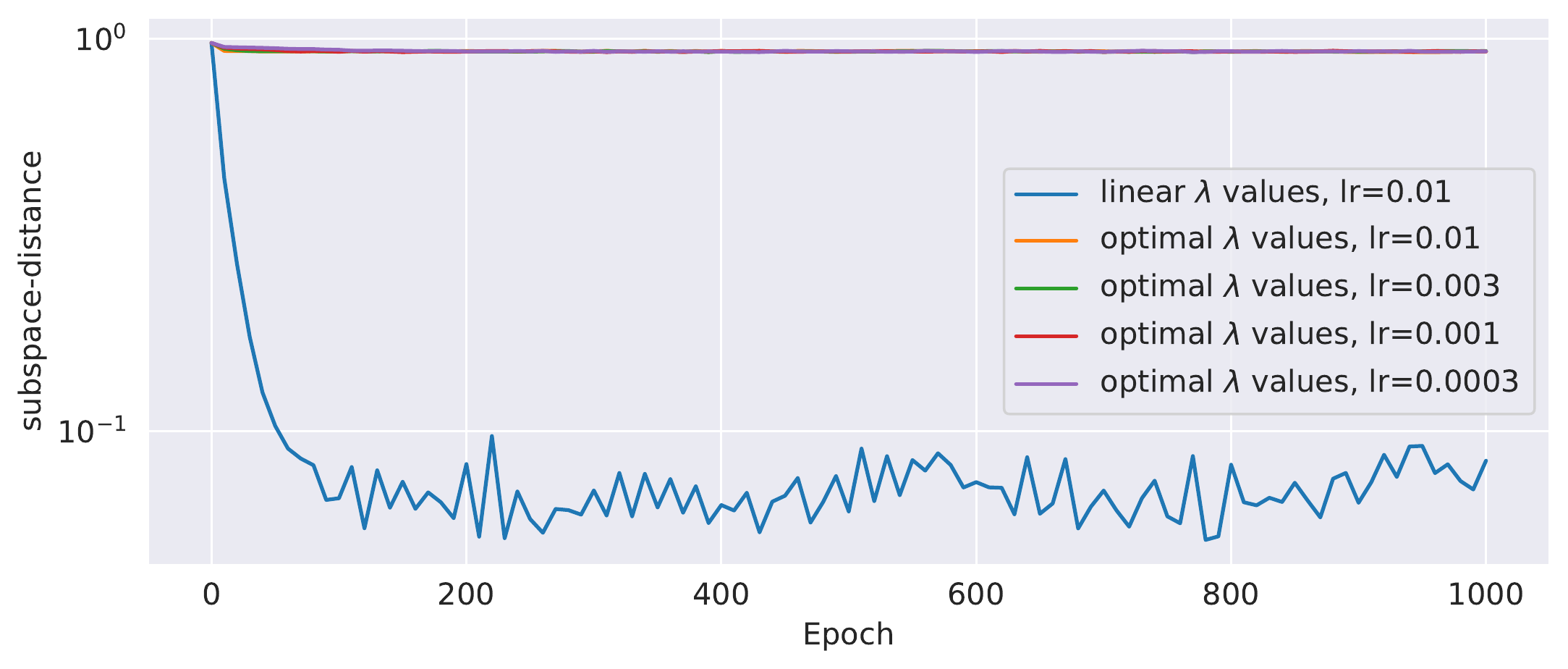}}
    \caption{Learning dynamics of non-uniform $\ell_2$ regularized LAEs on the MNIST ($ k=20 $), with different choices of penalty weight values. All models are trained with Adam optimizer for 1000 epochs. The optimal $\lambda$ values are as in Figure~\ref{fig:optimal_lambda_mnist}. Results with different learning rates are shown, provided that the learning rates are small enough to maintain training stability.}
    \label{fig:optimal_lambda_convergence}
\end{figure}

\subsection{Mini-batch training on MNIST}
\label{app:minibatch_exp}

In this section, we show the learning dynamics of the models in Section~\ref{sec:experiments} trained on MNIST using mini-batches. The uniform $\ell_2$ regularized LAE is not included, as it doesn't recover the axis-aligned solutions. Figure~\ref{fig:minibatch_1000} and~\ref{fig:minibatch_100} show the learning dynamics with $k=20$ and mini-batch size 1000 and 100, respectively. We observe similar results as in the full-batch setting (Figure~\ref{fig:convergence}), with additional stochasticity introduced by mini-batch training.

\begin{figure}[ht]
    \centering
    \subfigure[Axis-alignment]{\label{fig:minibatch_1000_axis_alignment}\includegraphics[width=0.49\textwidth]{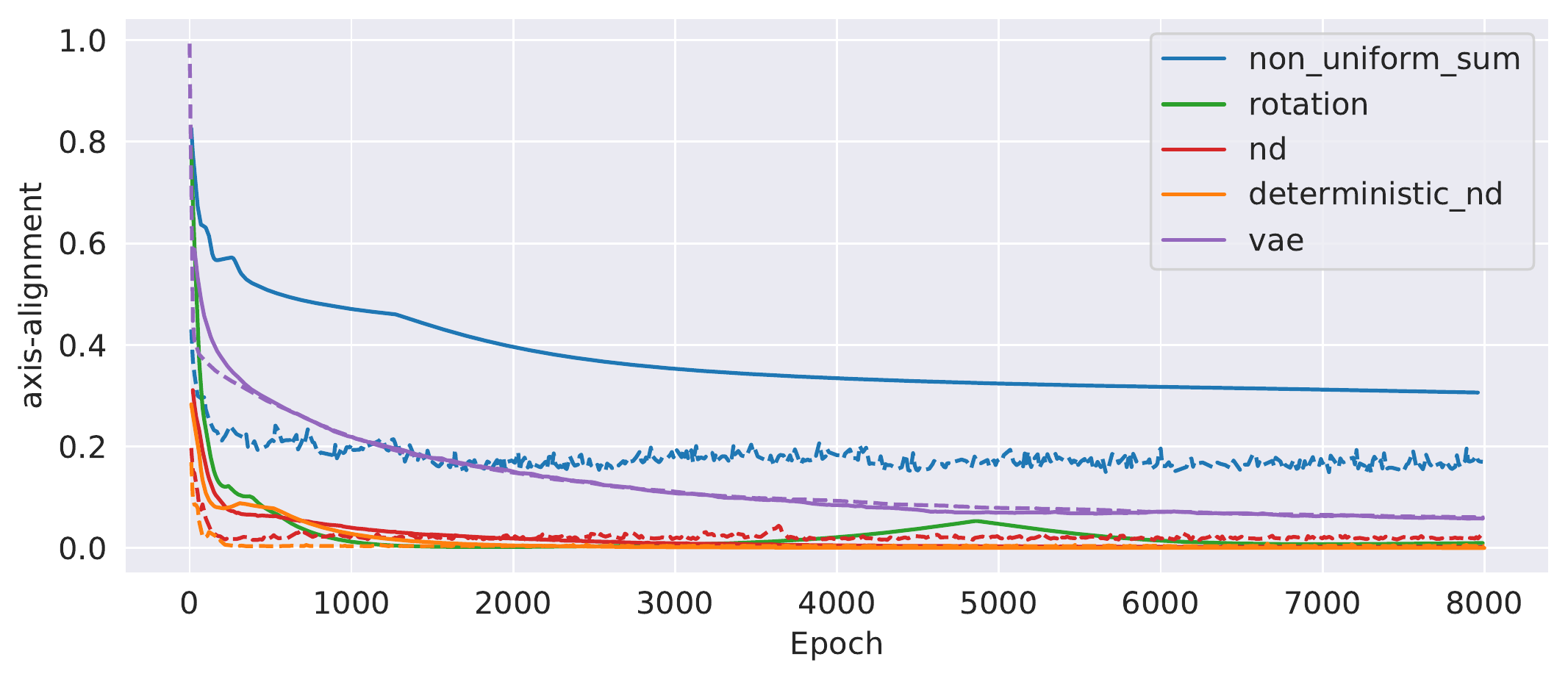}}
    \subfigure[Subspace convergence]{\label{fig:minibatch_1000_subspace_distance}\includegraphics[width=0.49\textwidth]{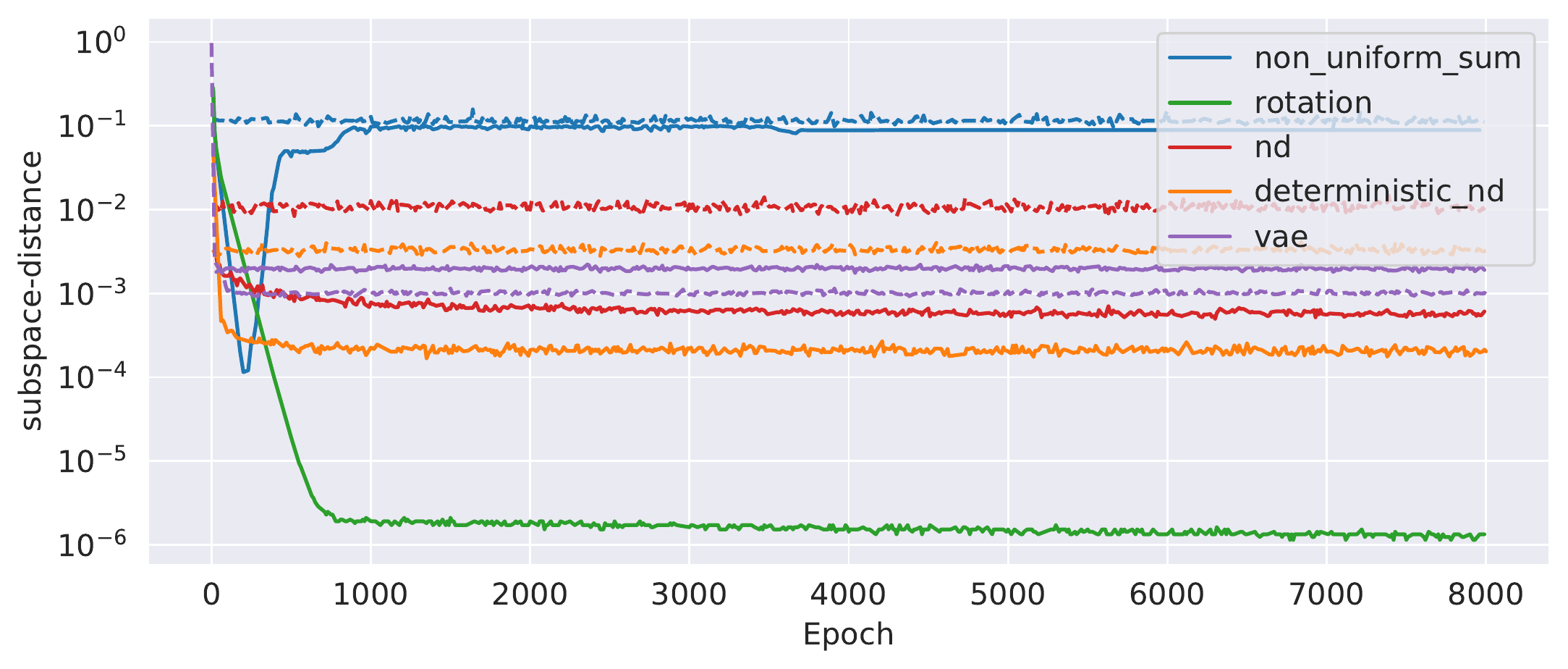}}
    \caption{Learning dynamics of different LAE / linear VAE models trained on MNIST ($k=20$), with mini-batch size 1000. Solid lines represent models trained using gradient descent with Nesterov momentum 0.9. Dashed lines represent models trained with Adam optimizer.}
    \label{fig:minibatch_1000}
\end{figure}

\begin{figure}[ht]
    \centering
    \subfigure[Axis-alignment]{\label{fig:minibatch_100_axis_alignment}\includegraphics[width=0.49\textwidth]{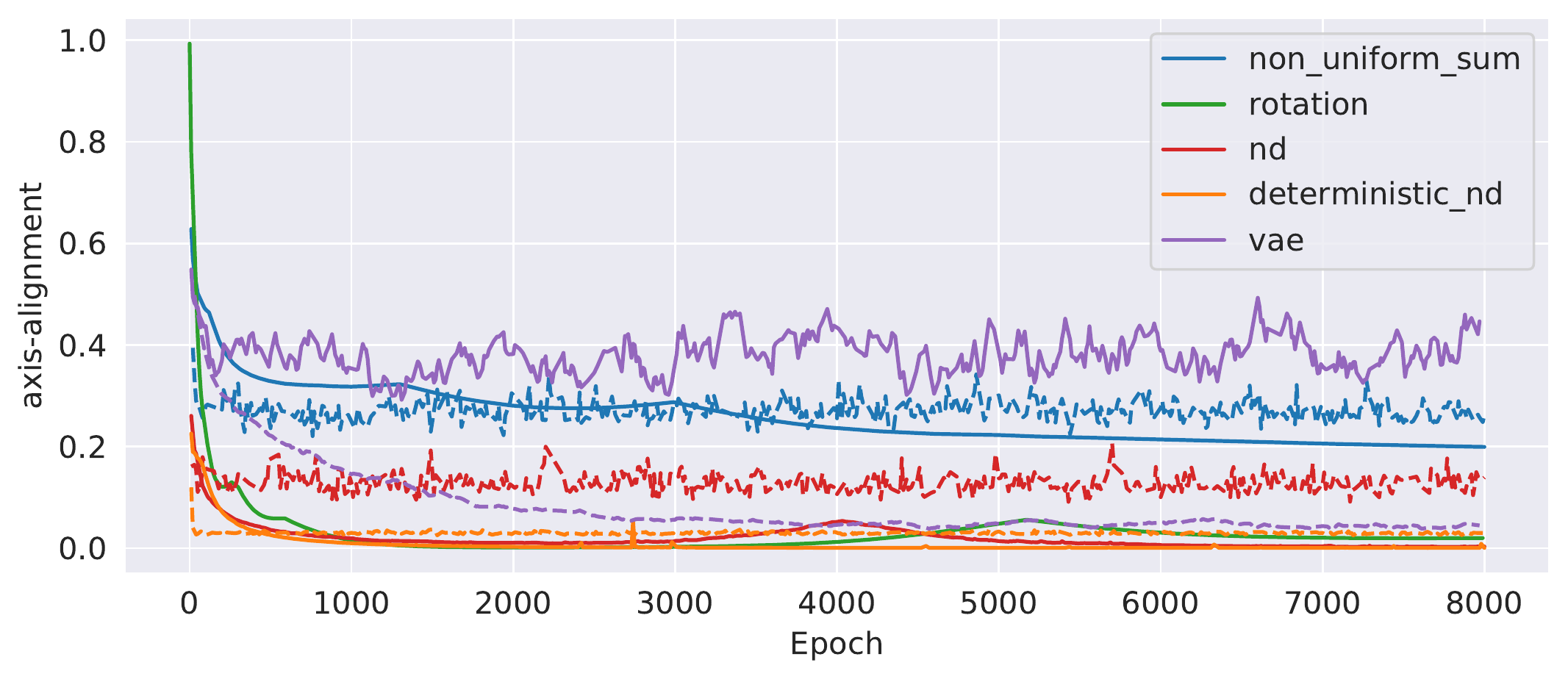}}
    \subfigure[Subspace convergence]{\label{fig:minibatch_100_subspace_distance}\includegraphics[width=0.49\textwidth]{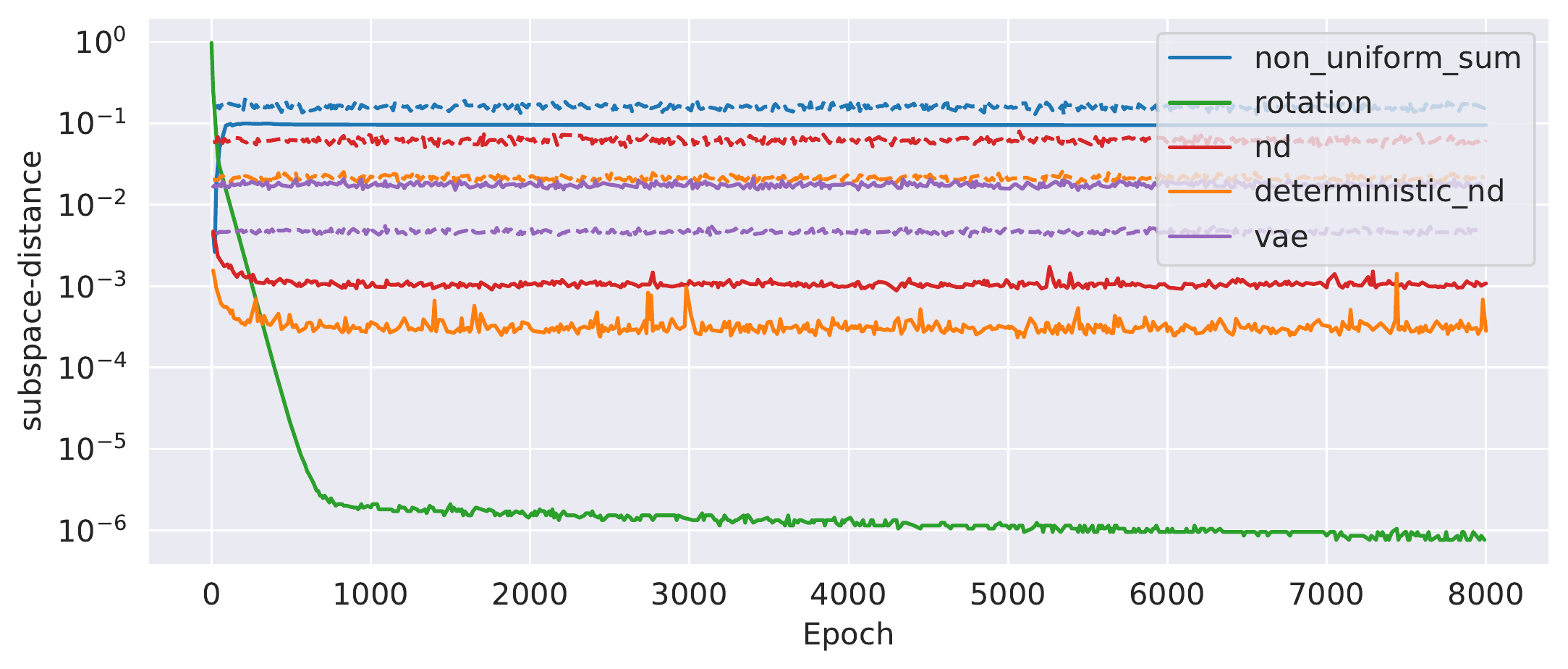}}
    \caption{Learning dynamics of different LAE / linear VAE models trained on MNIST ($k=20$), with mini-batch size 100. Solid lines represent models trained using gradient descent with Nesterov momentum 0.9. Dashed lines represent models trained with Adam optimizer.}
    \label{fig:minibatch_100}
\end{figure}
\end{document}